\documentclass[twoside, 11pt]{article}
\usepackage[utf8]{inputenc}
\usepackage[english]{babel}
\usepackage{jmlr2e}
\usepackage{hyperref}
\usepackage{booktabs}
\usepackage{amssymb, amsmath}
\usepackage{enumitem}
\usepackage{ccaption}
\usepackage[normalem]{ulem}
\usepackage{cleveref}
\usepackage{subcaption}
\usepackage{stmaryrd}
\usepackage{tikz}
\usepackage{tikz-cd} 
\usetikzlibrary{arrows}
\usepackage{float}
\usepackage{xparse} 
\usepackage{pdflscape} 
\usepackage{afterpage} 
\usepackage{capt-of} 
\setcounter{tocdepth}{2}

\hypersetup{hidelinks}

\numberwithin{equation}{section}

\setlist[enumerate]{label=\normalfont(\roman*)}
\setlist{noitemsep}

\newcommand{\todo}[1]{\textcolor{blue}{TODO: #1}}

\crefname{equation}{equation}{equations}
\crefname{figure}{figure}{figures}
\crefname{lem}{lemma}{lemmas}
\crefname{ex}{example}{examples}


\tikzset{
  symbol/.style={
    draw=none,
    every to/.append style={
      edge node={node [sloped, allow upside down, auto=false]{$#1$}}}
  }
}





\def\N{{\mathbb N}}    
\def\Z{{\mathbb Z}}    
\def\R{{\mathbb R}}    
\def\C{{\mathbb C}}    

\def\V{\mathbb{V}}

\def\X{\mathbb{X}}


\def\d{\,\mathrm{d}}
\def\st{:} 
\newcommand{\eps}{\varepsilon} 
\def\1{\mathbf{1}}
\renewcommand{\phi}{\varphi}
\def\supp{\operatorname{supp}}
\newcommand{\card}[1]{|#1|}
\renewcommand{\emptyset}{\varnothing}
\renewcommand{\bar}[1]{\overline{#1}}
\newcommand{\longtoinf}[1][n\to\infty]{\underset{#1}{\longrightarrow}}
\newcommand{\closure}[1]{\overline{#1}}
\def\id{\operatorname{Id}}
\def\Ima{\operatorname{Im}}
\def\Ker{\operatorname{Ker}}

\newcommand{\Lp}[2][1]{L^{#1}(#2)}
\newcommand{\dual}[1]{{#1}^*}
\newcommand{\dualdot}[2]{#1\cdot #2}

\newcommand{\bestscore}[1]{\textcolor{black}{\textbf{#1}}}

\def\orbit{\texttt{ORBIT5K}}
\def\gudhi{\texttt{Gudhi}}
\def\mma{\texttt{MMA}}
\def\cpp{\texttt{C++}}
\def\python{\texttt{Python}}

\def\Cech{\v{C}ech}
\def\cplx{\mathcal{K}}
\def\filt{\mathcal{F}}
\def\birth{\operatorname{birth}}

\def\chains{\mathcal{C}}


\def\barcode{\mathcal{B}}
\def\diagram{\mathcal{D}}
\def\matching{M}

\newcommand{\FP}[1][\R^m]{\mathrm{FP}(#1)}
\def\sbarcode{\bar{\mathcal{B}}}
\def\cost{\mathrm{cost}}
\NewDocumentCommand{\dist}{d[]}{
	\IfNoValueTF{#1}{\def\argument{}}{\def\argument{\!\left(#1\right)}}%
    \widehat{d}_1\argument{}%
}
\def\CechCplx{\check{\mathcal{C}}}

\def\Hil{\mathrm{Hil}}
\newcommand{\betti}[1][\filt]{\beta_{#1}}
\newcommand{\ECP}[1][\filt]{\chi_{#1}}
\newcommand{\ECPn}[1][\filt]{\chi_{{#1}_n}}

\def\kernel{\kappa}				
\NewDocumentCommand{\Kernel}{d[]}{
	\IfNoValueTF{#1}{\def\argument{}}{\def\argument{\left(#1\right)}}%
    \bar{\kappa}\argument{}%
}
\NewDocumentCommand{\HT}{O{\filt} O{\kernel}}{\psi_{#1}^{#2}}
\NewDocumentCommand{\HTn}{O{\filt} O{\kernel}}{\psi_{{#1}_n}^{#2}}
\NewDocumentCommand{\HTL}{O{\filt} O{\kernel}}{\psi_{{#1}_L}^{#2}}
\NewDocumentCommand{\HTtrunc}{O{\filt} O{\kernel}}{\psi_{#1}^{{#2}, T}}
\NewDocumentCommand{\HTtruncn}{O{\filt} O{\kernel}}{\psi_{{#1}_n}^{{#2}, T}}
\def\dt{\d t}
\def\ds{\d s}


\NewDocumentCommand{\CF}{O{\R^n} O{}}{%
\operatorname{CF}_{#2}(#1)%
}

\def\cov{\operatorname{cov}}


\usepackage{lastpage}
\jmlrheading{25}{2024}{1-\pageref{LastPage}}{3/23; Revised
7/24}{7/24}{23-0353}{Olympio Hacquard and Vadim Lebovici}
\ShortHeadings{Euler Characteristic Tools for Topological Data Analysis}{Hacquard and Lebovici}

\begin{document}

\title{Euler Characteristic Tools for Topological Data Analysis}

\author{Olympio Hacquard, Vadim Lebovici}
\author{\name Olympio Hacquard \email olympio.hacquard@universite-paris-saclay.fr \\
       \addr Laboratoire de Math\'ematiques d'Orsay\\
       Université Paris-Saclay, CNRS, Inria\\
       91400 Orsay France
      \AND 
      \name Vadim Lebovici \email \email{vadim.lebovici@maths.ox.ac.uk} \\
       \addr Mathematical Institute\\
        University of Oxford,\\
       Oxford, United Kingdom
}

%

\editor{Sayan Mukherjee}

\maketitle 

\vspace{-1cm}

\begin{abstract}%
In this article, we study Euler characteristic techniques in topological data analysis. Pointwise computing the Euler characteristic of a family of simplicial complexes built from data gives rise to the so-called Euler characteristic profile. We show that this simple descriptor achieves state-of-the-art performance in supervised tasks at a meager computational cost. Inspired by signal analysis, we compute hybrid transforms of Euler characteristic profiles. These integral transforms mix Euler characteristic techniques with Lebesgue integration to provide highly efficient compressors of topological signals. As a consequence, they show remarkable performances in unsupervised settings. On the qualitative side, we provide numerous heuristics on the topological and geometric information captured by Euler profiles and their hybrid transforms. Finally, we prove stability results for these descriptors as well as asymptotic guarantees in random settings.
\end{abstract}

\begin{keywords}
Topological Data Analysis, Machine Learning, Multiparameter Persistence, Euler characteristic profiles, Hybrid transforms
\end{keywords}


\section{Introduction}
Extracting topological information from data of various natures follows a machinery that finds its origins in the works of \citet{edelsbrunner2000topological}. The main idea consists of building a one-parameter family of topological spaces on top of data and tracking the evolution of its topology, typically via homological computations. This multi-scale topological information is recorded in the form of what is called a \emph{persistence diagram}; see~\citet{edelsbrunner2000topological, edelsbrunner2022computational}. The space of persistence diagrams is a metric space for the so-called \emph{bottleneck distance}, \citep{cohen2007stability}, but it cannot be isometrically embedded into a Hilbert space \citep{carriere2018,bubenik2020embeddings}. At the cost of losing some information, these diagrams are still often turned into vectors to perform various learning tasks such as classification, clustering, or regression. Most commonly used techniques include persistence images \citep{adams2017persistence}, landscapes \citep{bubenik2015statistical}, and more recently measure-oriented vectorizations in \citet{royer2021atol} and neural network methods from \citet{carriere2020perslay, reinauer2021persformer}. An overview of topological methods in machine learning has been presented in the survey of \citet{hensel2021survey}. These methods have demonstrated their efficiency in a wide variety of applications and types of data, such as health applications \citep{rieck2020uncovering, fernandez2022topological, aukerman2021persistent}, biology \citep{ichinomiya2020protein, rabadan2019topological} or material sciences \citep{lee2017quantifying, hiraoka2016hierarchical}.

In many practical scenarios, it is natural to look at data with more than one parameter, i.e., to consider multi-parameter families of topological spaces instead of one-parameter ones. It allows one to cope with outliers by filtering the space with respect to an estimated local density, or to deal with intrinsically multi-parameter data, such as blood cells with several biomarkers. However, there does not exist a complete combinatorial descriptor similar to the persistence diagram that could make them usable in practice \citep{Carlsson2009}. One of the main objectives of this field is to build informative descriptors of such families. Although not intrinsically multi-parameter, persistence landscapes have successfully been generalized to the multi-parameter setting in \citet{vipondlandscapes} and persistence images to the two-parameter setting in~\citet{carriere2020multiparameter}. Besides their high level of sophistication, the main limitation of these tools is their computational cost; see \citet[Table~2]{carriere2020multiparameter} and \Cref{sec:timing}. 

In contrast, some topological methods do not compute homological information---thus bypassing the computation of persistence diagrams---but rather compute the Euler characteristic of the topological spaces at hand. The Euler characteristic of a simplicial complex is a celebrated topological invariant that is simply the alternated sum of the number of simplices of each dimension. Considering the pointwise Euler characteristic of a one-parameter family of simplicial complexes gives rise to a functional multi-scale descriptor called the \textit{Euler characteristic curve}. 

Though Euler characteristic-based descriptors may appear coarse, we highlight four main reasons to use them. First, they have demonstrated a good predictive power in various settings \citep{worsley1992three,richardson2014efficient, smith2021euler, jiang2020weighted, amezquita2022measuring}. Second, the simplicity of these descriptors translates into a reduced computational cost. They can be computed in linear time in the number of simplices in a simplicial filtration instead of typically matrix multiplication time for persistence diagrams~\citep{milosavljevic2011zigzag}. Moreover, the locality of the Euler characteristic can be exploited to design highly efficient algorithms computing Euler curves, as in~\citet{heiss2017streaming}. Third, there are several known theoretical results on the Euler characteristic of a random complex. Mean formulae for the Euler characteristic of superlevel sets of random fields are proven in \citet{AT10}, and results for the Euler characteristic of a complex built on a Poisson process are established in Corollary 4.2 of \citet{bobrowski2014distance} and Corollary 6.2 of \citet{bobrowski2017vanishing}. Furthermore, Euler curves associated with random point clouds are proven to be asymptotically normal for a well-chosen sampling regime in \citet{krebs2021approximation}, where the authors also apply this construction to bootstrap.
Fourth, they naturally generalise to the multi-parameter setting, becoming so-called \emph{Euler characteristic surfaces} \citep{beltramo2022euler} and \emph{profiles} \citep{DG22}. 

We demonstrate that these tools reach state-of-the-art performance at a minimal computational cost when coupled with a powerful classifier such as a gradient boosting or a random forest. However, due to their simplicity, these descriptors do not manage to linearly separate the different classes or be competitive on unsupervised tasks. Inspired by signal analysis, we cope with these limitations by studying integral transforms of Euler characteristic curves and profiles. More precisely, we consider a general notion of integral transforms mixing Lebesgue integration and Euler characteristic techniques recently introduced in \citet{L22} under the name of \emph{hybrid transforms}. In the one-parameter case, hybrid transforms are classical integral transforms of Euler curves. Similarly, hybrid transforms depend on a choice of kernel which offers a wide variety of possible signal decompositions. Yet, hybrid transforms differ from classical integral transforms in general. In so doing, they enjoy many specific appealing properties, such as compatibility with topological operations from Euler calculus \cite[Section~5]{L22}. Most importantly, in the context of multi-parameter sublevel-sets persistence, hybrid transforms can be expressed as one-parameter hybrid transforms of Euler curves associated with a linear combination of the filtration functions. Consequently, mean formulae for hybrid transforms associated with Gaussian random fields are derived in \cite[Section~8]{L22}. Studying the asymptotic behaviour of topological descriptors of random complexes is a deeply-studied question in the one-parameter setting; see \citet{bobrowski2018topology} for a survey. Together with the works of \citet{botnan2022consistency}, our results form the first occurrence of limiting theorems in a multi-persistence framework in the literature.  

\paragraph{Contributions and outline.} In this article, we show that Euler characteristic profiles and their hybrid transforms are informative and highly efficient topological descriptors. Throughout the paper, we use classical methods based on persistence diagrams as a baseline for our descriptors. After introducing the necessary notions in \Cref{sec:definitions}, we provide heuristics on how to choose the kernel of hybrid transforms and give many examples of the type of topological and geometric behaviour Euler curves and their integral transforms can capture from data in \Cref{sec:method}. Most importantly, our main contributions are the following:
\begin{itemize}
    \item We demonstrate that Euler profiles achieve state-of-the-art accuracy in supervised classification and regression tasks when coupled with a random forest or a gradient boosting (\Cref{sec:curv-reg,sec:orbit,sec:graph}) at a very low computational cost (\Cref{sec:timing}). Note that the multi-parameter nature of our tools and their computational simplicity allows us to use up to $5$-parameter filtrations to classify graph data. They typically outperform persistence diagrams-based vectorizations, both in terms of accuracy and computational time.
    \item We demonstrate that hybrid transforms act as highly efficient information compressors and typically require a much smaller resolution than Euler profiles to reach a similar performance. They can also outperform Euler profiles in unsupervised classification tasks and in supervised tasks when plugging a linear classifier (\Cref{fig:hidden_illus} and \Cref{sec:curv-reg,sec:orbit,sec:sidney}). In \Cref{sec:sidney}, we illustrate their ability to capture fine-grained information on a real-world data set.
    \item We provide several theoretical guarantees for these descriptors. First, we prove stability properties that clarify the robustness of our tools with respect to perturbations (\Cref{sec:stability}). Expressed in terms of $L_1$ norms, these are also hints of the sensitivity of our tools to the underlying geometry of the data at hand. Similarly to persistence diagrams, we can establish the pointwise convergence of hybrid transforms associated with random samples and their asymptotic normality for a specific filtration function. We also establish a law of large numbers in a multi-filtration set-up (\Cref{sec:stat}). 
\end{itemize}
Finally, \Cref{sec:proofs} is devoted to the proofs of the results stated in \Cref{sec:stability,sec:stat}.

\section{Topological descriptors}\label{sec:definitions}
This section presents all the necessary notions from simplicial geometry and the construction of the topological descriptors used throughout the article. Let us first introduce some conventions.
\begin{enumerate}
    \item The dual of a vector space $\V$ is denoted by $\dual{\V}$, and $\R^m$ will always be identified with its dual under the canonical isomorphism. For $\xi\in\dual{\R^m}$ and $t\in\R^m$, we often denote $\dualdot{\xi}{t} = \xi(t)$.
    \item We denote by~$\dual{\R_+^m}$ the cone of linear forms on $\R^m$ that are non-decreasing with respect to the coordinatewise order on~$\R^m$, or equivalently that have non-negative canonical coordinates. 
    \item Let~$I$ be an interval of~$\R$ and denote by~$\Lp{I}$ the space of absolutely integrable complex-valued functions on~$I$.
    \item Let $p\in[1,\infty]$ and let $f:\R^m\to\C$ be locally $p$-integrable. We denote by $\|f\|_{p,M}$ the $p$-norm of $f\cdot \1_{[-M,M]^m}$. If $f$ is $p$-integrable, we denote its $p$-norm by~$\|f\|_p$.
    \item We always consider the coordinate-wise order on $\R^m$.
\end{enumerate}

\subsection{Simplicial complexes, filtrations}\label{sec:cplx-filt}
A \emph{(finite) abstract simplicial complex} $\cplx$, or simply \emph{simplicial complex}, is a finite collection of finite sets that is closed under taking subsets. An element $\sigma\in\cplx$ is called a \emph{simplex}, and subsets of $\sigma$ are called \emph{faces} of $\sigma$. The inclusion between simplices induces a partial order on $\cplx$ that we denote simply by $\leq$. The \emph{dimension} of a simplex with $k$ elements is equal to $k-1$. The \emph{Euler characteristic} of a simplicial complex~$\cplx$~is the integer:
\begin{equation*}
    \chi(\cplx) = \sum_{\sigma\in\cplx}(-1)^{\dim\sigma}.
\end{equation*}
Until the end of this section, we let $\cplx$ be a finite simplicial complex. An \emph{$m$-parameter filtration} of $\cplx$ is a family $\filt = \left(\filt_t\right)_{t\in\R^m}$ of subcomplexes $\filt_t\subseteq \cplx$ that is increasing with respect to inclusions, i.e., such that $\filt_t\subseteq\filt_{t'}$ for any $t, t'\in\R^m$ with $t\leq t'$. From now on, we do not refer explicitly to $\cplx$ when it is clear from the context. Many filtrations can be introduced by considering sublevel sets of functions:

\begin{example}
\label{ex:sublvl}
    Let $f:\cplx\to\R^m$ be a non-decreasing map for the inclusion of simplices, i.e., such that $f(\sigma) \leq f(\tau)$ for any $\sigma \leq \tau \in \cplx$. The map $f$ induces an $m$-parameter filtration of $\cplx$ called \emph{sublevel-sets filtration}, denoted by $\filt_f$, and formed by the subcomplexes $(\filt_f)_t = \{f \leq t\} := \{\sigma \in\cplx \st f(\sigma) \leq t\}$ for any $t\in\R^m$. We sometimes refer to the function $f$ as the \emph{filter} of~$\filt_f$. 
\end{example}

A popular example of simplicial complex is the \Cech{} complex of a \emph{point cloud}, that is, a finite subset of $\mathbb{R}^d$. This complex captures a lot of information on the geometry of the point cloud.

\begin{example}\label{ex:Cech}
    Let $\X\subseteq \R^d$ be finite. The \emph{\Cech{} complex at scale~$t\geq 0$} is the simplicial complex $\CechCplx(\X, t)$ defined such that for $(x_0, \ldots, x_{k}) \in \X^{k+1}$, the simplex~$\{x_0, \ldots, x_k\}$ is in $\CechCplx(\X, t)$ if the intersection of closed balls $\cap_{l=0}^{k} \closure{B}(x_l, t)$ is non-empty.  The \emph{\Cech{} filtration}, is defined at each $t\in\R$ as the \Cech{} complex at scale~$t$ for~$t\geq 0$, and as the empty set for $t<0$. For computational reasons, we rather use a homotopy equivalent complex in numerical experiments, called the \emph{alpha filtration}, which is a subcomplex of the Delaunay triangulation; see~\citet{bauer2017morse}. See \Cref{fig:Cech-filtration} for an illustration.
\end{example}

The properties of the \Cech{} complex of a random point cloud have been deeply studied theoretically. We refer to \citet{bobrowski2018topology} and \citet{owada2022convergence} for the most recent results. When doing multi-parameter persistence, a common technique is to couple the \Cech{} complex with some function on the data. Typically, we cope with outliers by coupling a \Cech{} filtration with a density estimator built from the data at hand. This falls under the framework of function-\Cech{} filtrations:

\begin{example}\label{ex:function-Cech}
    Let $\X\subseteq \R^d$ be finite and $f=(f_1,\ldots,f_m):\X\to\R^m$ be a bounded function. The \emph{function-\Cech{} filtration} is the $(m+1)$-parameter filtration~$\CechCplx(\X,f)$ of $2^\X$ defined for $r\in\R$ and $t=(t_1, \ldots,t_m)\in\R^{m}$ by:
    \begin{equation*}
        \CechCplx(\X,f)_{(r,t)} = \left\{ \sigma \in \CechCplx(\X,r) \st \max_{x\in\sigma} f_i(x) \leq t_i, \, \forall 1\leq i \leq m\right\}.
    \end{equation*}
    Again, we rather use \emph{function-alpha filtration} in numerical experiments, which are defined similarly using alpha complexes.
\end{example}

Let~$\filt$ be an $m$-parameter filtration and $\sigma \in \cplx$. The \emph{support of $\sigma$} is the set $\supp(\sigma):= \{t\in\R^m \st \sigma \in\filt_t\}$. A filtration is called \emph{finitely generated} if the support of any simplex appearing in the filtration is either empty or has a finite number of minimal elements; see \Cref{fig:finitely-gen-filt} for an illustration. Moreover, if the support of any simplex has at most one minimal element, then the filtration is called \emph{one-critical}. In that case, one denotes by $t(\sigma)$ the minimal element of $\supp(\sigma)$. For instance, function-Cech and function-alpha filtrations are one-critical. On the contrary, the degree-Rips bifiltration is not \citep{lesnick2016interactive}. Note that sublevel-sets filtrations are one-critical. Conversely, any one-critical filtration is a sublevel set filtration for the function $f : \sigma\in\cplx\mapsto t(\sigma)$.

\subsection{Persistence diagrams}\label{sec:pers-hom}
Given a filtration of a simplicial complex, we want to extract multi-scale topological information from data. This is the objective of \textit{persistent homology}, which constitutes the main tool of topological data analysis. It has found many practical applications \citep{rieck2020uncovering, fernandez2022topological, aukerman2021persistent, ichinomiya2020protein, rabadan2019topological, lee2017quantifying, hiraoka2016hierarchical} as well as applications to other fields of theoretical mathematics, such as symplectic geometry \citep{polterovich2020topological}. This section introduces the basic objects of persistent homology as introduced in classical textbooks; see \citet{edelsbrunner2022computational,oudot2017persistence}. We try to keep the notions as intuitive as possible and do not lay out the technical details of homology theory. 

The central tool of persistence theory is \emph{homology}. Intuitively, given a topological space $X$, the $k$-th homology of $X$ (with coefficient in a field) is a vector space whose dimension is equal to the number of independent $k$-dimensional holes of $X$. By $0$-dimensional (resp. $1$-dimensional, $2$-dimensional) holes, we mean connected components (resp. cycles, voids). These $k$-dimensional holes are often called \emph{homological features} of~$X$. One can also define the homology of a simplicial complex $\cplx$, denoted by $H_k(\cplx)$ for each integer~$k\geq 0$, in such a way that they coincide with the above intuition when looking at the geometric realisation of the simplicial complex $\cplx$. 

Given a one-parameter filtration $\filt$ of a simplicial complex~$\cplx$, one of the main properties of homology implies that for any $t\leq t'$ in $\R$, the inclusion of complexes $\filt_t \subseteq \filt_{t'}$ induces a linear map $H_k(\filt_t) \to H_k(\filt_{t'})$. The idea of \emph{persistent homology} is to keep track of homological features appearing in the filtration through these maps. Each generator appears at some $a\in\R$ called its \emph{birth} and disappears at some $b>a$ called its \emph{death}. The couple $[a,b)$ is called the \emph{bar} of the corresponding homological feature. 
The multiset of bars~$[a,b)$ for each homological feature appearing in the filtration is called the \emph{degree $k$ persistence barcode} of~$\filt$. One can also represent this barcode as a multiset of points $(a,b)\in\R^2$ called the \emph{degree $k$ persistence diagram} of $\filt$. We give an example of the construction of the persistence diagram of the \Cech{} filtration in \Cref{fig:PD_cech}. In this case, the persistence diagram gives a lot of information on the topology and the geometry of the underlying point cloud. Here, when the radius of the balls is smaller than the smallest distance between any two points, we have as many connected components as points. As the radii of the balls grow, connected components of the union of balls merge (or die) one by one, except for one that never dies. Therefore, the degree 0 persistence diagram has only points born at 0. As for the degree 1 persistence diagram, a cycle appears when the radius of the balls is large enough and is filled approximately at the radius of the underlying circle, hence a single point in the persistence diagram.

\begin{figure}
	\centering
	\begin{subfigure}[t]{0.22\linewidth}
        \centering
		\scalebox{.2}{
		\definecolor{wwccff}{rgb}{0.4,0.8,1}
		\begin{tikzpicture}[line cap=round,line join=round,>=triangle 45,x=1cm,y=1cm]
		\node (x1) at (1.87236,2.74073) {};
		\node (x2) at (0.4853659626218362,0.37468619425046223) {};
		\node (x3) at (0.41216450288947193,-1.896847399280642) {};
		\node (x4) at (4.972687388202764,4.018935109570714) {};
		\node (x5) at (5.897347560746713,3.257450261593348) {};
		\node (x6) at (8.154606217251057,1.462521691360985) {};
		\node (x7) at (7.0939666075682934,2.3327900890494035) {};
		\node (x8) at (0.86611,1.92485) {};
		\node (x9) at (8.208997992106584,0.8098203930946715) {};
		\node (x10) at (0.8201028143059201,-0.6458365776035415) {};
		\node (x11) at (1.5,3.08) {};
		\node (x12) at (1.1564795986048635,-2.4536860982368975) {};
		\node (x13) at (1.953943883721049,-3.051995621647685) {};
		\node (x14) at (2.5334437222550013,-3.4742088700909) {};
		\node (x15) at (4.7091147164760585,-3.637384194657479) {};
		\node (x16) at (6.359677647018687,-2.7800367473700542) {};
		\node (x17) at (7.319919909541324,-2.4679610352636674) {};
		\node (x18) at (8.108600644946456,-1.3801255381531434) {};
		\node (x19) at (5.108600644946456,-5.3801255381531434) {};
		\def\rad{0.4cm}
		\draw [line width=1.2pt,color=wwccff,fill=wwccff,fill opacity=0.4] (x1) circle (\rad);
		\draw [line width=1.2pt,color=wwccff,fill=wwccff,fill opacity=0.4] (x2) circle (\rad);
		\draw [line width=1.2pt,color=wwccff,fill=wwccff,fill opacity=0.4] (x3) circle (\rad);
		\draw [line width=1.2pt,color=wwccff,fill=wwccff,fill opacity=0.4] (x4) circle (\rad);
		\draw [line width=1.2pt,color=wwccff,fill=wwccff,fill opacity=0.4] (x5) circle (\rad);
		\draw [line width=1.2pt,color=wwccff,fill=wwccff,fill opacity=0.4] (x6) circle (\rad);
		\draw [line width=1.2pt,color=wwccff,fill=wwccff,fill opacity=0.4] (x7) circle (\rad);
		\draw [line width=1.2pt,color=wwccff,fill=wwccff,fill opacity=0.4] (x8) circle (\rad);
		\draw [line width=1.2pt,color=wwccff,fill=wwccff,fill opacity=0.4] (x9) circle (\rad);
		\draw [line width=1.2pt,color=wwccff,fill=wwccff,fill opacity=0.4] (x10) circle (\rad);
		\draw [line width=1.2pt,color=wwccff,fill=wwccff,fill opacity=0.4] (x11) circle (\rad);
		\draw [line width=1.2pt,color=wwccff,fill=wwccff,fill opacity=0.4] (x12) circle (\rad);
		\draw [line width=1.2pt,color=wwccff,fill=wwccff,fill opacity=0.4] (x13) circle (\rad);
		\draw [line width=1.2pt,color=wwccff,fill=wwccff,fill opacity=0.4] (x14) circle (\rad);
		\draw [line width=1.2pt,color=wwccff,fill=wwccff,fill opacity=0.4] (x15) circle (\rad);
		\draw [line width=1.2pt,color=wwccff,fill=wwccff,fill opacity=0.4] (x16) circle (\rad);
		\draw [line width=1.2pt,color=wwccff,fill=wwccff,fill opacity=0.4] (x17) circle (\rad);
		\draw [line width=1.2pt,color=wwccff,fill=wwccff,fill opacity=0.4] (x18) circle (\rad);
		\begin{scriptsize}
			\draw [fill=wwccff] (x1) circle (3pt);
			\draw [fill=wwccff] (x2) circle (3pt);
			\draw [fill=wwccff] (x3) circle (3pt);
			\draw [fill=wwccff] (x4) circle (3pt);
			\draw [fill=wwccff] (x5) circle (3pt);
			\draw [fill=wwccff] (x6) circle (3pt);
			\draw [fill=wwccff] (x7) circle (3pt);
			\draw [fill=wwccff] (x8) circle (3pt);
			\draw [fill=wwccff] (x9) circle (3pt);
			\draw [fill=wwccff] (x10) circle (3pt);
			\draw [fill=wwccff] (x11) circle (3pt);
			\draw [fill=wwccff] (x12) circle (3pt);
			\draw [fill=wwccff] (x13) circle (3pt);
			\draw [fill=wwccff] (x14) circle (3pt);
			\draw [fill=wwccff] (x15) circle (3pt);
			\draw [fill=wwccff] (x16) circle (3pt);
			\draw [fill=wwccff] (x17) circle (3pt);
			\draw [fill=wwccff] (x18) circle (3pt);
		\end{scriptsize}
		\end{tikzpicture}}
        \caption{$t = 0.4$}
    \end{subfigure}
	\begin{subfigure}[t]{0.22\linewidth}
        \centering
		\scalebox{.2}{
		\definecolor{wwccff}{rgb}{0.4,0.8,1}
		\begin{tikzpicture}[line cap=round,line join=round,>=triangle 45,x=1cm,y=1cm]
		\node (x1) at (1.87236,2.74073) {};
		\node (x2) at (0.4853659626218362,0.37468619425046223) {};
		\node (x3) at (0.41216450288947193,-1.896847399280642) {};
		\node (x4) at (4.972687388202764,4.018935109570714) {};
		\node (x5) at (5.897347560746713,3.257450261593348) {};
		\node (x6) at (8.154606217251057,1.462521691360985) {};
		\node (x7) at (7.0939666075682934,2.3327900890494035) {};
		\node (x8) at (0.86611,1.92485) {};
		\node (x9) at (8.208997992106584,0.8098203930946715) {};
		\node (x10) at (0.8201028143059201,-0.6458365776035415) {};
		\node (x11) at (1.5,3.08) {};
		\node (x12) at (1.1564795986048635,-2.4536860982368975) {};
		\node (x13) at (1.953943883721049,-3.051995621647685) {};
		\node (x14) at (2.5334437222550013,-3.4742088700909) {};
		\node (x15) at (4.7091147164760585,-3.637384194657479) {};
		\node (x16) at (6.359677647018687,-2.7800367473700542) {};
		\node (x17) at (7.319919909541324,-2.4679610352636674) {};
		\node (x18) at (8.108600644946456,-1.3801255381531434) {};
		\node (x19) at (5.108600644946456,-5.3801255381531434) {};
		\def\rad{1cm}
		\draw [line width=1.2pt,color=wwccff,fill=wwccff,fill opacity=0.4] (x1) circle (\rad);
		\draw [line width=1.2pt,color=wwccff,fill=wwccff,fill opacity=0.4] (x2) circle (\rad);
		\draw [line width=1.2pt,color=wwccff,fill=wwccff,fill opacity=0.4] (x3) circle (\rad);
		\draw [line width=1.2pt,color=wwccff,fill=wwccff,fill opacity=0.4] (x4) circle (\rad);
		\draw [line width=1.2pt,color=wwccff,fill=wwccff,fill opacity=0.4] (x5) circle (\rad);
		\draw [line width=1.2pt,color=wwccff,fill=wwccff,fill opacity=0.4] (x6) circle (\rad);
		\draw [line width=1.2pt,color=wwccff,fill=wwccff,fill opacity=0.4] (x7) circle (\rad);
		\draw [line width=1.2pt,color=wwccff,fill=wwccff,fill opacity=0.4] (x8) circle (\rad);
		\draw [line width=1.2pt,color=wwccff,fill=wwccff,fill opacity=0.4] (x9) circle (\rad);
		\draw [line width=1.2pt,color=wwccff,fill=wwccff,fill opacity=0.4] (x10) circle (\rad);
		\draw [line width=1.2pt,color=wwccff,fill=wwccff,fill opacity=0.4] (x11) circle (\rad);
		\draw [line width=1.2pt,color=wwccff,fill=wwccff,fill opacity=0.4] (x12) circle (\rad);
		\draw [line width=1.2pt,color=wwccff,fill=wwccff,fill opacity=0.4] (x13) circle (\rad);
		\draw [line width=1.2pt,color=wwccff,fill=wwccff,fill opacity=0.4] (x14) circle (\rad);
		\draw [line width=1.2pt,color=wwccff,fill=wwccff,fill opacity=0.4] (x15) circle (\rad);
		\draw [line width=1.2pt,color=wwccff,fill=wwccff,fill opacity=0.4] (x16) circle (\rad);
		\draw [line width=1.2pt,color=wwccff,fill=wwccff,fill opacity=0.4] (x17) circle (\rad);
		\draw [line width=1.2pt,color=wwccff,fill=wwccff,fill opacity=0.4] (x18) circle (\rad);
		\begin{scriptsize}
			\draw [fill=wwccff] (x1) circle (3pt);
			\draw [fill=wwccff] (x2) circle (3pt);
			\draw [fill=wwccff] (x3) circle (3pt);
			\draw [fill=wwccff] (x4) circle (3pt);
			\draw [fill=wwccff] (x5) circle (3pt);
			\draw [fill=wwccff] (x6) circle (3pt);
			\draw [fill=wwccff] (x7) circle (3pt);
			\draw [fill=wwccff] (x8) circle (3pt);
			\draw [fill=wwccff] (x9) circle (3pt);
			\draw [fill=wwccff] (x10) circle (3pt);
			\draw [fill=wwccff] (x11) circle (3pt);
			\draw [fill=wwccff] (x12) circle (3pt);
			\draw [fill=wwccff] (x13) circle (3pt);
			\draw [fill=wwccff] (x14) circle (3pt);
			\draw [fill=wwccff] (x15) circle (3pt);
			\draw [fill=wwccff] (x16) circle (3pt);
			\draw [fill=wwccff] (x17) circle (3pt);
			\draw [fill=wwccff] (x18) circle (3pt);
		\end{scriptsize}
		\end{tikzpicture}}
        \caption{$t = 1.5$}
    \end{subfigure}
	\begin{subfigure}[t]{0.22\linewidth}
        \centering
		\scalebox{.2}{
		\definecolor{wwccff}{rgb}{0.4,0.8,1}
		\begin{tikzpicture}[line cap=round,line join=round,>=triangle 45,x=1cm,y=1cm]
		\node (x1) at (1.87236,2.74073) {};
		\node (x2) at (0.4853659626218362,0.37468619425046223) {};
		\node (x3) at (0.41216450288947193,-1.896847399280642) {};
		\node (x4) at (4.972687388202764,4.018935109570714) {};
		\node (x5) at (5.897347560746713,3.257450261593348) {};
		\node (x6) at (8.154606217251057,1.462521691360985) {};
		\node (x7) at (7.0939666075682934,2.3327900890494035) {};
		\node (x8) at (0.86611,1.92485) {};
		\node (x9) at (8.208997992106584,0.8098203930946715) {};
		\node (x10) at (0.8201028143059201,-0.6458365776035415) {};
		\node (x11) at (1.5,3.08) {};
		\node (x12) at (1.1564795986048635,-2.4536860982368975) {};
		\node (x13) at (1.953943883721049,-3.051995621647685) {};
		\node (x14) at (2.5334437222550013,-3.4742088700909) {};
		\node (x15) at (4.7091147164760585,-3.637384194657479) {};
		\node (x16) at (6.359677647018687,-2.7800367473700542) {};
		\node (x17) at (7.319919909541324,-2.4679610352636674) {};
		\node (x18) at (8.108600644946456,-1.3801255381531434) {};
		\node (x19) at (5.108600644946456,-5.3801255381531434) {};
		\def\rad{1.5cm}
		\draw [line width=1.2pt,color=wwccff,fill=wwccff,fill opacity=0.4] (x1) circle (\rad);
		\draw [line width=1.2pt,color=wwccff,fill=wwccff,fill opacity=0.4] (x2) circle (\rad);
		\draw [line width=1.2pt,color=wwccff,fill=wwccff,fill opacity=0.4] (x3) circle (\rad);
		\draw [line width=1.2pt,color=wwccff,fill=wwccff,fill opacity=0.4] (x4) circle (\rad);
		\draw [line width=1.2pt,color=wwccff,fill=wwccff,fill opacity=0.4] (x5) circle (\rad);
		\draw [line width=1.2pt,color=wwccff,fill=wwccff,fill opacity=0.4] (x6) circle (\rad);
		\draw [line width=1.2pt,color=wwccff,fill=wwccff,fill opacity=0.4] (x7) circle (\rad);
		\draw [line width=1.2pt,color=wwccff,fill=wwccff,fill opacity=0.4] (x8) circle (\rad);
		\draw [line width=1.2pt,color=wwccff,fill=wwccff,fill opacity=0.4] (x9) circle (\rad);
		\draw [line width=1.2pt,color=wwccff,fill=wwccff,fill opacity=0.4] (x10) circle (\rad);
		\draw [line width=1.2pt,color=wwccff,fill=wwccff,fill opacity=0.4] (x11) circle (\rad);
		\draw [line width=1.2pt,color=wwccff,fill=wwccff,fill opacity=0.4] (x12) circle (\rad);
		\draw [line width=1.2pt,color=wwccff,fill=wwccff,fill opacity=0.4] (x13) circle (\rad);
		\draw [line width=1.2pt,color=wwccff,fill=wwccff,fill opacity=0.4] (x14) circle (\rad);
		\draw [line width=1.2pt,color=wwccff,fill=wwccff,fill opacity=0.4] (x15) circle (\rad);
		\draw [line width=1.2pt,color=wwccff,fill=wwccff,fill opacity=0.4] (x16) circle (\rad);
		\draw [line width=1.2pt,color=wwccff,fill=wwccff,fill opacity=0.4] (x17) circle (\rad);
		\draw [line width=1.2pt,color=wwccff,fill=wwccff,fill opacity=0.4] (x18) circle (\rad);
		\begin{scriptsize}
			\draw [fill=wwccff] (x1) circle (3pt);
			\draw [fill=wwccff] (x2) circle (3pt);
			\draw [fill=wwccff] (x3) circle (3pt);
			\draw [fill=wwccff] (x4) circle (3pt);
			\draw [fill=wwccff] (x5) circle (3pt);
			\draw [fill=wwccff] (x6) circle (3pt);
			\draw [fill=wwccff] (x7) circle (3pt);
			\draw [fill=wwccff] (x8) circle (3pt);
			\draw [fill=wwccff] (x9) circle (3pt);
			\draw [fill=wwccff] (x10) circle (3pt);
			\draw [fill=wwccff] (x11) circle (3pt);
			\draw [fill=wwccff] (x12) circle (3pt);
			\draw [fill=wwccff] (x13) circle (3pt);
			\draw [fill=wwccff] (x14) circle (3pt);
			\draw [fill=wwccff] (x15) circle (3pt);
			\draw [fill=wwccff] (x16) circle (3pt);
			\draw [fill=wwccff] (x17) circle (3pt);
			\draw [fill=wwccff] (x18) circle (3pt);
		\end{scriptsize}
		\end{tikzpicture}}
        \caption{$t = 2.1$}
    \end{subfigure}
    \begin{subfigure}[t]{0.26\linewidth}
    \centering
    \includegraphics[height=3.1cm]{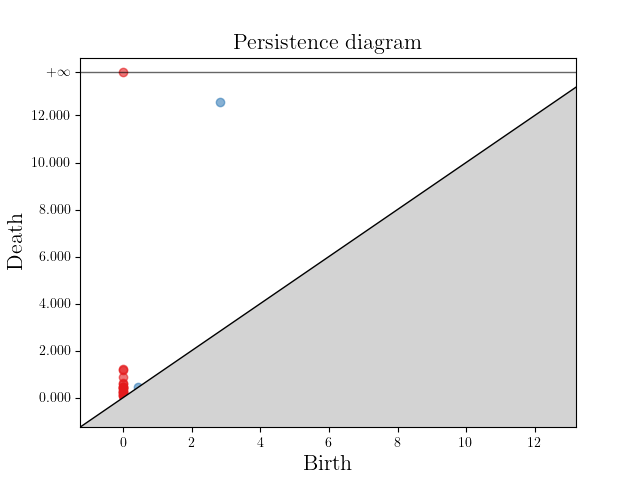}
    \caption{Persistence diagram}
        \label{fig:PD_cech}
    \end{subfigure}
	\caption{Balls with varying radius $t>0$ centered at each point of a finite subset $\X\subseteq \R^2$. These balls are used to define the \Cech{} filtration $\CechCplx(\X)$ and its corresponding persistence diagrams of dimension 0 (in red) and 1 (in blue).}
	\label{fig:Cech-filtration}
\end{figure}

\subsection{Euler characteristic tools}\label{sec:euler_tools}
In this section, we recall the definitions of the descriptors of filtered simplicial complexes we use to perform topological data analysis. These invariants are defined using Euler characteristic profiles \citep{beltramo2022euler,DG22} and topological and hybrid transforms of constructible functions \citep{S95,GR11,L22}. While these tools can be defined in the more general setting of o-minimal geometry, we focus on filtered simplicial complexes.

Given an $m$-parameter filtration, computing the Euler characteristic for every value of the parameter $t \in \mathbb{R}^m$ gives an integer-valued function on $\mathbb{R}^m$ that is a multi-scale descriptor of the evolution of the filtration with respect to $t$.
\begin{definition}
    The \emph{Euler characteristic profile} of an $m$-parameter filtration~$\filt$ is the map:
    \begin{equation*}
        \ECP: t\in\R^m \mapsto \chi(\filt_t).
    \end{equation*}
    The map $\ECP$ is usually refered to as the \emph{Euler characteristic curve} (ECC) of $\filt$ when~$m=1$ and as the \emph{Euler characteristic surface} (ECS) of $\filt$ when $m=2$; see~\citet{beltramo2022euler,DG22}.
\end{definition}

Figure \ref{fig:ex-filt-ecs} shows an Euler characteristic surface computed on an elementary example. Widely used in data analysis \citep{smith2021euler, DG22, beltramo2022euler, jiang2020weighted}, this simple descriptor has proven to be efficient in capturing meaningful information on the data at hand. However, as illustrated in the following sections, we are interested in more robust descriptors built from integral transformations.

\begin{figure}[!h]
    \centering  
    \begin{subfigure}[t]{0.45\linewidth}
        \centering
        \includegraphics[height=4cm]{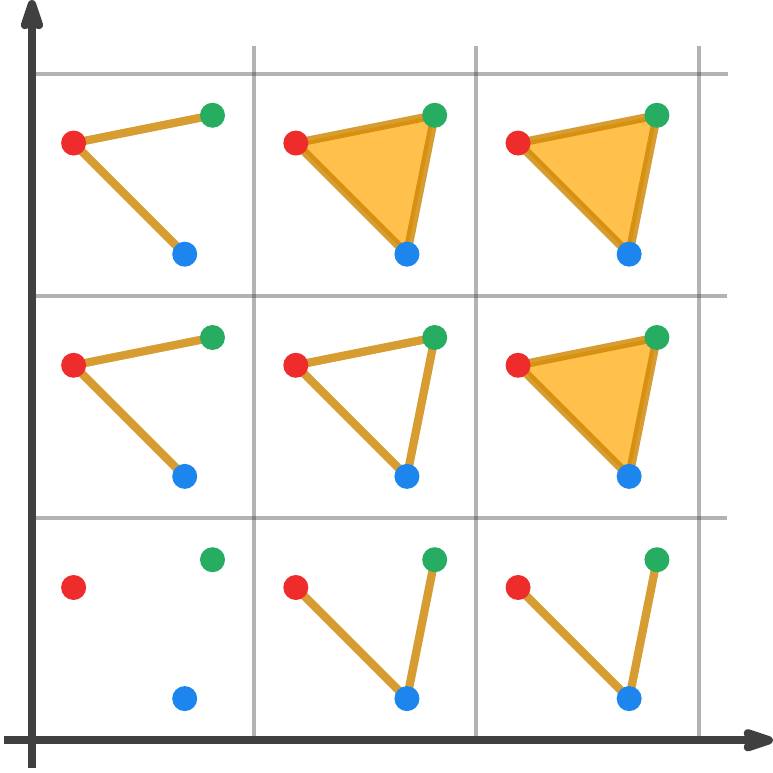}
        \caption{$\filt$}
        \label{fig:finitely-gen-filt}
    \end{subfigure}
    \begin{subfigure}[t]{0.45\linewidth}
        \centering
        \includegraphics[height=4.2cm]{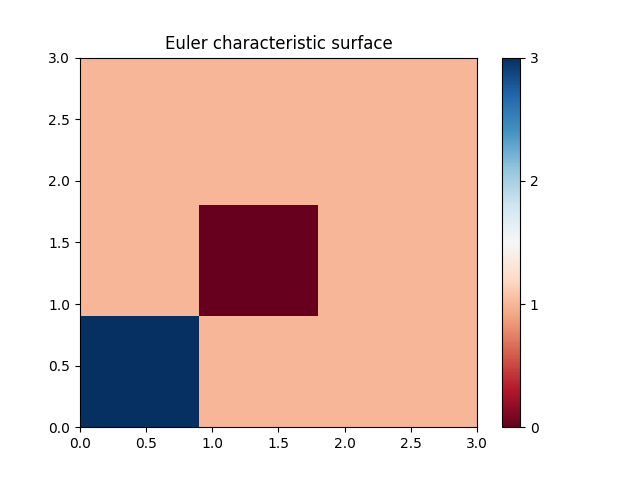}
        \caption{$\ECP$}
        \label{fig:toy-ECS}
    \end{subfigure}
    \caption{A finitely generated $2$-parameter filtration (a) and its associated Euler characteristic surface (b). All vertices have one birth time, while all other simplices have two.}
    \label{fig:ex-filt-ecs}
\end{figure}

Before introducing the other descriptors considered, we define the pushforward operation from Euler calculus; see \citet{S88, V88}: 
\begin{definition}\label{def:pushforward}
Let $\filt$ be a one-critical $m$-parameter filtration and $\xi\in\dual{\R_+^m}$. The \emph{pushforward of~$\filt$ along~$\xi$} is the one-parameter family defined for any $s\in\R$ by:
\begin{equation*}
    \big(\xi_*\filt\big)_s = \bigcup_{\dualdot{\xi}{t}\leq s}\filt_t.
\end{equation*}
The \emph{pushforward of $\ECP$ along $\xi$} is the Euler characteristic curve of $\xi_*\filt$. We denote this curve by $\xi_*\ECP$. 
In other words, we have $\xi_*\ECP = \ECP[\xi_*\filt]$. 
\end{definition}
One-criticality of the filtration is necessary for Definition \ref{def:pushforward} to match the classical definitions of \citet{S88, V88}. Writing the one-critical filtration as a sublevel-sets filtration, it is a simple exercise to check that the pushforward operation has a simple expression:
\begin{lemma}
\label{lem:dot_product}
    Let $f:\cplx\to\R^m$ be a non-decreasing map and $\xi\in\dual{\R_+^m}$. The Euler characteristic profile of~$\filt_f$ is denoted by~$\ECP[f]$. It is an easy exercise to check that~$\xi_*\filt_f = \filt_{\xi\circ f}$ and~$\xi_*\ECP[f] = \ECP[\xi\circ f]$.
\end{lemma}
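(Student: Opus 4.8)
The plan is to prove the filtration-level identity $\xi_*\filt_f = \filt_{\xi\circ f}$ first, and then read off the statement about Euler profiles immediately from the definition of the pushforward of $\ECP$. Everything reduces to unwinding definitions, with monotonicity of $\xi$ entering at exactly one point.

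Before comparing the two filtrations, I would first check that $\filt_{\xi\circ f}$ is even a legitimate object, i.e.\ that $\xi\circ f\colon\cplx\to\R$ is non-decreasing for the inclusion of simplices, so that \Cref{ex:sublvl} applies. This is where the hypothesis $\xi\in\dual{\R_+^m}$ is used for the first time: for $\sigma\leq\tau$ in $\cplx$ we have $f(\sigma)\leq f(\tau)$ by assumption on $f$, and applying the monotone linear form $\xi$ yields $\dualdot{\xi}{f(\sigma)}\leq\dualdot{\xi}{f(\tau)}$, as required.

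Next I would fix $s\in\R$ and compare the two subcomplexes indexed by $s$ via membership of an arbitrary simplex $\sigma$. By \Cref{def:pushforward}, one has $\sigma\in(\xi_*\filt_f)_s$ if and only if there exists $t\in\R^m$ with $\dualdot{\xi}{t}\leq s$ and $\sigma\in(\filt_f)_t$, the latter meaning $f(\sigma)\leq t$; whereas $\sigma\in(\filt_{\xi\circ f})_s$ means $\dualdot{\xi}{f(\sigma)}\leq s$. For the inclusion $(\filt_{\xi\circ f})_s\subseteq(\xi_*\filt_f)_s$, given $\sigma$ with $\dualdot{\xi}{f(\sigma)}\leq s$ I would simply take the witness $t=f(\sigma)$, which satisfies both $f(\sigma)\leq t$ and $\dualdot{\xi}{t}\leq s$. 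For the reverse inclusion, starting from a witness $t$ with $f(\sigma)\leq t$ and $\dualdot{\xi}{t}\leq s$, monotonicity of $\xi$ applied to $f(\sigma)\leq t$ gives $\dualdot{\xi}{f(\sigma)}\leq\dualdot{\xi}{t}\leq s$, so $\sigma\in(\filt_{\xi\circ f})_s$. Since $s$ was arbitrary, this establishes $\xi_*\filt_f=\filt_{\xi\circ f}$.

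Finally, the Euler-profile identity follows with no further work: by \Cref{def:pushforward} the pushforward $\xi_*\ECP[f]$ is by definition $\ECP[\xi_*\filt_f]$, and substituting the filtration identity just proved gives $\ECP[\xi_*\filt_f]=\ECP[\xi\circ f]$. The only non-formal ingredient in the whole argument is the monotonicity of $\xi$ used in the reverse inclusion above; this is precisely the step that forces $\xi\in\dual{\R_+^m}$ rather than an arbitrary linear form, and it is the point I would flag as the crux even though it is elementary.
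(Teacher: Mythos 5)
Your proof is correct and is precisely the routine unwinding of definitions that the paper intends when it leaves this as "an easy exercise": the paper gives no argument of its own, and your double inclusion (witness $t=f(\sigma)$ one way, monotonicity of $\xi$ the other way) together with the definitional identity $\xi_*\ECP[f]=\ECP[\xi_*\filt_f]$ is the natural and complete verification. Your observation that $\xi\in\dual{\R_+^m}$ is needed exactly once, in the reverse inclusion, is also accurate.
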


Hybrid transforms mixing Euler calculus and classical Lebesgue integration have been introduced in \citet{L22}. These transforms are continuous and piecewise smooth and enjoy several beneficial properties, such as index theoretic formulae in the context of sublevel-sets persistence; see Propositions 4.1 and 4.2 and Theorem 8.3 in loc. cit.. In the present context, they can be defined as follows:
\begin{definition}
    Let $\filt$ be a one-critical $m$-parameter filtration and $\kernel \in \Lp{\R}$. The \emph{hybrid transform with kernel~$\kappa$} of~$\ECP$ is the map:
	\begin{equation*}
		\HT : \xi\in\dual{\R_+^m} \mapsto \int_\R \kernel(s)\xi_*\ECP(s)\ds.
	\end{equation*}
\end{definition}

The following lemma is an obvious consequence of Lemma \ref{lem:dot_product}. It states that any $m$-parameter hybrid transform restricted to an open half-line can be expressed as a one-parameter hybrid transform. It will be key to the proof of a law of large numbers for $m$-parameter hybrid transforms (\Cref{thm:LLN_multiD}).
\begin{lemma}
\label[lemma]{lem:HTn_to_HT1}
    Let $\filt$ be a one-critical $m$-parameter filtration, let $\kernel \in \Lp{\R}$ and $\xi\in\dual{\R_+^m}$. For any $\lambda>0$, one has:
    \begin{equation*}
        \HT[\filt](\lambda\xi) = \HT[\xi_*\filt](\lambda).
    \end{equation*}
\end{lemma}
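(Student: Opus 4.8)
The plan is to reduce the claimed identity to the functoriality of the pushforward under composition of linear forms, after which both sides become literally the same integral, so that no change of variables is even needed.

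First I would exploit one-criticality. As observed after \Cref{ex:function-Cech}, any one-critical filtration is the sublevel-sets filtration $\filt = \filt_f$ of the function $f : \sigma \in \cplx \mapsto t(\sigma)$, which lets me invoke \Cref{lem:dot_product} freely. Since $\lambda > 0$ and $\xi \in \dual{\R_+^m}$, both $\xi$ and $\lambda\xi$ lie in $\dual{\R_+^m}$, so the pushforwards $\xi_*\filt$ and $(\lambda\xi)_*\filt$ are well defined; moreover $\lambda$ is itself a non-decreasing linear form on $\R$, i.e. $\lambda \in \dual{\R_+}$, so the one-parameter pushforward $\lambda_*(\xi_*\filt)$ makes sense as well.

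The key step is the composition identity $(\lambda\xi)_*\filt = \lambda_*(\xi_*\filt)$. By \Cref{lem:dot_product} one has $\xi_*\filt = \filt_{\xi\circ f}$, so applying the lemma again in the one-parameter case (to the filter $g := \xi\circ f : \cplx \to \R$ and the form $\lambda$) gives $\lambda_*(\xi_*\filt) = \filt_{\lambda\,(\xi\circ f)}$. On the other hand, the same lemma yields $(\lambda\xi)_*\filt = \filt_{(\lambda\xi)\circ f}$. Since $(\lambda\xi)\circ f = \lambda\,(\xi\circ f)$ as functions $\cplx \to \R$, the two filtrations coincide, and hence so do their Euler characteristic curves: $(\lambda\xi)_*\ECP[\filt] = \lambda_*\ECP[\xi_*\filt]$, where I also use $\ECP[\xi_*\filt] = \xi_*\ECP[\filt]$ from \Cref{def:pushforward}.

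Finally I would substitute this into the definition of the hybrid transform. By definition $\HT[\filt](\lambda\xi) = \int_\R \kernel(s)\,(\lambda\xi)_*\ECP[\filt](s)\ds$, while $\HT[\xi_*\filt](\lambda) = \int_\R \kernel(s)\,\lambda_*\ECP[\xi_*\filt](s)\ds$; by the previous step the two integrands agree pointwise, so the integrals are equal, which is the claim. I do not expect a serious obstacle: the only points needing care are checking that the pushforward composes functorially---precisely where one-criticality intervenes, as flagged right after \Cref{def:pushforward}---and confirming that $\lambda > 0$ keeps every relevant linear form in the non-decreasing cone so that each pushforward is legitimate. Well-definedness of the integrals is inherited from the definition of the hybrid transform, the Euler curves being bounded and $\kernel \in \Lp{\R}$.
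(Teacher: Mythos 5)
Your proof is correct and follows exactly the route the paper intends: the paper gives no separate proof, stating only that the lemma is ``an obvious consequence of \Cref{lem:dot_product}'', which is precisely your reduction via $\xi_*\filt = \filt_{\xi\circ f}$ and the composition identity $(\lambda\xi)_*\filt = \lambda_*(\xi_*\filt)$. Your additional care about well-definedness (each form staying in the non-decreasing cone, and $\xi_*\filt$ being a sublevel-sets filtration, hence one-critical, so its hybrid transform makes sense) only makes explicit what the paper leaves implicit.
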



Euler characteristic profiles and hybrid transforms constitute the two data descriptors we will use to perform topological data analysis. We give explicit expressions of these descriptors in specific cases below. These formulae will allow us to design algorithms to compute them in \Cref{sec:algorithm} and to build intuition on the type of behaviour they capture all along the paper.

\paragraph{One-critical filtrations.}
Up to reducing $\cplx$, one can assume that for any~$\sigma\in\cplx$, there is $t\in\R^m$ with $\sigma\in\filt_t$. Then, one has:
\begin{equation}\label{eq:ecp-one-critical}
    \ECP = \sum_{\sigma \in \cplx} (-1)^{\dim\sigma} \1_{Q_{t(\sigma)}},
\end{equation}
where $Q_u := \{t\in\R^m \st t \geq u\}$ for any $u\in\R^m$. 

Let $\kernel\in\Lp{\R}$. Denote by $\Kernel$ the primitive of $\kernel$ whose limit at $+\infty$ is 0. The hybrid transform with kernel $\kappa$ of $\ECP$ is:
\begin{equation}\label{eq:ht-one-critical}
    \HT:\xi\in\dual{\R_+^m} \longmapsto - \sum_{\sigma \in \cplx} (-1)^{\dim\sigma} \Kernel[\dualdot{\xi}{t(\sigma)}].
\end{equation}
\begin{remark}
    We often define hybrid transforms by specifying the primitive $\Kernel$ of the kernel $\kernel$ whose limit at $+\infty$ is 0. We call $\Kernel$ the \emph{primitive kernel} of the hybrid transform.
\end{remark}

Finally, in the case of a one-parameter filtration, hybrid transforms naturally appear as classical integral transforms of the Euler curve, making them a natural tool to extract information from the Euler curve and compress it into a small number of relevant coefficients. 

\paragraph{Connection with classical transforms.} 
Let $\filt$ be a one-critical $m$-parameter filtration. First, assume that $m=1$. For any $\xi\in\dual{\R}_+$ and any $s\in\R$, one has $(\xi_*\filt)_s = \filt_{s/\xi}$ and hence~$\xi_*\ECP(s) = \ECP(s/\xi)$. A change of variables then ensures that the hybrid transform with kernel $\kernel\in\Lp{\R}$ is equal to the rescaled classical transform:
\begin{equation}\label{eq:HT-as-classical}
    \HT: \xi\in\dual{\R}_+ \mapsto \xi \cdot \int_\R \kernel(\xi \cdot s)\ECP(s)\ds.
\end{equation}
Assume now that $m\geq 2$. The hybrid transform with kernel $\kernel$ differs from the classical integral transform:
\begin{equation*}
    \xi\in\dual{\R_+^m} \mapsto \int_{\R^m} \kernel(\dualdot{\xi}{x})\ECP(x)\d x.
\end{equation*}
We refer to \citet[Example~3.18]{L22} for a counter-example. In some special cases, however, such as when $\kernel(t)=\exp(-t)$, hybrid transforms and classical transforms coincide up to a rescaling \citep[Examples~5.12 and~5.17]{L22}. The interest in hybrid transforms over classical transforms can be motivated by the following example:

\begin{example}
    The one-parameter hybrid transform with kernel $\kernel(t) = \exp(-t)$ is also known as the \emph{persistent magnitude} \citep{GH21}. It is used in \cite{o2023alpha} as a new measure for estimating fractal dimensions of finite subsets $\X \subseteq \R^n$.
\end{example}

\subsection{Comparison of Euler characteristic tools with persistence diagrams}

Suppose that $\filt$ is a one-parameter filtration. In this case, Euler characteristic curves and hybrid transforms can simply be written as statistics of persistence diagrams. 
Denote the degree $k$ persistence diagram of $\filt$ by~$\diagram_k = \{(a_i, b_i)\}_{i=1}^{n_k}$ where $-\infty < a^k_i < b^k_i \leq \infty$ and an integer~$n_k\geq 0$. There exists $k_0$ such that persistence diagrams $\diagram_k$ are empty for all $k \geq k_0$. It is then straightforward to check that:
\begin{equation}
\label{eq:PD_ECP}
    \ECP = \sum_{k\geq 0} \sum_{i=1}^{n_k} (-1)^k \1_{[a^k_i,b^k_i)}.
\end{equation}
Let $\kernel\in\Lp{\R}$ and consider a primitive $\Kernel$ of $\kernel$. The hybrid transform with kernel~$\kappa$ of~$\ECP$ therefore writes as:
\begin{equation}
\label{eq:PD_HT}
    \HT:\xi\in\dual{\R}_+ \mapsto \sum_{k\geq 0} \sum_{i=1}^{n_k} (-1)^k \Big(\Kernel[\xi\cdot b^k_i] - \Kernel[\xi\cdot a^k_i]\!\Big),
\end{equation}
with the convention that $\Kernel(\xi\cdot b_i^k)$ is the limit of $\Kernel$ at $+\infty$ when $b_i^k = +\infty$. This connection between persistence diagrams and the one-parameter descriptors used in this article will be used for interpretation and in the asymptotic results of \Cref{sec:stat}. 

As we can see from \eqref{eq:PD_ECP} and \eqref{eq:PD_HT}, considering Euler curves and hybrid transforms instead of persistence diagrams implies a loss of information. More precisely:
\begin{itemize}
	\item Both Euler characteristic curves and hybrid transforms can be written as an alternated sum over all homological degrees. As a consequence, the information contained in persistence diagrams is summed up across all homological degrees and reduced to a single descriptor. 
	\item Even if only one persistence diagram is non-empty, the birth-death pairing of the points is lost while computing the Euler characteristic curve or hybrid transforms. In other words, Euler curves and hybrid transforms only depend on the sets $\{a_k^i\}$ and $\{b_k^i\}$ of all births and deaths respectively.
	\item Worse still, Euler curves are defined using indicator functions. As a consequence, the persistence diagrams $\{(0,1)\}$ and $\{(0,1/2),(1/2,1)\}$ share the same Euler curve and the same hybrid transforms for all kernels. Therefore, the lifetime of a feature $b-a$, usually used as an indicator of the significance of the point $(a,b)$ in the diagram is inaccessible.
\end{itemize}

The purpose of the following sections is to show that this loss of information does not result in a loss of accuracy when using Euler curves and hybrid transforms in machine learning tasks. Moreover, we show that the computation time is greatly reduced. This is due to the fact that Euler curves and hybrid transforms are computed using \eqref{eq:ecp-one-critical} and \eqref{eq:ht-one-critical}, bypassing the computation of homology and of persistence diagrams. This theoretical fact is backed up by experiments in \Cref{sec:timing}. Finally, we use the fact that Euler curves and hybrid transforms can naturally be adapted to filtrations with more than one parameter, while there is no analogues of \eqref{eq:PD_ECP} and \eqref{eq:PD_HT} in this case.
We believe that these major gains indicate that such descriptors should be preferred over persistence diagrams when tackling machine learning problems. 

\section{Method}\label{sec:method}
In this section, we describe the algorithms used to compute our descriptors and their implementation. We also give some intuition on choosing the kernel of hybrid transforms. Finally, we give heuristics on the type of information captured by Euler curves and their transforms on synthetic data sets.

\subsection{Algorithm}\label{sec:algorithm}
In every experiment, we directly compute Euler characteristic profiles and their hybrid transforms using formulae \eqref{eq:ecp-one-critical} and \eqref{eq:ht-one-critical}. Each algorithm takes as input a grid of size $d_1\times\ldots \times d_m$ on which the Euler characteristic profile or the hybrid transform is evaluated. The output array of size~$d_1\times\ldots\times d_m$ is an exact sampling of the descriptor. Therefore, our topological descriptors vectorize $m$-parameter filtrations into $d_1\times\ldots\times d_m$ arrays that can be used as input to any classical machine learning algorithm.

\paragraph{Complexity.}
The algorithm computing Euler characteristic profiles with resolution $d_1\times\ldots \times d_m$ has time complexity $\mathcal{O}(\card{K} + d_1\cdot\ldots\cdot d_m)$ in the worst case. The algorithm computing hybrid transforms with the same resolution has a worst-case time complexity of $\mathcal{O}(\card{K} \cdot d_1\cdot\ldots\cdot d_m)$. In comparison, computing a persistence diagram has time complexity $\mathcal{O}(|K|^\omega)$ in the worst case where $2 \leq \omega < 2.373$ is the exponent for matrix multiplication; see \citet{milosavljevic2011zigzag}.

\paragraph{Implementation.}
A \python{} implementation of our algorithms is freely available online on our GitHub repository: \url{https://github.com/vadimlebovici/eulearning}.
In practice, our implementation allows for two different ways of choosing a sampling grid. The first method takes as input bounds $[(a_1,b_1), \ldots, (a_m,b_m)]$ and a resolution~$d_1\times\ldots\times d_m$. We then compute a sampling of our descriptors on a uniform discretization of the subset $[a_1,b_1]\times\ldots\times[a_m,b_m]\subseteq \R^m$. This method has the disadvantage of requiring prior knowledge about the data. For Euler characteristic profiles, the second method consists in giving as input a list $[(p_1,q_1), \ldots, (p_m, q_m)]$ with real numbers $0\leq p_i < q_i \leq 1$. The algorithm then computes the $p_i$-th and the $q_i$-th percentiles of the $i$-th filtration for each $i=1,\ldots,m$. Finally, the Euler profiles are uniformly sampled on a $d_1\times\ldots\times d_m$ grid ranging from the lowest to the highest percentile on each axis. For the hybrid transforms, we provide a list~$[p_1, \ldots, p_m]$ of real numbers $0\leq p_i \leq 1$ and a positive real number $\alpha$. The algorithm then computes the $p_i$-th percentiles $v_i$ of the $i$-th filtration for each $i=1,\ldots,m$. The integral transforms are uniformly sampled on a $d_1\times\ldots\times d_m$ grid ranging from $0$ to $\alpha/v_i$ on each axis. This method does not require any prior knowledge of the data but depends on a choice of parameters. We give a few heuristics regarding the choice of the kernel function below.

\paragraph{Kernel choice.}
To interpret integral transforms of Euler curves, we set~$m=1$ and compute them on the simple function $\ECP = \1_{[a,b)}$ with~$a<b\in(0,+\infty)$. Recall that the hybrid transform has the simple expression~\eqref{eq:PD_HT}. Figure \ref{fig:kernel_ex} shows the hybrid transforms for several kernels. For every $p > 0$, the hybrid transform with primitive kernel $\Kernel: s \mapsto -\exp(-s^p)$ has a minimum in $\sqrt[p]{\frac{p(\log(b)-\log(a))}{b^p-a^p}}$, which tends to $1/b$ as~$p \to \infty$. As a consequence, transforms of this type yield \emph{smoothed} versions of the curve~$t\mapsto \ECP(1/t)$, that is, of an Euler curve with \emph{inverted scales}. Similarly, the hybrid transform with primitive kernel $\Kernel: s \mapsto -s^p \exp(-s^p)$ has a minimum that tends to~$1/a$ and a maximum that tends to~$1/b$ as~$p \to \infty$, with a spikier aspect as~$p \to \infty$. Transforms of this type record the \emph{variations} of the Euler characteristic curve with inverted scales. We refer to the following section for more involved experiments on synthetic data.

\begin{figure}
    \centering
    \begin{subfigure}[t]{0.31\linewidth}
        \centering
        \includegraphics[scale=0.33]{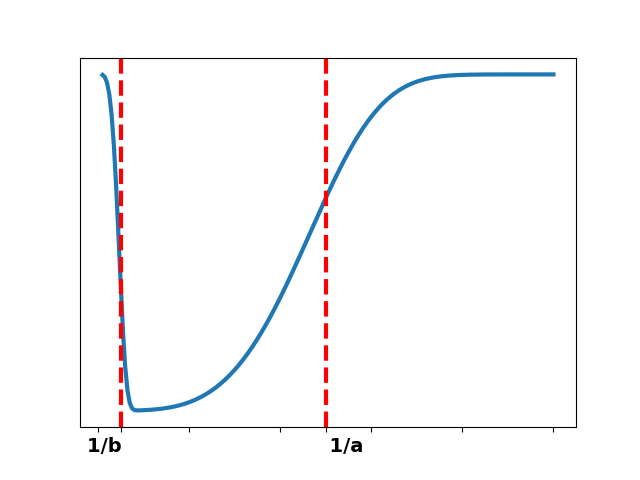}
        \caption{$\Kernel(s) = \exp(-s^4)$}
    \end{subfigure}
    \begin{subfigure}[t]{0.31\linewidth}
        \centering
        \includegraphics[scale=0.33]{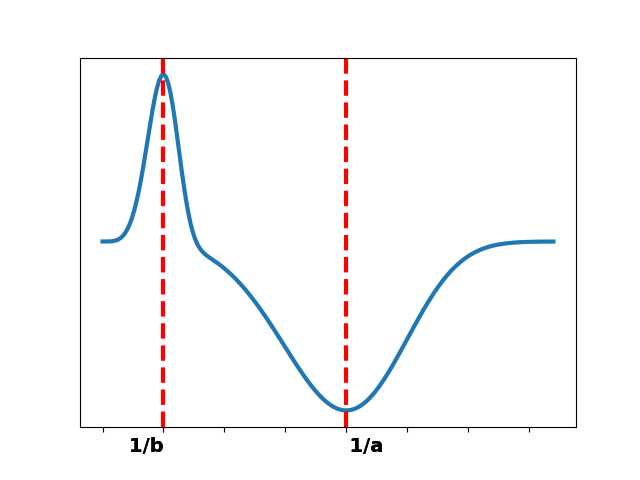}
        \caption{$\Kernel(s) = s^4\cdot\exp(-s^4)$}
    \end{subfigure}
    \begin{subfigure}[t]{0.31\linewidth}
        \centering
        \includegraphics[scale=0.33]{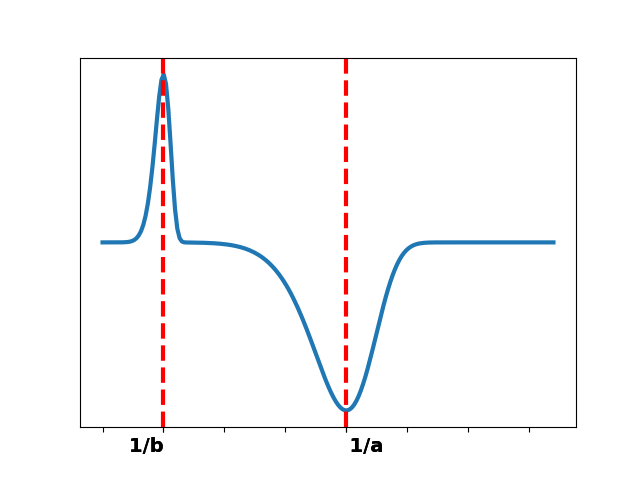}
        \caption{$\Kernel(s) = s^8\cdot\exp(-s^8)$}
    \end{subfigure}
    \caption{Hybrid transforms of $\ECP = \1_{[a,b)}$ for several choices of kernel $\kernel$}
    \label{fig:kernel_ex}
\end{figure}

\subsection{Heuristics for the Euler curves and their transforms}
\label{sec:heuristics}
In this section, we assume that $m=1$ and study the Euler characteristic curves associated with the filtered \Cech{} complex of a point cloud and the hybrid transforms of these curves. We overview how these descriptors can extract information about the input data's topology, geometry, and sampling density. As already mentioned in \Cref{ex:Cech}, we instead use alpha filtration in numerical experiments for computational reasons.

\subsubsection*{Poisson and Ginibre point processes}

While apparently coarse descriptors as opposed to persistence diagrams, Euler characteristic curves allow us to extract relevant scales at which topological differences between two different processes are revealed. To illustrate this claim, we try to discriminate between two types of point processes: a Poisson point process (PPP) and a Ginibre point process (GPP). This setup has been introduced in \citet{obayashi2018persistence}. Using topological descriptors to classify point processes has been extensively studied in \cite{biscio2020testing}. Ginibre processes imply repulsive interactions between points. While a standard PPP could have some very small and very large cycles, we expect the GPP to have more medium-sized cycles since points tend to be well dispersed. Ginibre point processes are generated using~\citet{decreusefond2021optimal}. We classify this toy data set with a random forest classifier and select the two scales corresponding to maxima of the \textit{feature importance} function of the classifier. In \Cref{fig:ex_PP}, we plot two examples of point clouds together with their alpha complexes at these scales.
\begin{figure}[H]
    \centering
    \begin{subfigure}[t]{0.24\linewidth}
        \centering
    	\includegraphics[scale=0.28]{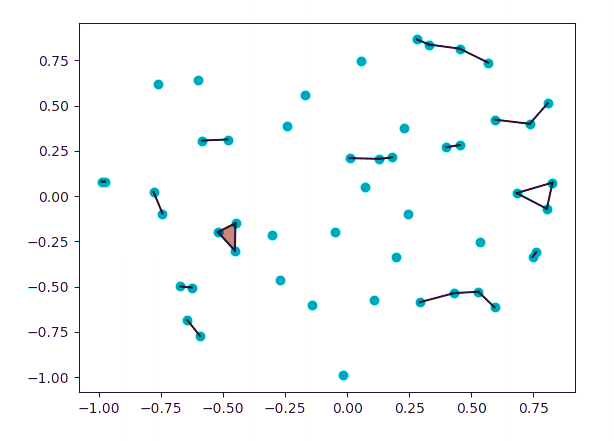}
        \caption{PPP}
    \end{subfigure}
    \begin{subfigure}[t]{0.24\linewidth}
    \centering
    	\includegraphics[scale=0.28]{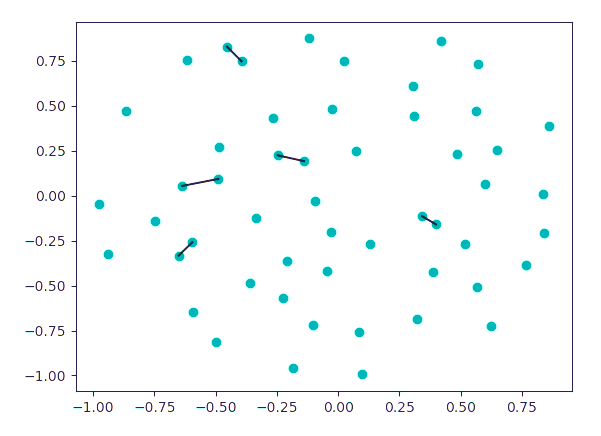}
    	\caption{GPP}
    \end{subfigure}
    \begin{subfigure}[t]{0.24\linewidth}
    \centering
    	\includegraphics[scale=0.28]{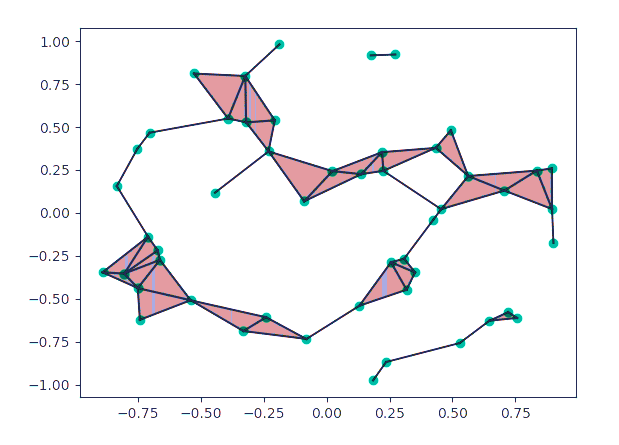}
    	\caption{PPP}
    \end{subfigure}
    \begin{subfigure}[t]{0.24\linewidth}
    \centering
    	\includegraphics[scale=0.28]{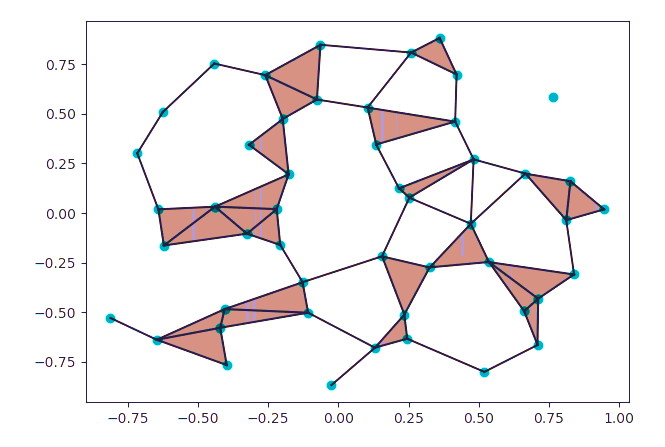}
    	\caption{GPP}
    \end{subfigure}
    \caption{Examples of alpha complexes on PPP and GPP point clouds at two scales~$t_1$ (Figures (a) and (b)) and~$t_2$ (Figures (c) and (d)) with~$t_1 < t_2$.}
    \label{fig:ex_PP}
\end{figure}

We plot Euler curves in \Cref{fig:GPP-PPP-ECC}. The Euler curves suggest that these classes differ at different scales, as it was visible in \Cref{fig:ex_PP}:
\begin{itemize}
    \item The Euler curves of the PPP class decrease more steeply. Indeed, a GPP has repulsive interactions between the points. Therefore, the pairwise distance between points tends to be larger and connected components do not die too early.
    \item The global minimum for the GPP class is lower.
    \item Compared to curves of the GPP class, the curves of the PPP class tend to stay negative for a longer time. Indeed, PPP allows for very large cycles to exist since there will typically be some large zones without any point, which is proscribed by GPP.
\end{itemize}
We remark that as opposed to persistence diagrams, our approach uses the birth times of edges instead of the usual degree $1$ homological features. It seems that this information suffices to discriminate between the two classes.

We plot the transforms of these curves for several kernels in \Cref{fig:GPP-PPP-HT-exp,fig:GPP-PPP-HT-exp_4}. Choosing the primitive kernel $\Kernel: s \mapsto \exp(-s)$ emphasises the small scales of the Euler curves in the larger scales of the transform. Such a descriptor separates well the two classes due to the earlier death of connected components for the PPP class. The primitive kernel $ \Kernel: s \mapsto \exp(-s^4)$ also extracts this information. In addition, it has a higher global maximum for the GPP class that also enables distinction between the two classes. This maximum is created by the global minimum of the Euler curves. This experiment is a piece of evidence that this kernel carries more information than the exponential kernel and will therefore be preferred for applications.

\begin{figure}[H]
\centering
\begin{subfigure}[t]{0.3\linewidth}
	\includegraphics[scale=0.37]{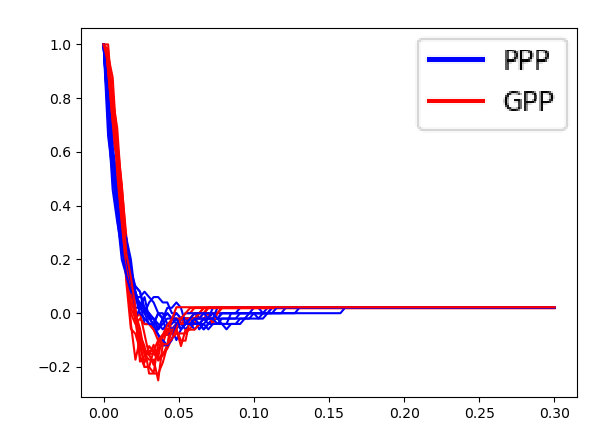}
	\caption{ECC}
	\label{fig:GPP-PPP-ECC}
\end{subfigure}
\begin{subfigure}[t]{0.3\linewidth}
	\includegraphics[scale=0.37]{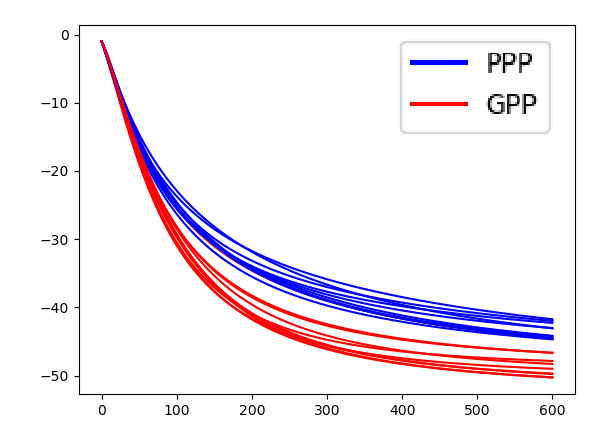}
	\caption{HT, $\Kernel(s) = \exp(-s)$}
	\label{fig:GPP-PPP-HT-exp}
\end{subfigure}
\begin{subfigure}[t]{0.3\linewidth}
	\includegraphics[scale=0.37]{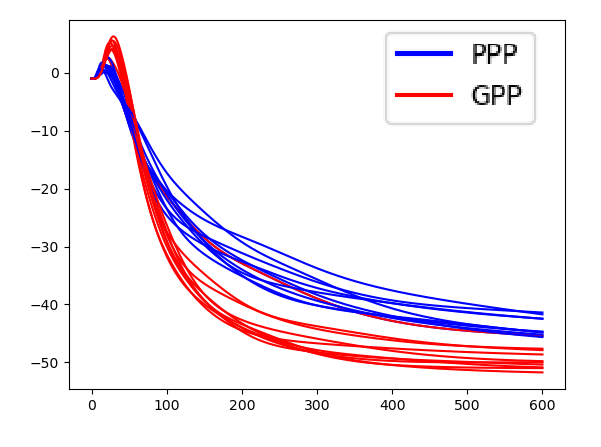}
	\caption{HT, $\Kernel(s) = \exp(-s^4)$}
	\label{fig:GPP-PPP-HT-exp_4}
\end{subfigure}
	\caption{Euler characteristic curves and their transforms for PPP VS GPP data set}
	\label{fig:PP_illus}
\end{figure}

\subsubsection*{Different samplings on a manifold}

We consider point clouds sampled on manifolds. We illustrate how our various descriptors can separate the information coming from the sampling density and the one coming from the shape of the support manifold. We consider two set-ups. The first one consists of clouds of 500 points sampled in two different ways on a torus embedded in $\mathbb{R}^3$. The first sampling is uniform \citep{diaconis2013sampling}. The second is a non-uniform sampling where we draw~$(\theta, \phi)$ uniformly in $[0, 2 \pi]^2$ and obtain a point on the torus through the embedding $\Psi_{\mathbb{T}^2}: (\theta, \phi) \mapsto (x_1, x_2, x_3)$ where:
\begin{equation*}
    \left\{
        \begin{array}{lll}
            x_1 = (2+\cos(\theta))\cos(\phi), \\
            x_2 = (2+\cos(\theta))\sin(\phi), \\
            x_3 = \sin (\theta).
        \end{array}    
    \right.
\end{equation*}
The second set-up consists of clouds of 500 points drawn in two ways on the unit sphere of $\R^3$. The first sampling is uniform. The second sampling is a non-uniform sampling where we draw $\theta$ uniformly on $[0,\pi]$ and $\phi$ according to a normal distribution centred on~$\pi$. We obtain a point on the sphere via the classical spherical coordinates parametrization~$\Psi_{\mathbb{S}^2}: (\theta, \phi) \mapsto (x_1, x_2, x_3)$ where:
\begin{equation*}
    \left\{
        \begin{array}{lll}
            x_1 = \sin(\theta) \cos(\phi), \\
            x_2 = \sin(\theta) \sin(\phi), \\
            x_3 = \cos (\theta).
        \end{array}    
    \right.
\end{equation*}
In Figures \ref{fig:ECC_torus} and \ref{fig:HT_torus}, we show the Euler curves and their hybrid transforms with primitive kernel $\Kernel: s \mapsto \cos(s)$ for these two classes of samplings on the torus. Up to a rescaling, this corresponds to a Fourier sine transform. In Figure \ref{fig:HT_sphere}, we show the hybrid transforms for the two classes of samplings on the sphere.

In both cases, Euler curves associated with data drawn on the same manifold all have the same profile, with a minimum value that tends to be lower for the uniform sampling. Similarly, the oscillations of the transforms are in phase and have the same amplitude. However, from one manifold to another, the phase and amplitude of the oscillations of the transforms differ significantly. This suggests that they are related to global quantities and are signatures of the support manifold. In contrast, the sampling scheme shows up in the vertical shifts of the oscillations of the transforms. This interpretation allows us to go beyond the classical signal/noise dichotomy of persistence diagrams. Although it makes no doubt that this sampling information can be retrieved from the points close to the diagonal in the diagram, it is still unclear how to untangle the information on the sampling density itself and its support directly on the diagrams. 
\begin{figure}[hbtp]
    \centering
    \begin{subfigure}[t]{0.3\linewidth}
        \includegraphics[scale=0.34]{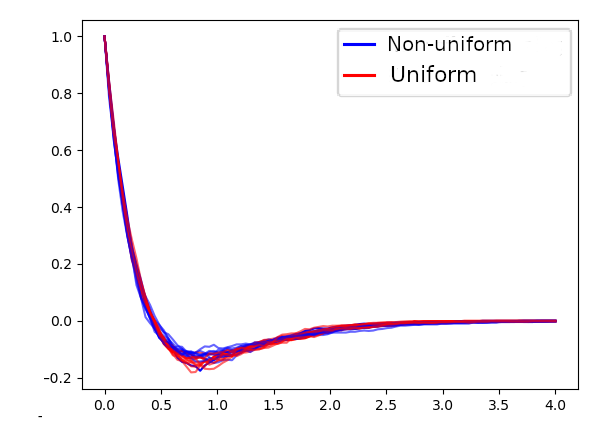}
        \caption{ECC, torus data}
        \label{fig:ECC_torus}
    \end{subfigure}
    \begin{subfigure}[t]{0.3\linewidth}
        \includegraphics[scale=0.34]{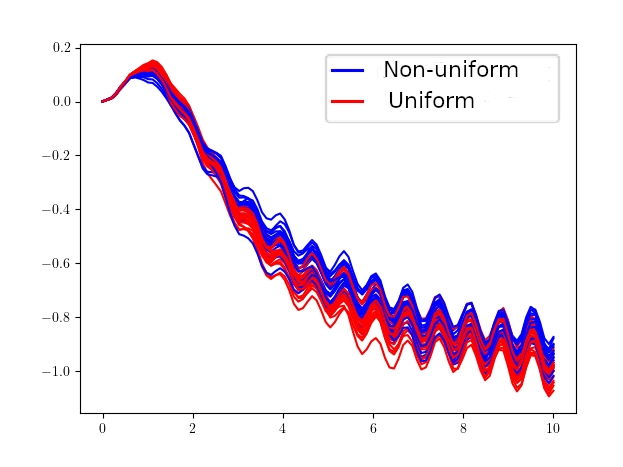}
        \caption{HT, torus data}
        \label{fig:HT_torus}
    \end{subfigure}
    \begin{subfigure}[t]{0.3\linewidth}
        \includegraphics[scale=0.34]{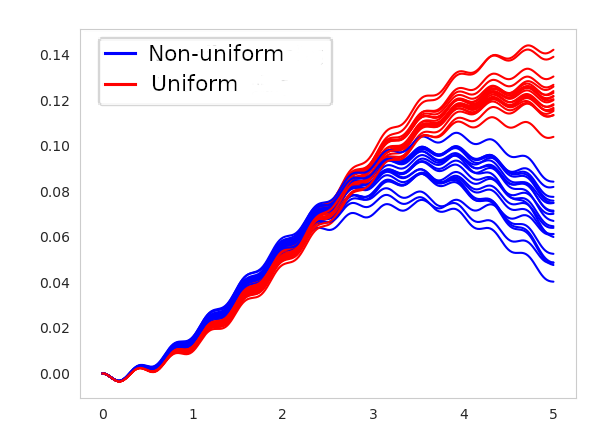}
        \caption{HT, sphere data}
        \label{fig:HT_sphere}
    \end{subfigure}
    \caption{ECC and HT, two sampling on the torus and the sphere}
    \label{fig:torus_illus}
\end{figure}

\subsubsection*{Signal in clutter noise}
In this final illustrative experiment, we try to distinguish patterns in a heavy clutter noise. One class has one line hidden in the noise, while the other has two. Each line will induce a very dense zone creating early dying connected components. In \Cref{fig:hidden_illus}, we plot two examples of point clouds, the Euler curves of each class, and their hybrid transform with primitive kernel $\Kernel: s \mapsto \exp(-s^4)$. We also provide PCA plots of these two descriptors. The difference between the two classes is visible at the beginning of the Euler characteristic curves. However, looking at the full curve does not allow us to correctly see this difference, as shown by the PCA plot. On the contrary, the transform puts a strong emphasis on the beginning of the Euler curves, leading to a direct linear separation of the two classes. As a final sanity check, we ran a k-means algorithm to cluster between the two classes and reached an accuracy of~$99 \%$ for the hybrid transforms and only $52.5 \%$ for the Euler curves.

\begin{figure}[hbtp]
	\centering
	\begin{subfigure}[t]{0.32\linewidth}
		\includegraphics[scale=0.35]{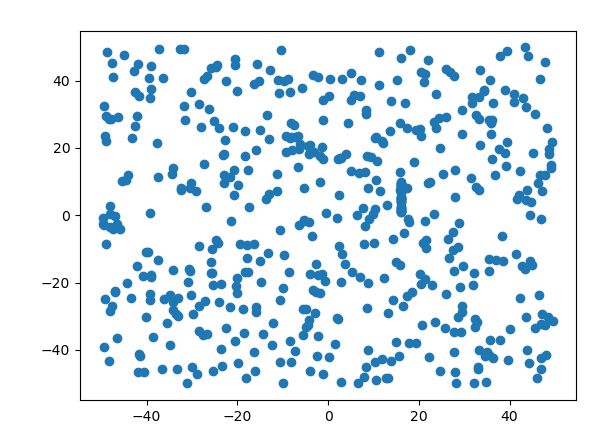}
		\caption{raw data, one hidden line}
	\end{subfigure}
	\begin{subfigure}[t]{0.32\linewidth}
		\includegraphics[scale=0.35]{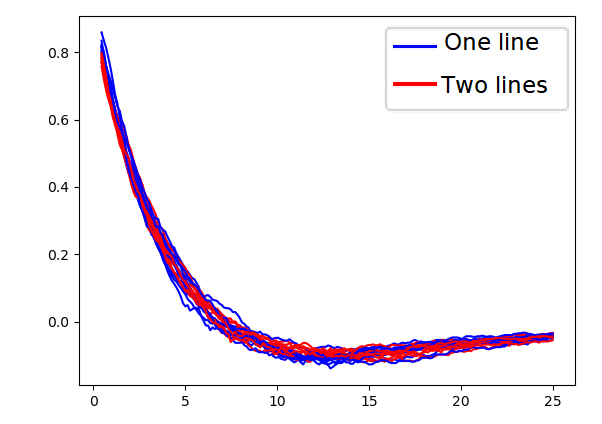}
		\caption{ECC}
	\end{subfigure}
	\begin{subfigure}[t]{0.32\linewidth}
		\includegraphics[scale=0.35]{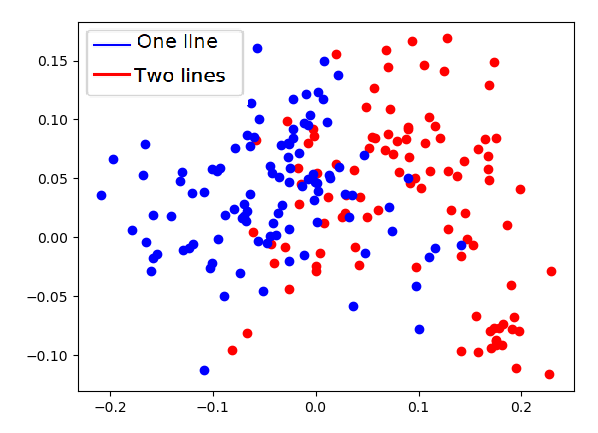}
		\caption{PCA on ECC}
	\end{subfigure}
	\\
	\begin{subfigure}[t]{0.32\linewidth}
		\includegraphics[scale=0.35]{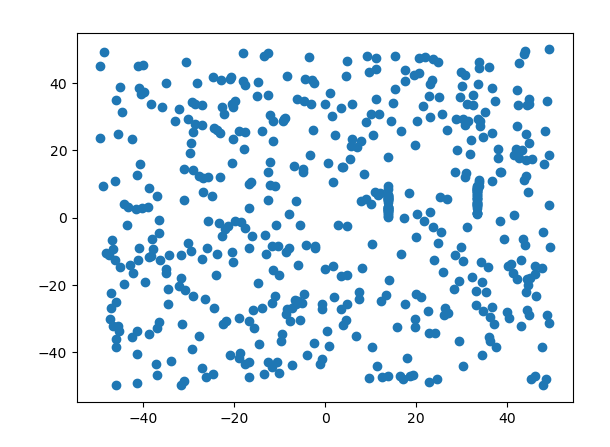}
		\caption{raw data, two hidden lines}
	\end{subfigure}
	\begin{subfigure}[t]{0.32\linewidth}
		\includegraphics[scale=0.35]{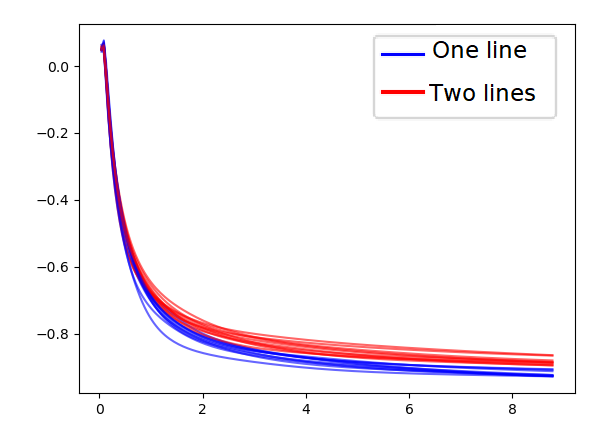}
		\caption{HT, $\Kernel(s)=\exp(-s^4)$}
	\end{subfigure}
	\begin{subfigure}[t]{0.32\linewidth}
		\includegraphics[scale=0.35]{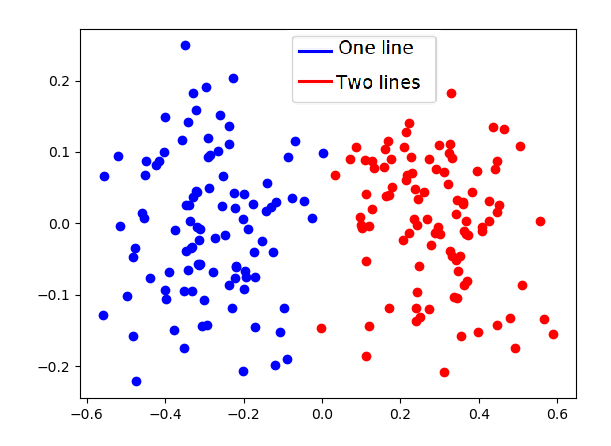}
		\caption{PCA on HT}
	\end{subfigure}
	\caption{Pattern hidden in clutter noise}
	\label{fig:hidden_illus}
\end{figure}

\section{Experiments}
\label{sec:expe}
In this section, we present all quantitative experiments conducted on synthetic and real-world point cloud data and on real graph data sets. Material to reproduce our experiments is available online on our GitHub repository: \url{https://github.com/vadimlebovici/eulearning}. Our timing experiments have been run on a workstation with an Intel(R) Core(TM) i7-4770 CPU (8 cores, 3.4GHz) and 8 GB RAM, running GNU/Linux (Ubuntu 20.04.1).

\subsection{Curvature regression}\label{sec:curv-reg}

We consider a set-up from \citet{bubenik2020persistent} where we draw $1000$ points uniformly at random on the unit disk of a surface of constant curvature $K$ and try to predict $K$ in a supervised fashion. Recall that if $K>0$ (resp. $K=0$, $K>0$), the corresponding surface is a sphere (resp. the Euclidean plane, the hyperbolic plane). We observe $101$ samples from the unit disk of the space with curvature~$[-2, -1.96, \ldots, 1.96, 2]$ and validate our model on a testing set of $100$ point clouds sampled from the disk of the space with random curvature drawn uniformly in $[-2, 2]$. We compare the $R^2$ scores in \Cref{score:curv} with that of the original paper, which uses persistent landscapes (PL) along with a support vector regressor (SVR) and with Persformer~\citep{reinauer2021persformer}. Note that since we are trying to tackle a regression problem, we use an SVR or a random forest regressor to predict the curvature from our vectorization.

\begin{table}[H]
\begin{center}
\begin{tabular}{ |c|c|c|c|c|c|c| } 

 \hline
 Method & PL+SVR & Persformer & ECC+SVR & ECC+RF & HT+SVR & HT+RF \\ 
 \hline
$R^2$ score & 0.78 & \textbf{0.94}  & 0.70 & 0.93 & 0.79 & 0.89 \\ 
\hline
\end{tabular}
\caption{$R^2$ score for curvature regression data}
\label{score:curv}
\end{center}
\end{table}

First, we remark that the ECC descriptor combined with a random forest has an accuracy comparable to state-of-the-art methods using persistence diagrams. We also remark that taking a transform does not improve the regression accuracy when considering a robust classifier such as RF but does improve the accuracy when using a linear regressor (SVR). Note that hybrid transforms combined with a linear regressor have an accuracy similar to that of persistent landscapes. However, persistent landscapes require the computation of the entire persistence diagrams, while hybrid transforms bypass this costly operation.

\subsection{\orbit{} data set}\label{sec:orbit}
\paragraph{Supervised classification.}

The \orbit{} data set is often used as a standard benchmark for classification methods in topological data analysis \citep{adams2017persistence,carriere2020perslay, reinauer2021persformer}. This data set consists of subsets of a thousand points in the unit cube $[0, 1]^2$ generated by a dynamical system that depends on a parameter $\rho>0$. To generate a point cloud, an initial point $(x_0, y_0)$ is drawn uniformly at random in $[0,1]^2$ and then the sequence of points $(x_n, y_n)$ for $n = 0, \ldots , 999$ is generated
recursively via the dynamic:
\begin{equation*}
\begin{array}{ll}
x_{n+1}=x_{n}+\rho y_{n}\left(1-y_{n}\right) & \bmod \hspace{0.5em} 1, \\[0.5em]
y_{n+1}=y_{n}+\rho x_{n+1}\left(1-x_{n+1}\right) & \bmod \hspace{0.5em} 1.
\end{array}
\end{equation*}%
In \Cref{fig:orbit_PC}, we illustrate typical orbits for $\rho \in\{2.5, 3.5,$ $4.0,4.1, 4.3 \}$. 

\begin{figure}[H]
    \centering
    \begin{subfigure}[t]{0.19\linewidth}
    \centering
        \includegraphics[scale=0.18]{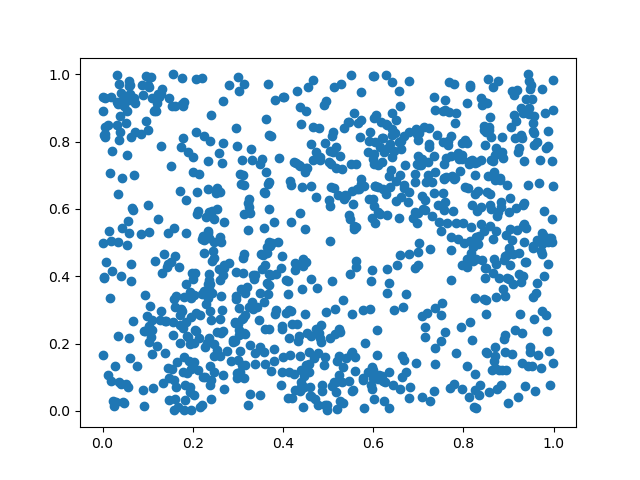}
        \caption{$\rho=2.5$}
    \end{subfigure}
    \begin{subfigure}[t]{0.19\linewidth}
    \centering
        \includegraphics[scale=0.18]{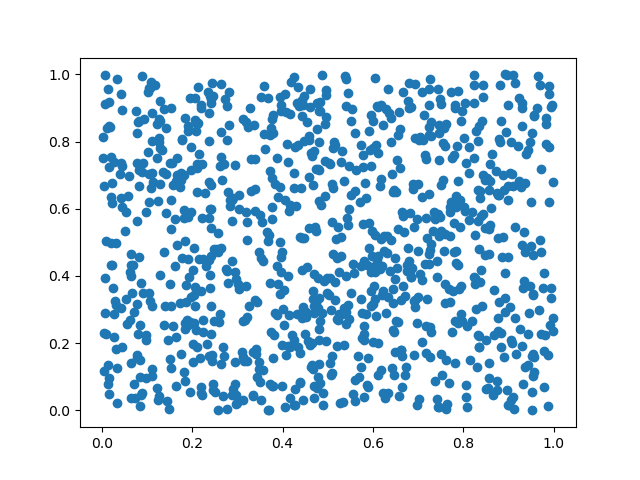}
        \caption{$\rho=3.5$}
    \end{subfigure}
    \begin{subfigure}[t]{0.19\linewidth}
    \centering
        \includegraphics[scale=0.18]{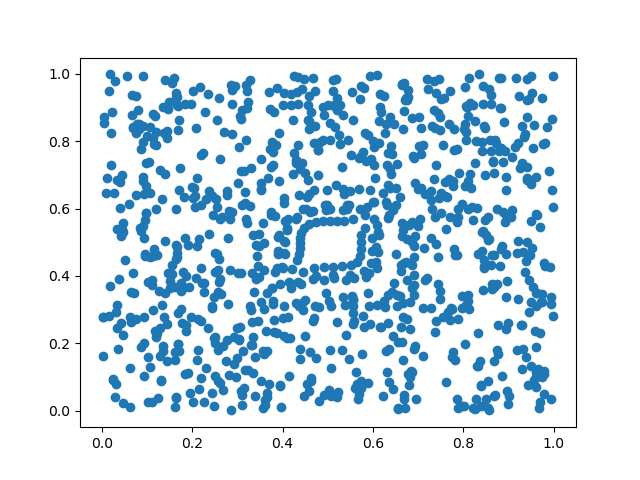}
        \caption{$\rho=4.0$}
    \end{subfigure}
    \begin{subfigure}[t]{0.19\linewidth}
    \centering
        \includegraphics[scale=0.18]{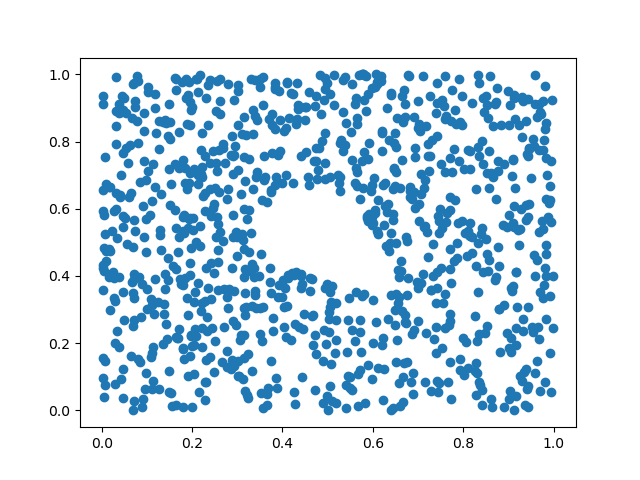}
        \caption{$\rho=4.1$}
    \end{subfigure}
    \begin{subfigure}[t]{0.19\linewidth}
    \centering
        \includegraphics[scale=0.18]{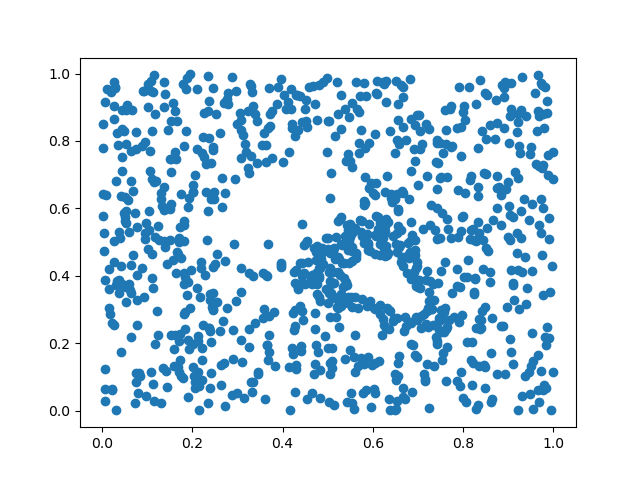}
        \caption{$\rho=4.3$}
    \end{subfigure}
    \caption{Examples of point clouds from the \orbit{} data set.}
    \label{fig:orbit_PC}
\end{figure}

Given an orbit of 1000 points, we try to predict the value of the parameter~$\rho$, which takes value in~$\{2.5, 3.5, 4.0, 4.1, 4.3 \}$. We generate 700 training and 300 testing orbits for each class. We compare our accuracy scores with standard classification methods using persistence diagrams in \Cref{tab:score-orbit}. The results are averaged over ten runs.  Sliced Wasserstein kernels (SW-K) and Persistence Fisher kernels (PF-K) are the two state-of-the-art kernel methods on persistence diagrams taken respectively from \citet{carriere2017sliced} and \citet{le2018persistence}. Perslay and Persformer are two methods that use a neural network architecture to vectorize persistence diagrams \citep{carriere2020perslay,reinauer2021persformer}. The Euler characteristic curves and one-parameter hybrid transforms (HT1) are computed on the alpha filtration of the point cloud. The Euler characteristic surfaces and two-dimensional hybrid transforms (HT2) are computed using a function-alpha filtration associated with a kernel density estimator post-composed with a decreasing function. The decreasing function is $x\mapsto -x$ for the ECSs and $x\mapsto \exp(-x^2)$ for the HTs. All descriptors have a resolution of 900 (hence of $30\times 30$ for two-parameter ones) and were classified using the XGBoost classifier \citep{chen2016xgboost}. We select the hyperparameters of our descriptors by cross-validation:
\begin{itemize}
    \item For the ECC, the quantiles (see \emph{Implementation} in \Cref{sec:algorithm}) are selected in $\{(0.1,0.9),$ $(0.2,0.8), (0.3,0.7)\}$.
    \item For the ECS, the quantiles are selected in the same set as for the ECC for both parameters.
    \item For the HT1, the range is selected in $\{[0,50],[0,100],[0,500], [0,1000]\}$ and the primitive kernel $\Kernel$ in $\{s\mapsto\exp(-s^4),$ $s\mapsto s^4\exp(-s^4), s\mapsto s^8\exp(-s^8)\}$.
    \item For the HT2, the primitive kernel and the range for the first parameter are the same as for the HT1, and the range for the second parameter is selected in $\{[0,50], [0,80],$ $[0,100],[0,500]\}$.
\end{itemize}

We show in \Cref{fig:2D_images} some examples of each descriptor renormalized by the number of points for the classes $\rho = 2.5$ and $\rho = 4.3$, where the HT2 is computed with $\Kernel: s \mapsto s^4 \exp(-s^4)$.

\begin{figure}[H]
	\centering
	\begin{subfigure}[t]{1\linewidth}
		\includegraphics[scale=0.36]{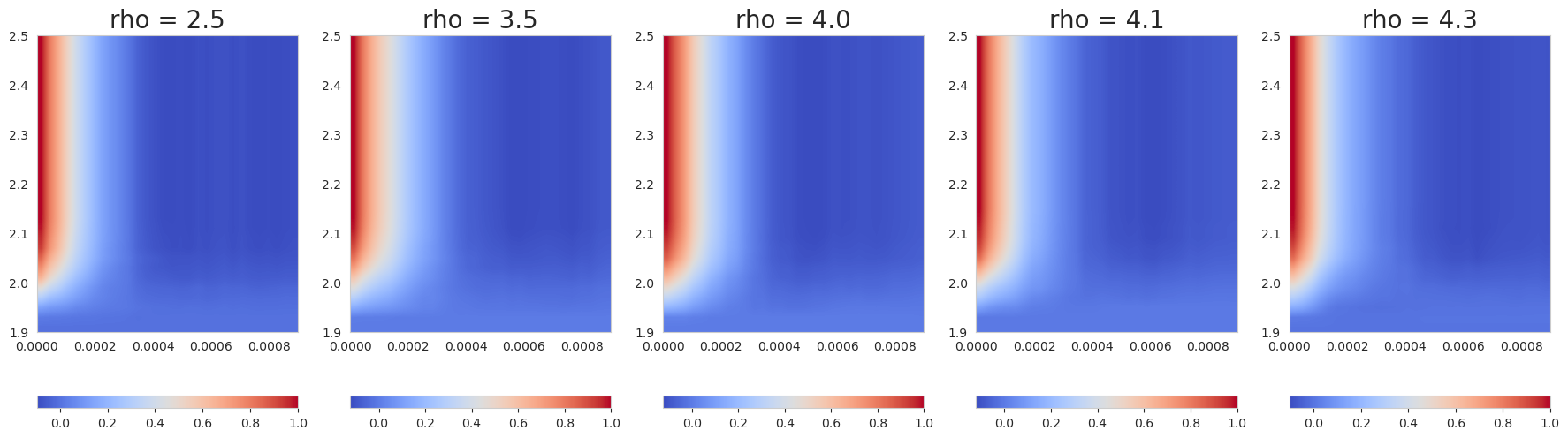}
		\caption{ECS}
	\end{subfigure}
	\\
    \begin{subfigure}[t]{1\linewidth}
		\includegraphics[scale=0.36]{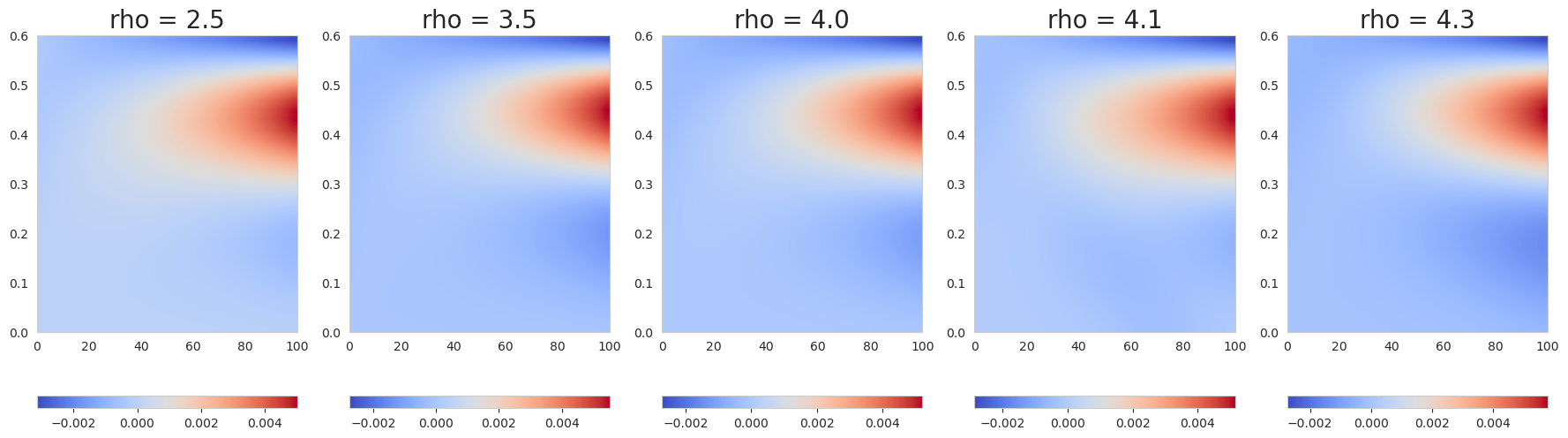}
		\caption{HT2}
	\end{subfigure}
	\caption{Examples of 2D descriptors}
	\label{fig:2D_images}
\end{figure}

\begin{table}[h]
\begin{center}
\hspace*{-0.3cm}
\begin{tabular}{ *{5}{|c}|} 
\hline
 Method 	& SW-K 				& PF-K 				& Perslay 			& Persformer \\ \hline
  Accuracy & 83.6 $\pm$ 0.9 	& 85.9 $\pm$ 0.8 	& 87.7 $\pm$ 1.0 	& 91.2 $\pm$ 0.8 \\ \hline\hline
 Method &  ECC + XGB 		& HT1 + XGB 		& \bestscore{ECS + XGB} 			& HT2 + XGB \\ \hline
  Accuracy & 83.8 $\pm$ 0.5 	& 82.8 $\pm$ 1.4 	& \bestscore{91.8 $\pm$ 0.4} 	 & 89.9 $\pm$ 0.5 	\\ \hline
\end{tabular}
\caption{Classification scores for the \orbit{} data set}
\label{tab:score-orbit}
\end{center}
\end{table}

One-parameter descriptors have accuracy similar to kernel methods on persistence diagrams at a reduced computational cost, while two-parameter descriptors compete with neural network-based vectorization methods. We make our claims on computational times more precise in \Cref{sec:timing}.

\paragraph{Ablation study.} We also study the role of the dimension of the feature vector in the supervised classification task. The results are shown in \Cref{fig:collapse-dim}. When plugging a random forest classifier, all descriptors are robust to a decrease in the size of the feature vector. However, hybrid transforms seem to maintain a competitive accuracy for low-dimensional features, especially the two-parameter ones. When using an SVM classifier for the one-parameter descriptors, the gain from considering a hybrid transform is clear, and the accuracy of the SVM benefits from this strong dimension reduction. Evaluating hybrid transforms at only three values of $\xi\in\dual{\R}_+$ yields feature vectors achieving approximately $80 \%$ accuracy, demonstrating the compression properties of this tool.

\begin{figure}[H]
	\centering
	\begin{subfigure}[t]{0.3\linewidth}
		\includegraphics[scale=0.35]{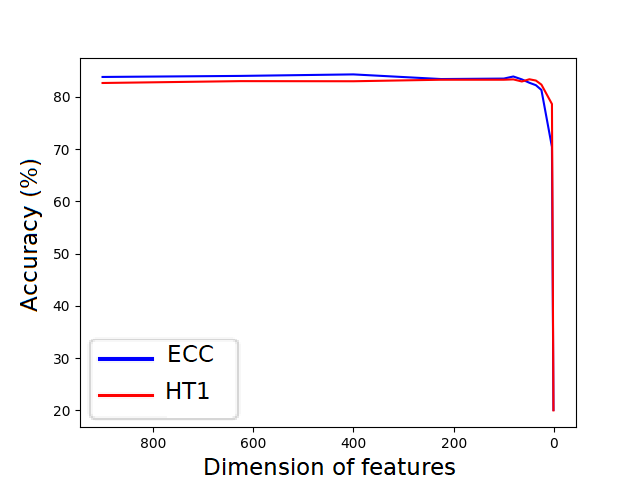}
		\caption{one-parameter, RF classifier}
	\end{subfigure}
	\begin{subfigure}[t]{0.3\linewidth}
		\includegraphics[width=0.91\linewidth]{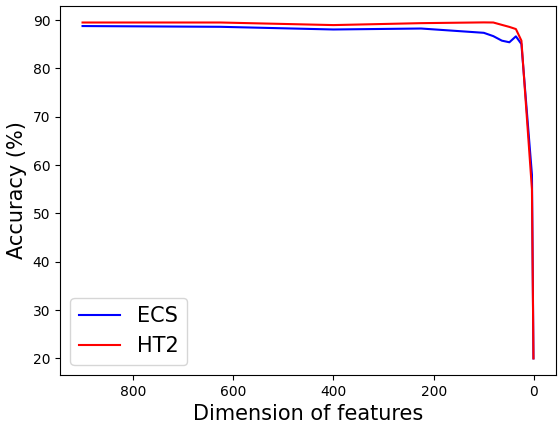}
		\caption{two-parameter, RF classifier}
	\end{subfigure}
	\begin{subfigure}[t]{0.3\linewidth}
		\includegraphics[scale=0.35]{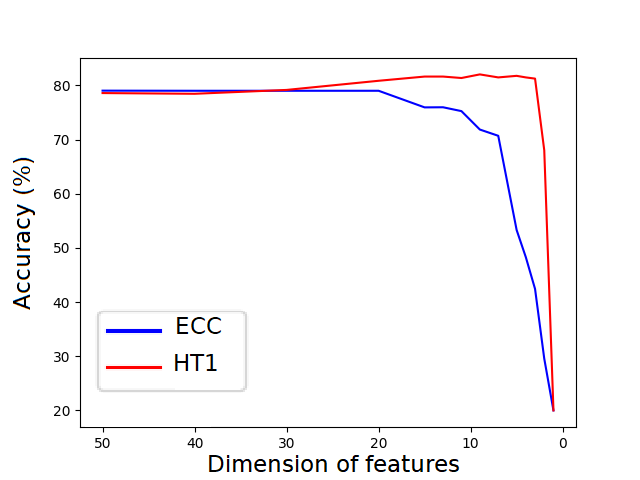}
		\caption{one-parameter, SVM classifier}
	\end{subfigure}
	\caption{Accuracy with respect to feature dimension.}
	\label{fig:collapse-dim}
\end{figure}

\subsection{Sydney object recognition data set}
\label{sec:sidney}
The Sydney urban objects recognition data set consists of 3D point clouds of everyday urban road objects scanned with a LIDAR \citep{de2013unsupervised} traditionally used for multi-class classification. Likewise to \Cref{sec:orbit}, all descriptors are computed using a function-alpha filtration associated with a kernel density estimator post-composed with a decreasing function.

\paragraph{Unsupervised setting.} In \Cref{fig:sid_pca}, we show a PCA of the ECSs and HTs on the classes \textit{4-wheeler vehicles} (labelled 0), \textit{buses} (2), \textit{cars} (3), and \textit{pedestrians} (4). In this case, the ECSs separate the class of pedestrians from all the vehicle classes. The same separation is achieved by the HTs with primitive kernel $\Kernel: s \mapsto s^4 \exp(-s^4)$. In contrast, HTs with primitive kernel $\Kernel: s  \mapsto \exp(-s^4)$ separate buses from other classes. These experiments illustrate the flexibility provided by a broad choice of kernels for the hybrid transforms. 

\paragraph{Supervised setting.}
Even more striking are the experiments from \Cref{fig:sid_lda}. We perform a Linear Discriminant Analysis for classes \textit{cars} (3), \textit{pedestrians} (4), and \textit{vans}~(13) to embed the HTs and ECSs in $\mathbb{R}^2$. All the classes are separated by the HTs with primitive kernel $\Kernel: s\mapsto s^4\exp(-s^4)$. In comparison, the ECSs only manage to separate the pedestrian class from the two motor-vehicle classes.

\begin{figure}[H]
	\centering
	\begin{subfigure}[t]{0.3\linewidth}
		\includegraphics[scale=0.36]{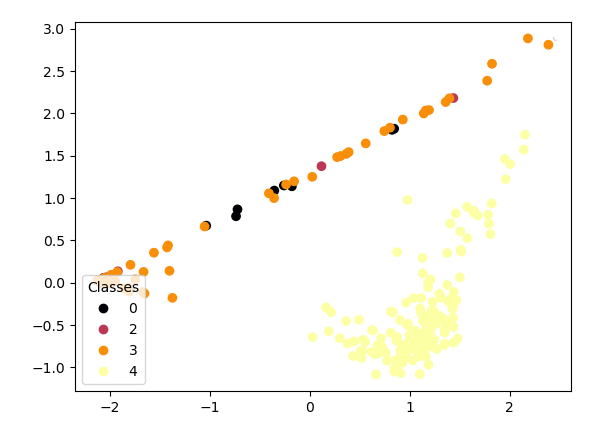}
		\caption{ECS}
	\end{subfigure}
	\begin{subfigure}[t]{0.3\linewidth}
		\includegraphics[scale=0.36]{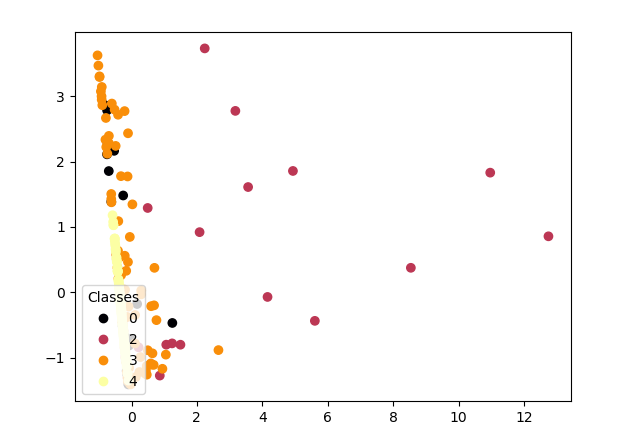}
		\caption{HTs, $\Kernel(s) = \exp(-s^4)$}
	\end{subfigure}
	\begin{subfigure}[t]{0.3\linewidth}
		\includegraphics[scale=0.36]{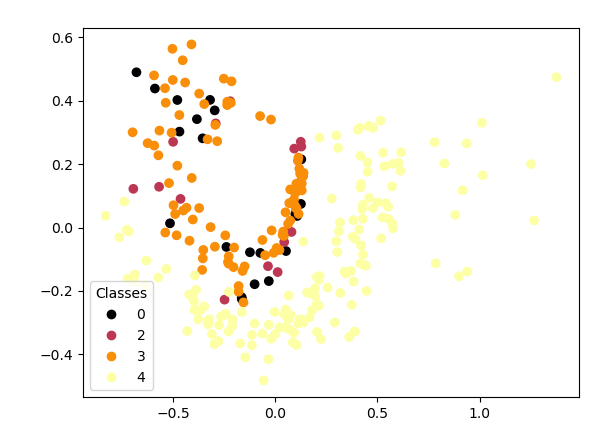}
		\caption{HTs, $\Kernel (s) = s^4 \exp(-s^4)$}
	\end{subfigure}
	\caption{PCA plots of ECSs and HTs for the Sydney object recognition data set.}
	\label{fig:sid_pca}
\end{figure}
	
\begin{figure}[H]
    \centering
    \begin{subfigure}[t]{0.32\linewidth}
        \centering
        \includegraphics[scale=0.35]{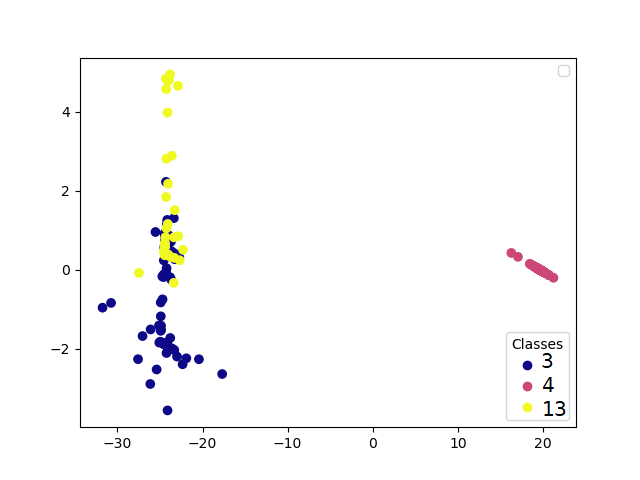}
        \caption{ECS}
    \end{subfigure}
    \begin{subfigure}[t]{0.32\linewidth}
            \centering
        \includegraphics[scale=0.35]{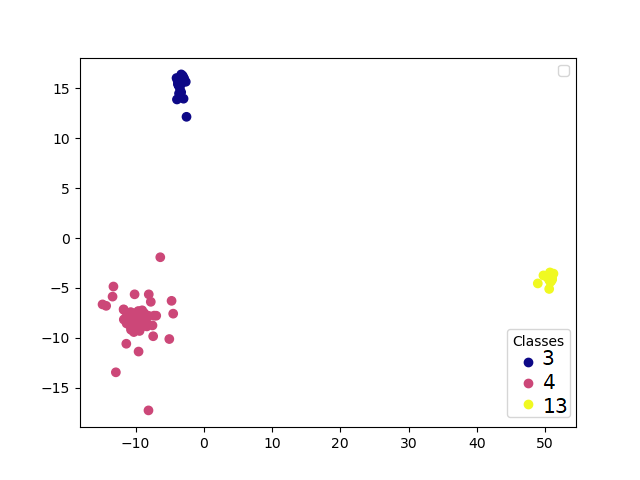}
        \caption{HTs, $\Kernel(s)=s^4\exp(-s^4)$}
    \end{subfigure}
    \caption{LDA plots of ECSs and HTs for the Sydney object recognition data set.}
    \label{fig:sid_lda}
\end{figure}

\subsection{Graph data}\label{sec:graph}
We have applied our method to the supervised classification of graph data. To build sublevel-sets filtrations of graphs, we consider the heat-kernel signature introduced in \citet{sun2009concise} and defined as follows. For a graph~$\mathcal{G} = (V, E)$, the \emph{HKS function with diffusion parameter $t$} is defined for each $v \in V$ by:
\begin{equation*}
\mathrm{hks}_t(v) = \sum_{k=1}^{|V|} \exp(-t \lambda_k) \psi_k(v)^2,
\end{equation*}
where $\lambda_k$ is the $k$-th eigenvalue of the normalized graph Laplacian and $\psi_k$ the corresponding eigenfunction. We consider the HKS with parameters $t=1$ and $t=10$ as filtrations. We also consider the $1/2-$Ricci and Forman curvatures \citep{samal2018comparative}, centrality, and edge betweenness on connected graphs. In addition, some data sets (\textsc{proteins}, \textsc{cox2}, \textsc{dhfr}) come with functions defined on the graph nodes. We can use several combinations of these functions to define sublevel-sets filtrations of graphs and compute Euler characteristic profiles (ECP) and hybrid transforms (HTn). 

For this set of experiments, we cross-validate over several combinations of the filtration functions proposed above, several truncations of the vectorization (which had little impact in practice), and a primitive kernel chosen among $\{ s \mapsto \cos(s), s \mapsto \cos(s^2), s \mapsto \exp(-s^4), s \mapsto s^4 \exp(-s^4)\}$ for HTn. We report our scores in Table \ref{score:graphs}. The first four methods are state-of-the-art classification methods on graphs that use kernels or neural networks. We report the scores from the original papers, \citet{tran2018scale, zhang2018retgk, verma2017hunt, xu2018powerful}. Perslay \citep{carriere2020perslay}, and Atol \citep{royer2021atol} are topological methods that transform the graphs into persistence diagrams using HKS functions. It is known that Atol performs especially well on large data sets (both in terms of number of data and graphs size), i.e., \textsc{collab} and \textsc{NCI1}. Still, we reach a similar to better accuracy for all the other data sets. 

Besides highly competitive classification scores, our method has two advantages over the other topological methods. First, we bypass the computation of persistence diagrams and thus classify with lower computational cost; see~\Cref{sec:algorithm,sec:timing}. Second, as opposed to other invariants such as multi-parameter persistent images~\citep{carriere2020multiparameter}, our method naturally generalizes to $m$-parameter persistence with $m\geq 3$ at a very low computational cost. To our knowledge, this is the first time a topology-based method uses more than 3 filtration parameters. This results in an increase in accuracy since each filtration function leverages information on the graph-data structures.

Note that the methods SV, FGSD, and GIN do not average ten times and rather consider a single 10-fold sample which can slightly boost their accuracies.

    \begin{table}[]
    \begin{center}
        \resizebox{\textwidth}{!}{\begin{tabular}{ |c||c|c|c|c|c|c|c|c|c|c|c| } 

\hline
Method & \textsc{mutag}  & \textsc{cox2} & \textsc{dhfr}  & \textsc{proteins}  & \textsc{collab} & \textsc{imdb-b}  & \textsc{imdb-m}  & \textsc{nci1}                                \\ \hline \hline
SV      & 88.2(0.1)   & 78.4(0.4)  & 78.8(0.7)  & 72.6(0.4) & 79.6(0.3)  & 74.2(0.9)                           & 49.9(0.3)                      & 71.3(0.4)                           \\ \hline
RetGK   & 90.3(1.1)                      & \textbf{81.4(0.6)} & 81.5(0.9)                           & \textbf{78.0(0.3)} & 81.0(0.3)                           & 71.9(1.0)                           & 47.7(0.3)                      & \textbf{84.5(0.2)} \\ \hline
FGSD    & \textbf{92.1} & -                                   & -                                   & 73.4      & 80.0                                & 73.6                                & \textbf{52.4} & 79.8                                \\ \hline
GIN     & 90(8.8)                        & -                                   & -                                   & 76.2(2.6) & 80.6(1.9)                           & \textbf{75.1(5.1)} & 52.3(2.8)                      & 82.7(1.6)                           \\ \hline
Perslay & 89.8(0.9)                      & 80.9(1.0)                           & 80.3(0.8)                           & 74.8(0.3) & 76.4(0.4)                           & 71.2(0.7)                           & 48.8(0.6)                      & 73.5(0.3)                           \\ \hline
Atol    & 88.3(0.8)                      & 79.4(0.7)                           & 82.7(0.7)                           & 71.4(0.6) & \textbf{88.3(0.2)} & 74.8(0.3)                           & 47.8(0.7)                      & 78.5(0.3)                           \\ \hline \hline
ECC 1D  & 87.2(0.7)                      & 78.1(0.2)                           & 79.4(0.5)                           & 74.7(0.4) & 77.3(0.2)                           & 72.4(0.4)                           & 48.5(0.3)                      & 74.4(0.2)                           \\ \hline
HT 1D   & 87.4(0.8)                      & 78.1(0.2)                           & 77.9(0.4)                           & 73.3(0.4) & 78.2(0.2)                           & 73.9(0.4)                           & 49.7(0.4)                      & 73.9(0.2)                           \\ \hline
ECP     & 90.0(0.8)                      & 80.3(0.4)                           & 82.0(0.4)                           & 75.0(0.3) & 78.3(0.1)                           & 73.3(0.4)                           & 48.7(0.4)                      & 76.3(0.1)                           \\ \hline
HT nD   & 89.4(0.7)                      & 80.6(0.4)                           & \textbf{83.1(0.5)} & 75.4(0.4) & 77.6(0.2)                           & 74.7(0.5)                           & 49.9(0.4)                      & 76.4 (0.2)   \\ \hline 

 
\end{tabular}}
        \captionof{table}{Mean accuracy and standard deviation for graph data.}
        \label{score:graphs}
        \end{center}
        
    \end{table}

\subsection{Timing}
\label{sec:timing}
 In this section, we compare the computational cost of our different methods to that of persistence images, a well-known vectorization of persistence diagrams introduced in \citet{adams2017persistence} and generalized to the multi-parameter setting in \citet{carriere2020multiparameter}. We choose to compare the computational cost of our methods to that of persistence images as they appear to be a faster vectorization method than persistence kernels and persistence landscapes; see \cite[Table~2]{carriere2020multiparameter}. 

\paragraph{Constant resolution.} We report in \Cref{tab:timing-cst-resolution} the time to compute our descriptors and persistent images on the full \orbit{} data set with a fixed resolution of 900. We assume that simplex trees are precomputed\footnote{Note that computing simplex trees takes around 66s in the one-parameter setting and around 420s in the two-parameter setting; the difference lies in the cost of computing codensity on point clouds.} using the \gudhi{} library \citep{gudhiAlphaComplex}. Our descriptors are computed using the parameters achieving the highest accuracy for the classification task; see \Cref{sec:orbit}. Persistence images are computed with the \gudhi{} library for one-parameter filtrations and with the \mma{} package for two-parameter filtrations \citep{MMA} with default parameters and the same resolution as our two-parameter descriptors, i.e., $30\times 30$. To compute persistence images, one first needs to compute the persistence diagrams of simplex trees in the one-parameter case or persistence approximations in the two-parameter case \citep[Section~3]{loiseaux2022efficient}. We include these additional costs in the computational times of persistent images. However, the time to compute the PI1 descriptor on the full \orbit{} data set breaks down to 5 seconds to compute the persistence diagrams and 134 seconds for the persistence images themselves.

\begin{table}[H]
    \centering
    \begin{tabular}{*{3}{|c}||*{3}{c|}} 
        \hline
        ECC & HT1  & PI1      & ECS     & HT2  & PI2 \\ \hline
        16  & 719   & 139       & 144    & 805   & 2034\\
        \hline
    \end{tabular}
    \caption{Computation times (s) for \orbit{} with constant resolution.}
    \label{tab:timing-cst-resolution}
\end{table}

As expected from the time complexities of the algorithms (\Cref{sec:algorithm}), Euler characteristic profiles are at least ten times faster than persistence images to compute, and hybrid transforms are four times faster in the two-parameter case. One-parameter hybrid transforms may appear costly to compute, but this point will be mitigated in the next paragraph. Finally, we point out that we implemented our tools in \python{} and not in \cpp{}, which is very likely to result in longer computation times. On the contrary, persistence images in one and two parameters both benefit from a \cpp{} implementation.

\paragraph{Constant accuracy.} We report in \Cref{tab:timing-cst-accuracy} the time to compute our descriptors on the full \orbit{} data set with the lowest resolution before accuracy drop-out as reported in \Cref{fig:collapse-dim}. More precisely, we chose the lowest possible resolutions to ensure a classification accuracy of 82\% for one-parameter descriptors and of 89\% for two-parameter descriptors, that is, a resolution of 30 for ECC, of 9 for HT1, of $20\times 20$ for ECS and of $6\times 6$ for HT2. Other parameters remain unchanged. The interest in using hybrid transforms over Euler characteristic profiles is now clear: the concentration of information provided by hybrid transforms makes it possible to classify the data set with feature vectors of reduced dimension, which considerably speeds up the computations. 

\begin{table}[!h]
\begin{center}
\begin{tabular}{*{2}{|c}||*{2}{c|}} 
\hline
ECC & HT1    & ECS     & HT2  \\ \hline
 16 &  5     & 135    & 69 \\
\hline
\end{tabular}
\caption{Computation times (s) for \orbit{} with smallest resolution before accuracy drop-out.}
\label{tab:timing-cst-accuracy}
\end{center}
\end{table}

\subsection{Take-home message}\label{sec:take-home}
The experiments from this section suggest that Euler characteristic profiles are very powerful descriptors since they allow for state-of-the-art accuracy when coupled with a robust classifier (XGB or RF) at a very competitive computational cost. Hybrid transforms have similar accuracy but are more costly to compute, especially in the one-parameter setting; see \Cref{tab:timing-cst-resolution}. The motivation to use hybrid transforms is two-fold:
\begin{itemize}
    \item In an unsupervised setting or when plugging a linear classifier, the lack of diversity in Euler characteristic profiles can be detrimental to the separation of classes. In contrast, hybrid transforms are competitive descriptors in such tasks due to the wide diversity in the choice of kernels and their sensitivity to slight variations in Euler characteristic profiles.
    \vspace{0.5em}
    \item Hybrid transforms provide a very powerful compression of the signal from the Euler profiles (\Cref{fig:collapse-dim}) at a meagre computational cost (\Cref{tab:timing-cst-accuracy}). This makes hybrid transforms robust descriptors combining dimension reduction and feature extraction.
\end{itemize}

Theoretically, multi-parameter hybrid transforms benefit from their expression as one-parameter ones (\Cref{lem:HTn_to_HT1}). This allows us to prove almost sure convergence results under some mild assumptions in \Cref{sec:stat}.

\subsection{Extensions}
We have validated our method on simplicial complexes built on point clouds and graph data. Nonetheless, the methodology described in this paper can be extended into two directions. 

First, when dealing with images or 3D volumes, it is common to build cubical complexes from data. In this context, Euler characteristic curves have been used as a vectorization of the data in \citet{smith2021euler, jiang2020weighted}. As there are a vast number of filtration functions one can consider on images, it is worth investigating the predictive power of the Euler characteristic profiles in this setting. While several applications are considered in \citet{richardson2014efficient, beltramo2022euler,DG22}, a thorough benchmark against other persistence methods and state-of-the-art image processing methods is still missing. Moreover, hybrid transforms have still not been studied in this context. 
    
Second, the methodology developed here applies to filtrations $\filt=(\filt_t)_{t\in\R^m}$ that are not necessarily non-decreasing with respect to inclusions. This extends the potential range of applications of our tools, notably to the study of time-varying simplicial complexes, as done in \citet{xian2022capturing}.

\section{Stability properties}\label{sec:stability}
The success of topological data analysis inherits from the stability theorem for persistence diagrams from \citet{cohen2007stability}. Loosely speaking, it means that under mild assumptions, small changes in the filtration function imply small changes in the diagram. Such results are crucial to designing consistent estimators; see, for instance, \citet{bobrowski2017topological}. Over the past decade, more distances on persistence diagrams have been introduced. Inspired by optimal transport theory, the notion of $p$-Wasserstein distance is introduced by~\citet{cohen2010lipschitz} where a stability result is also proven. A finer stability result for the $p$-Wasserstein distance can be found in \citet{skraba2020wasserstein}. In addition, several stability results for Euler characteristic tools have been derived in \citet{curry2022many,DG22,perez2022euler,meng24,marsh2023}.

In this section, we state stability results for our topological descriptors. Our results compare the $L^1$ norm between Euler characteristic profiles to the signed $1$-Wasserstein distance between their so-called \emph{signed barcodes}. As a corollary, we bound the $L^q$ norms of hybrid transforms by the same quantity. To continue our comparison with persistence diagrams, we prove that in the one-parameter case, the signed 1-Wasserstein distance between signed barcodes is bounded from above by the well-known 1-Wasserstein distance between persistence diagrams.

The notions of signed barcodes and of signed $1$-Wasserstein distance have been introduced in \citet{oudot2021stability} and are recalled below. We follow the same conventions as in \citet[Section~2]{oudot2021stability} for the definitions of multisets and bijections between them. The rest of the section is devoted to the statement of our stability results. All proofs are written in \Cref{sec:proofs-stability}.

\paragraph{Signed $1$-Wasserstein distance.}
The distance we use to state our stability results is defined on the class of \emph{finitely presented} functions over $\R^m$, that is, which can be written as a finite $\Z$-linear combination of indicator functions $\1_{Q_u}$ for some $u\in\R^m$. These functions include Euler characteristic profiles of finitely generated filtrations (\Cref{lem:ECP-is-FP}). We denote by $\FP$ the group of finitely presented functions over~$\R^m$. These functions have a kind of diagram (or barcode) that can be used to define an analogue of the $1$-Wasserstein distance. A \emph{decomposition} of $\phi\in\FP$ is a couple~$(\barcode^+, \barcode^-)$ of finite multisets of points in $\R^m$ such that:
\begin{equation*}
    \phi = \sum_{u\in\barcode^+} \1_{Q_{u}} - \sum_{v\in\barcode^-} \1_{Q_{v}}. 
\end{equation*}
Such a decomposition always exists, and there is a unique $\sbarcode = (\barcode^+, \barcode^-)$ such that~$\barcode^+\cap \barcode^- = \emptyset$, called the \emph{signed barcode of $\phi$}; see \cite[Proposition~13]{oudot2021stability}. While two different notions of signed barcode are defined in loc. cit., we focus here on the so-called \emph{minimal Hilbert decomposition signed barcode}. 

Let $\mathcal{C}$ and $\mathcal{C}'$ be two finite multisets of points in $\R^m$ with the same cardinality and~$h:\mathcal{C} \to \mathcal{C}'$ be a bijection between them. The \emph{cost} of $h$ is the real number $\cost(h) = \sum_{u\in\mathcal{C}} \|u - h(u)\|_1$. For any two finitely presented functions $\phi$ and $\phi'$ with respective signed barcodes $(\barcode^+, \barcode^-)$ and~$(\barcode'^+, \barcode'^-)$, the \emph{signed 1-Wasserstein distance} between them is:
\begin{equation*}
    \dist\big(\phi, \phi'\big) = \inf \left\{ \eps>0 \st \exists \textnormal{ bijection } h:\barcode^+\cup\barcode'^- \to \barcode^-\cup\barcode'^+ \textnormal{ with } \cost(h) \leq \eps \right\}.
\end{equation*}%
Hence, one has $\dist\big(\phi, \phi'\big)\in[0,+\infty]$. 
Note that bijections do not allow for unmatched bars, as it is common in the persistence literature. In loc. cit., the signed 1-Wasserstein distance is defined on signed barcodes. Our definition is essentially equivalent since signed barcodes are in one-to-one correspondence with finitely presented functions up to forgetting the order in the multisets.

\paragraph{Stability results.} 
We prove stability results involving functional norms on Euler characteristic profiles and their hybrid transforms. The case $m=1$ is well known for 1-Wasserstein distance on persistence diagrams; see \citet[Lemma~4.10]{curry2022many}, \citet[Proposition~3.2]{DG22}.
\begin{proposition}\label[proposition]{prop:stability-ECP}
    Let $\filt$ and $\filt'$ be two finitely generated $m$-parameter filtrations of simplicial complexes~$\cplx$ and $\cplx'$ respectively. For any $M>0$, we have that
    \begin{equation*}
        \|\ECP[\filt] - \ECP[\filt']\|_{1,M} \leq (2M)^{m-1} \, \dist[\ECP[\filt], \ECP[\filt']].
    \end{equation*}
    In particular, if $m=1$:
    \begin{equation*}
        \|\ECP[\filt] - \ECP[\filt']\|_{1} \leq \dist[\ECP[\filt], \ECP[\filt']].
    \end{equation*}
\end{proposition}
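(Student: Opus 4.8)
The plan is to reduce the $m$-dimensional estimate to a single-orthant comparison and then optimize over bijections. I would first invoke \Cref{lem:ECP-is-FP} to know that both $\ECP[\filt]$ and $\ECP[\filt']$ lie in $\FP$, hence admit signed barcodes $(\barcode^+,\barcode^-)$ and $(\barcode'^+,\barcode'^-)$. If the combined multisets $\barcode^+\cup\barcode'^-$ and $\barcode^-\cup\barcode'^+$ have different cardinalities then no bijection exists, so $\dist[\ECP[\filt],\ECP[\filt']]=+\infty$ and the inequality is vacuous; I would therefore assume equal cardinalities and fix an arbitrary bijection $h\colon\barcode^+\cup\barcode'^-\to\barcode^-\cup\barcode'^+$. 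Grouping the four indicator sums defining the two profiles according to their signs gives the decomposition
\begin{equation*}
    \ECP[\filt]-\ECP[\filt'] = \sum_{w\in\barcode^+\cup\barcode'^-}\1_{Q_w} - \sum_{z\in\barcode^-\cup\barcode'^+}\1_{Q_z} = \sum_{w\in\barcode^+\cup\barcode'^-}\left(\1_{Q_w}-\1_{Q_{h(w)}}\right),
\end{equation*}
the last step merely reindexing the negative sum through $h$. Applying the triangle inequality for $\|\cdot\|_{1,M}$ then reduces everything to bounding each term $\|\1_{Q_w}-\1_{Q_{h(w)}}\|_{1,M}$.

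The heart of the proof is the single-orthant estimate $\|\1_{Q_w}-\1_{Q_z}\|_{1,M}\leq (2M)^{m-1}\|w-z\|_1$, which I would establish by hybridizing coordinate by coordinate. Letting $w^{(k)}$ denote the point whose first $k$ coordinates are those of $z$ and whose last $m-k$ are those of $w$, so $w^{(0)}=w$ and $w^{(m)}=z$, I would use the telescoping identity
\begin{equation*}
    \1_{Q_w}-\1_{Q_z} = \sum_{k=1}^m\left(\1_{Q_{w^{(k-1)}}}-\1_{Q_{w^{(k)}}}\right).
\end{equation*}
The orthants $Q_{w^{(k-1)}}$ and $Q_{w^{(k)}}$ differ only in their $k$-th constraint, so their symmetric difference is a slab of width at most $|w_k-z_k|$ in the $k$-th coordinate; intersecting with $[-M,M]^m$ bounds each of the other $m-1$ coordinates by $2M$, giving $\|\1_{Q_{w^{(k-1)}}}-\1_{Q_{w^{(k)}}}\|_{1,M}\leq (2M)^{m-1}|w_k-z_k|$. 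Summing over $k$ yields the single-orthant bound. This symmetric-difference volume computation is the main technical point and is exactly where the factor $(2M)^{m-1}$ originates.

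To conclude, combining the two steps gives $\|\ECP[\filt]-\ECP[\filt']\|_{1,M}\leq (2M)^{m-1}\sum_w\|w-h(w)\|_1=(2M)^{m-1}\cost(h)$. Since $h$ is an arbitrary bijection between the combined barcodes, taking the infimum over all such $h$ turns $\cost(h)$ into $\dist[\ECP[\filt],\ECP[\filt']]$ and proves the general case. For $m=1$ the prefactor is $(2M)^0=1$; moreover the symmetric difference of two half-lines $[w,\infty)$ and $[z,\infty)$ already has finite length $|w-z|$ independent of $M$, so I would let $M\to\infty$ to replace $\|\cdot\|_{1,M}$ by the full $L^1$ norm and recover the stated particular case.
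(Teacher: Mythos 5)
Your proof is correct and follows essentially the same route as the paper's: the same reindexing of the two signed sums through a bijection $h$ to write $\ECP[\filt]-\ECP[\filt']$ as $\sum_{u}\bigl(\1_{Q_u}-\1_{Q_{h(u)}}\bigr)$, the same reduction to the single-orthant estimate $\|\1_{Q_u}-\1_{Q_v}\|_{1,M}\leq (2M)^{m-1}\|u-v\|_1$, and the same handling of the no-bijection case and of $m=1$. The only difference is cosmetic: you spell out the coordinate-by-coordinate telescoping and slab-volume computation behind that estimate, which the paper leaves as an ``elementary induction on $m$.''
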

This stability result for Euler characteristic profiles implies a similar stability result for hybrid transforms, as stated in the following corollary.
\begin{corollary}\label[corollary]{cor:stability-Rdn-ht}
    Let $K$ be a compact subset of $\dual{\R_+^m}$ and $q\in[1,\infty]$. Let $\filt$ and $\filt'$ be one-critical $m$-parameter filtrations of simplicial complexes~$\cplx$ and $\cplx'$ respectively. Let~$\kernel\in\Lp{\R}\cap\Lp[\infty]{\R}$. There exists a constant $C_{K,q}$ depending only on $K$ and $q$ such that:
    \begin{equation*}
         \| \HT[\filt] - \HT[\filt']\|_{L^q_K} \leq C_{K,q} \, \|\kernel\|_\infty \, \dist[\ECP[\filt], \ECP[\filt']].
    \end{equation*}
\end{corollary}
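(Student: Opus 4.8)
The plan is to reduce the statement to a uniform pointwise estimate on $K$ and then integrate. For a fixed $\xi\in K$, I would start from the very definition of the hybrid transform and pull out the kernel in sup-norm:
\begin{equation*}
    |\HT[\filt](\xi) - \HT[\filt'](\xi)| = \left| \int_\R \kernel(s)\big(\xi_*\ECP[\filt] - \xi_*\ECP[\filt']\big)(s)\ds \right| \leq \|\kernel\|_\infty \, \big\| \xi_*\ECP[\filt] - \xi_*\ECP[\filt'] \big\|_1 .
\end{equation*}
This reduces the problem to controlling the $L^1$ distance between the two \emph{one-parameter} pushed-forward Euler curves. If $\dist[\ECP[\filt], \ECP[\filt']] = +\infty$ the asserted inequality is trivial, so I may assume the two signed barcodes have matching total cardinalities; the difference above is then compactly supported and integrable, and the manipulation is justified.

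The key step is to bound this $L^1$ distance by the signed $1$-Wasserstein distance, contracted by the factor $\|\xi\|_\infty$. Writing the signed barcodes of $\ECP[\filt]$ and $\ECP[\filt']$ as $(\barcode^+,\barcode^-)$ and $(\barcode'^+,\barcode'^-)$, I would use that the pushforward extends to a well-defined linear map $\FP \to \FP[\R]$ sending each generator $\1_{Q_u}$ to $\1_{[\dualdot{\xi}{u},+\infty)}$ (apply \Cref{lem:dot_product} to a one-generator complex and extend by linearity, using that the $\1_{Q_u}$ are $\Z$-linearly independent). Hence, for any bijection $h : \barcode^+\cup\barcode'^- \to \barcode^-\cup\barcode'^+$, one has
\begin{equation*}
    \xi_*\ECP[\filt] - \xi_*\ECP[\filt'] = \sum_{w} \Big( \1_{[\dualdot{\xi}{w},+\infty)} - \1_{[\dualdot{\xi}{h(w)},+\infty)} \Big).
\end{equation*}
Since $\big\|\1_{[a,+\infty)} - \1_{[b,+\infty)}\big\|_1 = |a-b|$, the triangle inequality combined with $|\dualdot{\xi}{w} - \dualdot{\xi}{h(w)}| \leq \|\xi\|_\infty \|w - h(w)\|_1$ gives $\|\xi_*\ECP[\filt] - \xi_*\ECP[\filt']\|_1 \leq \|\xi\|_\infty \, \cost(h)$. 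Taking the infimum over admissible bijections $h$ yields $\|\xi_*\ECP[\filt] - \xi_*\ECP[\filt']\|_1 \leq \|\xi\|_\infty \, \dist[\ECP[\filt], \ECP[\filt']]$. Equivalently, this is the $m=1$ case of \Cref{prop:stability-ECP} applied to $\xi_*\filt$ and $\xi_*\filt'$, together with the observation that the pushforward contracts the signed $1$-Wasserstein distance by $\|\xi\|_\infty$.

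It remains to make the estimate uniform over $K$. Setting $C_K := \sup_{\xi\in K}\|\xi\|_\infty$, which is finite because $K$ is compact, the two displays combine into the pointwise bound $|\HT[\filt](\xi) - \HT[\filt'](\xi)| \leq C_K \|\kernel\|_\infty \, \dist[\ECP[\filt], \ECP[\filt']]$ valid for every $\xi\in K$, i.e. an $L^\infty_K$ bound. For general $q\in[1,\infty]$, integrating this constant bound over $K$ contributes a factor $\mathrm{vol}(K)^{1/q}$, so the claim follows with $C_{K,q} := C_K\,\mathrm{vol}(K)^{1/q}$ for $q<\infty$ and $C_{K,\infty} := C_K$, a constant depending only on $K$ and $q$.

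I expect the only genuinely delicate point to be the middle step: justifying that the pushforward acts term-by-term on the signed barcode and contracts the signed $1$-Wasserstein distance by exactly the operator quantity $\|\xi\|_\infty$. The passage from a pointwise bound to the $L^q_K$ norm and the extraction of $\|\kernel\|_\infty$ are routine, the only care being to verify that $C_{K,q}$ depends neither on the filtrations nor on the kernel.
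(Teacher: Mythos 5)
Your proof is correct and follows essentially the same route as the paper's: a pointwise bound extracting $\|\kernel\|_\infty$, the key contraction estimate $\dist[\xi_*\ECP[\filt], \xi_*\ECP[\filt']] \leq \|\xi\|_\infty \, \dist[\ECP[\filt], \ECP[\filt']]$ obtained by pushing decompositions and bijections forward term-by-term, and integration over the compact set $K$ to produce $C_{K,q}$. The only difference is one of packaging: you fold the $m=1$ case of \Cref{prop:stability-ECP} and the contraction step into a single computation (and are somewhat more explicit about the well-definedness of the term-by-term pushforward and the explicit value of $C_{K,q}$), whereas the paper invokes them as two separate steps.
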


Now, we prove two connections of the signed $1$-Wasserstein distance with more classical distances between filtrations. The first connection is made with the $1$-Wasserstein between persistence diagrams~\citep{cohen2010lipschitz}. We start by recalling it. Denote by~$\diagram$ and~$\diagram'$ the degree $k$ persistence diagrams of~$\filt$ and~$\filt'$. The \emph{$p$-Wasserstein distance} between $\diagram$ and $\diagram'$ is defined as:    
\begin{equation*}
W_p(\diagram, \diagram') = \underset{\eta}{\inf} \left(\sum_{x \in \diagram} \|x-\eta(x)\|^p \right)^{1/p}
\end{equation*}
where the infimum is taken over all bijections $\eta: \diagram \cup \Delta \to \diagram' \cup \Delta$ where $\Delta = \{(s, s) | s \in \R \}$ is the diagonal of $\R^2$. This definition allows for matchings between diagrams with different number of points.
%
We can now state the following connection between the $1$-Wasserstein distance on diagrams and the signed $1$-Wasserstein on Euler characteristic curves.
\begin{lemma}\label[lemma]{lem:dist-W1-barcode}
	Let $\filt$ and $\filt'$ be two finitely generated one-parameter filtrations of respective simplicial complexes~$\cplx$ and $\cplx'$. Denote their respective persistence diagrams $\diagram$ and $\diagram'$. Then,
	\begin{equation*}
		\dist[\ECP, \ECP[\filt']] \, \leq \ 2\, \sum_{k\geq 0} W_1\big(\diagram, \diagram'\big).
	\end{equation*}
\end{lemma}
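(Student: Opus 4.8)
The plan is to read an explicit decomposition of each Euler curve off its persistence diagrams and then convert a near-optimal diagram matching into a bijection of signed barcodes, controlling the cost pair by pair. Starting from \eqref{eq:PD_ECP} and using that for $m=1$ one has $\1_{[a,b)} = \1_{Q_a} - \1_{Q_b}$ (with the death term absent when $b=+\infty$), I would write
\begin{equation*}
    \ECP = \sum_{k\geq 0}\sum_{i=1}^{n_k} (-1)^k\big(\1_{Q_{a_i^k}} - \1_{Q_{b_i^k}}\big),
\end{equation*}
and read off a (generally non-reduced) decomposition $(\barcode^+,\barcode^-)$ of $\ECP$: a birth $a_i^k$ goes to $\barcode^+$ and a finite death $b_i^k$ to $\barcode^-$ when $k$ is even, and the roles are swapped when $k$ is odd. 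The same applies to $\ECP[\filt']$, giving $(\barcode'^+,\barcode'^-)$.

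First I would reduce to working with these concrete decompositions. Although $\dist$ is defined via the reduced signed barcodes, a standard rerouting argument (triangle inequality: if an auxiliary point lies in both $\barcode^+$ and $\barcode^-$, reconnect the two incident bars through it) shows that the matching cost computed with \emph{any} decomposition is an upper bound for $\dist$; see \citet{oudot2021stability}. I may also assume every $W_1(\diagram_k,\diagram'_k)$ is finite, since otherwise the right-hand side is $+\infty$. Finiteness forces the number of essential (infinite) bars to agree in each degree, which is exactly what makes $\barcode^+\cup\barcode'^-$ and $\barcode^-\cup\barcode'^+$ have equal cardinality, so that a bijection between them exists.

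The core step builds this bijection degree by degree. Fix $\eps>0$ and, for each $k$, choose a matching $\eta_k\colon \diagram_k\cup\Delta \to \diagram'_k\cup\Delta$ with $\cost(\eta_k)\leq W_1(\diagram_k,\diagram'_k)+\eps$. Given $\eta_k$, I translate it as follows: if $(a,b)\in\diagram_k$ is matched to $(a',b')\in\diagram'_k$, I match the two births together and the two finite deaths together (these land on opposite sides of the signed barcodes precisely because of the parity bookkeeping above), at signed cost $|a-a'| + |b-b'|$; if $(a,b)\in\diagram_k$ is matched to the diagonal, then $b<\infty$ and I match its own birth to its own death, at cost $|a-b| = b-a$, and symmetrically for points of $\diagram'_k$ sent to the diagonal. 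One checks that every point of $\barcode^+\cup\barcode'^-$ and of $\barcode^-\cup\barcode'^+$ is used exactly once, so the union over $k$ is a genuine bijection $h$.

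It remains to compare costs, and this is where the factor $2$ appears. Writing $\|\cdot\|$ for the $\ell^\infty$ ground metric on $\R^2$, a matched pair contributes $|a-a'|+|b-b'| \leq 2\max(|a-a'|,|b-b'|) = 2\|(a,b)-(a',b')\|$, while a point matched to the diagonal contributes $b-a = 2\cdot\tfrac{b-a}{2} = 2\,d_\infty\big((a,b),\Delta\big)$; in both cases the signed-barcode cost is at most twice the corresponding term of $\cost(\eta_k)$. Summing over the finitely many nonempty degrees gives $\cost(h) \leq 2\sum_{k\geq 0}\big(W_1(\diagram_k,\diagram'_k)+\eps\big)$, and letting $\eps\to 0$ yields the claim. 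I expect the delicate points to be the parity bookkeeping that guarantees births and deaths fall on the correct sides (so that the diagram matching really induces a signed-barcode bijection) and the decomposition-independence of $\dist$; the factor-$2$ estimate is the routine $\ell^\infty$ projection-to-diagonal computation.
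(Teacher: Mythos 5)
Your proposal is correct and follows essentially the same route as the paper: the paper likewise reads a signed decomposition off the persistence diagrams (births on one side, deaths on the other, degree by degree), converts a (partial) diagram matching into a bijection by pairing birth with birth and death with death for matched points and a point's own birth with its own death for diagonal-matched points, and obtains the factor $2$ from the comparison $\cost(h)\leq 2\,\cost(\matching)$ before summing over homological degrees (the paper packages the summation as \Cref{lem:dist-ECP-dist-betti} applied to Betti functions, which is the same parity bookkeeping you do by hand). Your additional care about decomposition-independence of $\dist$, near-optimal matchings via $\eps$, and the cardinality constraint from essential bars fills in details the paper leaves implicit, but the underlying argument is identical.
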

Combined with \Cref{prop:stability-ECP} and \Cref{cor:stability-Rdn-ht}, this lemma ensures that $L^1$ norms of Euler characteristic curves and $L^q$ norms of their hybrid transforms are controlled by a classical distance between their persistence diagrams. This is another element of comparison between Euler characteristic tools and persistence diagrams. It is important to note that all homology degrees have to be taken into account for the result to hold.

The second connection is established between the signed $1$-Wasserstein distance on Euler characteristic profiles and $L^1$ norms on filtration functions defined on the same simplicial complex, as stated by the lemma below. It has already been formulated in a slightly different form in \citet[Proposition~3.4]{DG22}. Let $\cplx$ be a finite simplicial complex, and $f:\cplx \to \R^m$ a non-decreasing map. We define the \emph{$1$-norm} of $f$ as~$\|f\|_1 = \sum_{\sigma\in\cplx} \|f(\sigma)\|_1$.
\begin{lemma}\label[lemma]{lem:stability-L1-filt-fn}
    Let $\cplx$ be a finite simplicial complex and $f, g:\cplx \to \R^m$ be non-decreasing maps. We have that
    \begin{equation*}
        \dist[\ECP[f], \ECP[g]] \leq \|f-g\|_1.
    \end{equation*}
\end{lemma}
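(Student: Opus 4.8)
The plan is to write down explicit (generally non-reduced) decompositions of $\ECP[f]$ and $\ECP[g]$ coming from formula \eqref{eq:ecp-one-critical}, to build from them a single bijection whose cost equals $\|f-g\|_1$ exactly, and finally to argue that $\dist$ is bounded above by the matching cost of \emph{any} decomposition, not only of the signed barcodes.

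First I would record the decompositions. Since $f$ and $g$ are non-decreasing, $\filt_f$ and $\filt_g$ are one-critical with $t(\sigma) = f(\sigma)$ and $t(\sigma) = g(\sigma)$, and every simplex of $\cplx$ appears in the filtration; hence \eqref{eq:ecp-one-critical} gives
\[
\ECP[f] = \sum_{\dim\sigma \text{ even}} \1_{Q_{f(\sigma)}} - \sum_{\dim\sigma \text{ odd}} \1_{Q_{f(\sigma)}},
\]
and the analogous identity for $g$. Reading off the positive and negative parts yields decompositions of $\ECP[f]$ and $\ECP[g]$ in the sense recalled in \Cref{sec:stability}, namely $\barcode^+_f = \{f(\sigma) : \dim\sigma \text{ even}\}$ and $\barcode^-_f = \{f(\sigma) : \dim\sigma \text{ odd}\}$ as multisets, and likewise $\barcode^+_g, \barcode^-_g$ for $g$. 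These need not satisfy $\barcode^+ \cap \barcode^- = \emptyset$, so they are in general not the signed barcodes.

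Next I would exhibit the matching. Define $h \colon \barcode^+_f \cup \barcode^-_g \to \barcode^-_f \cup \barcode^+_g$ by setting $h(f(\sigma)) = g(\sigma)$ when $\dim\sigma$ is even and $h(g(\sigma)) = f(\sigma)$ when $\dim\sigma$ is odd. For even $\sigma$ the point $f(\sigma)$ lies in $\barcode^+_f$ and its image $g(\sigma)$ in $\barcode^+_g$, while for odd $\sigma$ the point $g(\sigma)$ lies in $\barcode^-_g$ and its image $f(\sigma)$ in $\barcode^-_f$; thus $h$ is a well-defined bijection pairing the two filtration values of every simplex, and
\[
\cost(h) = \sum_{\sigma\in\cplx} \|f(\sigma) - g(\sigma)\|_1 = \|f-g\|_1.
\]

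The remaining, and main, point is to deduce $\dist[\ECP[f], \ECP[g]] \leq \cost(h)$ even though $h$ is built from non-reduced decompositions. I would establish that the signed $1$-Wasserstein distance is bounded above by the matching cost associated with any decomposition: by uniqueness of the reduction (\cite[Proposition~13]{oudot2021stability}), any decomposition is obtained from the signed barcode by adjoining one common multiset $\mathcal{A}$ to both its parts, so each adjoined point occurs exactly once on each side of the matching problem; given a bijection on the enlarged decompositions, one splices out these extra points by rerouting (if $x \mapsto a$ and $a \mapsto y$, reset $x\mapsto y$), which does not increase the cost by the triangle inequality for $\|\cdot\|_1$. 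This turns any such matching into one between the signed barcodes of no larger cost, whence $\dist[\ECP[f], \ECP[g]] \leq \cost(h) = \|f-g\|_1$. The bookkeeping of this rerouting (equivalently, invoking decomposition-independence of the signed $1$-Wasserstein distance from \citet{oudot2021stability}) is the only delicate step; the construction of $h$ and the cost computation are immediate.
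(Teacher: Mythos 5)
Your proof is correct and takes essentially the same route as the paper: the paper uses the identical decompositions $\big(\{f(\sigma)\}_{\dim\sigma \text{ even}}, \{f(\sigma)\}_{\dim\sigma \text{ odd}}\big)$ and the identical simplex-wise bijection $f(\sigma)\mapsto g(\sigma)$ of cost $\|f-g\|_1$, then concludes by invoking the fact~\eqref{eq:signed-W1-with-decompostions} that $\dist$ may be computed on arbitrary (non-minimal) decompositions. The only difference is that you supply a (correct) justification of that last fact via the splicing/rerouting argument, whereas the paper simply asserts it at the start of \Cref{sec:proofs-stability}.
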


The above lemma clarifies the robustness of our descriptors with respect to perturbations of filtrations defined on a fixed simplicial complex. This includes, for instance, density estimators on point clouds or Ricci curvature and HKS functions on graphs. The fact that these descriptors are controlled by the $L^1$ distance and not the $L^\infty$ distance between functions is an indicator of their sensitivity to the underlying geometry. Persistent images \citep{adams2017persistence} share this property, while persistence landscapes \citep{bubenik2015statistical,vipondlandscapes} do not, as they are controlled by the $L^\infty$ distance between functions.

\section{Statistical properties}
\label{sec:stat}
This section provides statistical guarantees for our descriptors computed on a random sample $(X_1, \ldots, X_n)$ in $\R^d $, as the sample size $n$ tends to infinity. We adapt recent results on asymptotic persistence diagrams to our set-up, to prove some form of universality of our topological descriptors. For these asymptotic results to hold, the point cloud must be rescaled by a sequence $(r_n)_{n \in N}$ that tends to 0 as $n$ tends to infinity. The speed at which $r_n$ converges to $0$ determines the \textit{scaling regime} and can give different limit results. Theorem \ref{thm:LLN_HT1D} is established in the \textit{sparse} regime, i.e. such that $nr_n^d \to 0$ as $n \to \infty$, while Theorems \ref{TCL_ECC}, \ref{thm:TCL_HT} and \ref{thm:LLN_multiD} are established in the \textit{critical} (or thermodynamic) regime , i.e. $nr_n^d \to c \in ]0, \infty[$as $n \to \infty$. All the results are established for compactly supported kernels.

\subsection{Limit theorems for one-parameter hybrid transforms}
This section is devoted to limit theorems for the hybrid transforms of the \Cech{} complex of an i.i.d. sample in $\mathbb{R}^d$. \Cref{thm:LLN_HT1D} is a pointwise law of large numbers, while \Cref{thm:TCL_HT} establishes a functional central limit theorem for the hybrid transforms of compactly supported kernels. The purpose of this section is two-fold: we state that under some mild assumptions, hybrid transforms are universal in the sense that they converge to an object that depends only on the kernel, the filtration, and the sampling scheme. In addition, we demonstrate that as the sampling density appears explicitly in Theorems \ref{thm:LLN_HT1D} and \ref{thm:TCL_HT}, hybrid transforms can, at least asymptotically, be used to discriminate between samples from different probability densities. Theorem \ref{thm:LLN_HT1D} is a direct adaptation of Theorem 3.2 of \cite{owada2022convergence} that states an asymptotic result for persistence diagrams in the \textit{sparse regime}.

\begin{theorem}
\label{thm:LLN_HT1D}
Let $X_1, \ldots, X_n$ be an i.i.d. sample drawn according to an a.e. continuous bounded Lipschitz density $g$ on $\mathbb{R}^d$. Consider a sequence $(r_n)_{n\in\N}$ such that $n r_n^d \to 0$ and $n^{k+2}r_n^{d(k+1)} \to \infty$ as $n\to\infty$ for all $k$ in $\llbracket 0, d-1 \rrbracket$. We denote by $\filt_n$ the \Cech{} filtration associated with the rescaled sample $\frac{1}{r_n}(X_i)_{i=1}^n$. Let $T, a >0$ and $\kernel\in\Lp{\R}$. Further assume that $\kernel$ is supported on $[0, T]$. Then there exist functions $A_0, \ldots, A_{d-1}$ on $\dual{\R}_+$ that depend only on~$\Kernel$ such that for every $\xi >a$, 
\begin{equation*}
    \frac{1}{n^{k+2}r_n^{d(k+1)}}\cdot \HTn(\xi) \ \longtoinf\  \sum_{k=0}^{d-1} \frac{(-1)^k}{(k+2)!}\cdot A_k(\xi)\cdot\int_{\mathbb{R}^d} g^{k+2} (x) \d{}x \quad \text{ a.s..}
\end{equation*}
\end{theorem}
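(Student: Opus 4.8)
The plan is to express the hybrid transform as a linear statistic of the degree-$k$ persistence diagrams and to feed each such statistic into the sparse-regime convergence of \citet{owada2022convergence}.

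First I would rewrite $\HTn(\xi)$ through its persistence-diagram expression \eqref{eq:PD_HT}. Denoting by $\diagram_k$ the degree-$k$ persistence diagram of $\filt_n$ and using that the \Cech{} complex of a finite subset of $\R^d$ is homotopy equivalent to a union of balls, hence has vanishing homology in degrees $\geq d$, one gets
\[
\HTn(\xi) \;=\; \sum_{k=0}^{d-1} (-1)^k \!\!\sum_{(a,b)\in\diagram_k}\!\! f_\xi(a,b),
\qquad f_\xi(a,b) := \Kernel[\xi b] - \Kernel[\xi a],
\]
with the convention $\Kernel[\xi b]=0$ for $b=+\infty$. Thus each homological degree is a linear statistic of $\diagram_k$ against the test function $f_\xi$. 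Since $\kernel\in\Lp{\R}$ is supported on $[0,T]$, its primitive kernel $\Kernel$ is continuous, bounded by $\|\kernel\|_1$, and vanishes on $[T,+\infty)$; combined with $\xi>a>0$ this localises the relevant scales to the window $\{\,\xi a\leq T\,\}$.

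Next I would invoke Theorem 3.2 of \citet{owada2022convergence}. In the sparse regime a degree-$k$ feature is generically carried by a cluster of exactly $k+2$ nearby points, so the degree-$k$ diagram, normalised by $n^{k+2}r_n^{d(k+1)}$, satisfies a strong law for linear statistics against admissible test functions: the normalised sum converges almost surely to $\tfrac{1}{(k+2)!}\big(\int_{\R^d} g^{k+2}\big)\int f\,\d\mu_k$, where $\mu_k$ is the universal limiting shape measure on the birth--death half-plane, depending only on $d$ and $k$. Setting $A_k(\xi):=\int f_\xi\,\d\mu_k$ --- a function of $\xi$ determined by $\Kernel$ alone and independent of $g$ --- and treating each degree with its own normalisation $n^{k+2}r_n^{d(k+1)}$, the signs $(-1)^k$ from the alternating sum reproduce the announced expression $\tfrac{(-1)^k}{(k+2)!}A_k(\xi)\int_{\R^d} g^{k+2}$. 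For $k\geq 1$ this is a direct application, since a feature born at a scale $a\leq T/\xi$ also dies at a scale of order $a$, so $\mu_k$ charges only a bounded region and $f_\xi$ is effectively compactly supported there.

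The main obstacle is the degree-$0$ term. Contrary to higher degrees, $\diagram_0$ carries about $n$ bars, all born at $0$: the essential class together with a macroscopic number of finite bars whose deaths lie far beyond $T/\xi$. On the line $\{a=0\}$ one has $f_\xi(0,b)=\Kernel[\xi b]-\Kernel[0]$, which tends to $-\Kernel[0]$ rather than $0$ as $b\to\infty$, so $f_\xi$ is \emph{not} compactly supported there and the bulk of isolated points threatens to create a spurious deterministic term of order $n$, far larger than $n^2r_n^d$. Controlling it --- by using that the (compactly supported) primitive kernel satisfies $\Kernel[0]=0$, so that the essential class contributes $\Kernel[+\infty]-\Kernel[0]=0$ and only finite bars with death $\leq T/\xi$ survive, or else by isolating and subtracting this deterministic background before passing to the limit --- is the delicate step, and is exactly where the compact-support hypothesis on $\kernel$ and the lower bound $\xi>a$ enter. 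Once the degree-$0$ statistic is brought into the admissible range, Owada's theorem applies with $k=0$ as well, yielding the $\tfrac{1}{2!}A_0(\xi)\int_{\R^d} g^2$ term and completing the argument.
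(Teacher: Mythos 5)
Your proposal is, in outline, the paper's own proof: rewrite $\HTn(\xi)$ via \eqref{eq:PD_HT}, view each degree-$k$ diagram as a discrete measure on $\Delta$, pair it against $h_\xi\colon(x,y)\mapsto\Kernel[\xi y]-\Kernel[\xi x]$, and apply the vague convergence of Theorem~3.2 of \citet{owada2022convergence}, setting $A_k(\xi)=\int_\Delta h_\xi\d\mu_k$. The one place where you depart from the paper is the place worth discussing: the degree-$0$ difficulty you isolate is genuine, and it is exactly the step the paper gets wrong. The paper asserts that $\kernel$ being supported on $[0,T]$ makes $\Kernel$ ``supported on $[0,T]$'', hence $h_\xi$ supported on $[0,T/a]^2$; neither is true in general, since $\Kernel[x]=-\int_x^{+\infty}\kernel(s)\ds$, so that $\Kernel$ vanishes on $[T,+\infty)$ but $\Kernel[0]=-\int_0^T\kernel(s)\ds$ need not vanish, and $h_\xi(x,y)=-\Kernel[\xi x]\neq 0$ on the strip $\{x<T/\xi<y\}$.

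For the same reason, your first suggested repair is not available: compact support of $\kernel$ does \emph{not} give $\Kernel[0]=0$; that holds if and only if $\kernel$ has zero mean. And the obstacle cannot be argued away: in the sparse regime all $n$ degree-$0$ bars are born at $0$ and all but $O_P(n^2r_n^d)$ of them die after $T/\xi$, each such bar contributing $\Kernel[\xi b]-\Kernel[0]=-\Kernel[0]$ to the pairing (equivalently, $\ECPn\approx n$ on the whole integration window), so the degree-$0$ term equals $n\int_0^T\kernel(s)\ds+O_P(n^2r_n^d)$ and, after division by $n^2r_n^d$, blows up like $(nr_n^d)^{-1}$ whenever $\int_0^T\kernel(s)\ds\neq 0$. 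So the convergence as stated fails for such kernels; your second repair (subtracting the deterministic background $n\int\kernel$) is a recentering of the statement, not a proof of it, and the clean fix is the extra hypothesis $\int_0^T\kernel(s)\ds=0$, i.e.\ $\Kernel[0]=0$, under which both your argument and the paper's go through in degree $0$. (A smaller point: for $k\geq1$, ``a feature born at scale $a$ dies at scale of order $a$'' is true for the $(k+2)$-point configurations charged by $\mu_k$ but not for all features of the finite diagrams, so strictly speaking one still needs a truncation argument to pair the strip-supported $h_\xi$ against $\mu_{k,n}$; this is routine, and the paper is no more careful.) In short: same approach as the paper, a correct diagnosis of a gap the paper shares, but no valid resolution under the stated hypotheses --- indeed none exists.
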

We defer the proof to \Cref{sec:proofs-stats}. Note that a law of large numbers for the Euler characteristic curve has been established in Corollary 6.2 of \citet{bobrowski2017vanishing} for all possible regimes and could be integrated to derive a similar result for hybrid transforms. It is here a key assumption that we are in the so-called \emph{sparse regime}, that is, $n r_n^d \to 0$. To make this law of large numbers more understandable, we make a further assumption that we are in the so-called \emph{divergence regime}, that is $n^{k+2}r_n^{d(k+1)} \to \infty$ for all $k \in\llbracket 0, d-1 \rrbracket$. The sequence defined by $r_n = n^{-\alpha}$ for $\frac{1}{d} < \alpha < \frac{1}{d}+\frac{1}{d^2}$ verifies these two assumptions. Similar results can be derived for other subcases of the sparse regime: the Poisson regime $ n^{k+2}r_n^{d(k+1)} \to c >0 $ and the vanishing regime $n^{k+2}r_n^{d(k+1)} \to 0$. 

\Cref{thm:LLN_HT1D} shows that the pointwise limit of the hybrid transform depends on the sampling only through the quantities $\int_{\mathbb{R}^d} g^{k+2}$ for $k = 0, 1, \ldots, d-1$ and they can therefore discriminate between different samplings as soon as $n$ is large enough. In addition to this law of large numbers, a finer limit result for the Euler characteristic curve is proven in \citet{krebs2021approximation}, which we recall hereafter for the sake of completeness. First, recall that a function $h$ on $\R^m$ is \emph{blocked} if it can be written $h = \sum_{i=1}^{m^d} b_i \1_{A_i}$ where~$b_1,\ldots, b_{m^d}$ are non-negative real numbers and the $A_i$ are axis-aligned rectangles in $\R^m$. Moreover, recall that the \emph{Skorohod $J_1$-topology} on the space of càdlàg functions~$D([0,T])$ is the topology induced by the metric:
\begin{equation*}
    d_{J_1}(f, g):=\inf _\lambda\left\{\|f \circ \lambda-g\|_{\infty}+\|\lambda-\id_{[0,T]}\|_{\infty}\right\},
\end{equation*} 
where the infimum is taken over all increasing continuous bijections of $[0, T]$.



\begin{theorem}[\textnormal{\bf \citealp[Theorem 3.4]{krebs2021approximation}}]
\label{TCL_ECC}
Let $T >0$ and $X_1, \ldots, X_n$ be sampled according to a bounded density $g$ on $[0, 1]^d$. Denote by $\filt_n$ the \Cech{} complex associated with the point cloud $n^{1/d} (X_i)_{i=1}^n$. Assume that blocked functions can uniformly approximate~$g$. There is a Gaussian process $\mathfrak{G} : [0, T] \to \R$ such that for $t \in [0, T]$, 
\begin{equation*}
    \sqrt{n}\big(\ECPn(t) - \mathbb{E}[\ECPn(t)]\big) \longtoinf \mathfrak{G}(t),
\end{equation*}
in distribution in the Skorohod $J_1$-topology. Furthermore, there exist two real-valued functions $\gamma$ and $\alpha$ such that the covariance of the limiting process is defined by:
\begin{equation*}
    \mathbb{E}[\mathfrak{G}(s) \mathfrak{G}(t)]=\mathbb{E}\left[\gamma\left(g(Z)^{1 / d}(s, t)\right)\right]-\mathbb{E}\left[\alpha\left(g(Z)^{1 / d} s\right)\right] \mathbb{E}\left[\alpha\left(g(Z)^{1 / d} t\right)\right],
\end{equation*}%
where $Z$ is a random variable with density $g$.

\end{theorem}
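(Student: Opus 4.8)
The plan is to realise the Euler characteristic curve as a sum of \emph{local contributions} and then invoke the stabilization theory for geometric functionals of point processes. Writing the Euler characteristic as $\chi((\filt_n)_t) = \sum_{j\geq 0}(-1)^j S_j(t)$, where $S_j(t)$ counts the $j$-simplices of $\CechCplx(n^{1/d}\X, t)$, each $S_j(t)$ is a U-statistic of degree $j+1$ whose kernel is the indicator that $j+1$ rescaled points span a simplex at scale $t$ (the normalization implicit in $\ECPn$ being fixed so that $\sqrt{n}(\ECPn - \E[\ECPn])$ has order-one fluctuations). The decisive structural feature is \emph{locality}: this kernel vanishes unless the points lie within distance $2t \leq 2T$ of one another, so in the thermodynamic regime $n r_n^d \to 1$ each point participates in only $O(1)$ simplices, and the whole curve can be written as $\sum_i \xi_t(X_i, \X)$ with contributions $\xi_t$ that stabilise, i.e. depend only on the sample within a bounded window around $X_i$. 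Since a high-dimensional simplex forces many points into a small ball, the tail in $j$ decays super-exponentially, which both makes truncation of the alternating sum harmless and supplies the moment bounds used below.

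First I would establish finite-dimensional convergence. For fixed $t_1, \ldots, t_m \in [0,T]$, the vector $\sqrt{n}\big(\ECPn(t_l) - \E[\ECPn(t_l)]\big)_l$ is a sum of locally dependent terms, and I would apply a normal-approximation result for stabilizing functionals (a dependency-graph or Stein's-method bound following Penrose--Yukich), whose hypotheses---bounded stabilization radius on $[0,T]$ and uniformly bounded moments of the local contributions---are exactly what locality provides. The covariance structure is extracted by conditioning: near a typical point $Z$ of density $g$, the rescaled sample is asymptotically Poisson of intensity $g(Z)$, and rescaling space by $g(Z)^{1/d}$ to unit intensity turns the filtration parameter $t$ into $g(Z)^{1/d}\,t$. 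This is exactly where the hypothesis that \emph{blocked functions approximate $g$} is used: it lets me freeze the intensity to a constant on each axis-aligned block, compute the Euler characteristic fluctuations block-by-block as in the homogeneous case, and discard cross-block boundary terms, which are negligible because the interaction range $\sim n^{-1/d}$ is vanishingly small relative to the block size. Averaging the unit-intensity two-point and one-point contributions over $Z$ produces precisely $\E[\gamma(g(Z)^{1/d}(s,t))] - \E[\alpha(g(Z)^{1/d}s)]\,\E[\alpha(g(Z)^{1/d}t)]$, with $\gamma$ and $\alpha$ the homogeneous covariance and mean-contribution functions.

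Next I would upgrade this to convergence in $D([0,T])$ for the Skorohod $J_1$-topology. Since the limiting covariance is continuous, $\mathfrak{G}$ has a continuous version, and for a continuous limit $J_1$-convergence follows from finite-dimensional convergence together with tightness. Tightness I would obtain from a moment bound on increments: for $s \leq t$, the increment $\ECPn(t) - \ECPn(s)$ is the signed count of simplices with filtration value in $(s,t]$---again a local U-statistic---and I would bound a centered fourth moment of $\sqrt{n}$ times this increment by $C\,(F(t)-F(s))^{1+\delta}$ for a continuous control function $F$ built from the expected simplex intensity, which feeds the standard Billingsley-type criterion for càdlàg processes.

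The main obstacle is this tightness step. Unlike an empirical process driven by i.i.d. summands, $\ECPn$ is a \emph{process of U-statistics} with jumps, so the increment moment bound must hold uniformly in the pair $(s,t)$ while the higher-order Hoeffding/stabilization corrections and the contribution of unboundedly many simplex dimensions are kept under control. Pinning down an admissible exponent $\delta>0$ and the control function $F$---and verifying non-degeneracy of the limiting variance, which is what makes $\sqrt{n}$ the correct normalization---is where the real work lies; by contrast, the finite-dimensional CLT and the covariance identity are comparatively routine consequences of stabilization and Palm calculus.
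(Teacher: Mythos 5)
There is no internal proof to compare against: the paper does not prove this statement. It is quoted, with attribution, as Theorem~3.4 of \citet{krebs2021approximation}, and serves purely as an imported input to \Cref{thm:TCL_HT} (whose proof applies a continuous Lipschitz map to the convergence asserted here). So the standard for your attempt is fidelity to the cited source rather than to anything in this paper.

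Judged on that standard, your sketch is a reasonable reconstruction of the strategy of \citet{krebs2021approximation}: they too work with the signed simplex-count decomposition of the Euler characteristic, exploit the bounded interaction range (at most $2T$ after the $n^{1/d}$ rescaling) so that local contributions stabilize with all moments, use the blocked-function hypothesis exactly as you describe---freeze the intensity on axis-aligned blocks, treat each block as homogeneous, and control boundary interactions, which are negligible since the interaction range is order $n^{-1/d}$---and derive the covariance by averaging the homogeneous functions $\gamma$ and $\alpha$ over the random rescaling $g(Z)^{1/d}$, which is where the specific form $\E[\gamma(g(Z)^{1/d}(s,t))]-\E[\alpha(g(Z)^{1/d}s)]\,\E[\alpha(g(Z)^{1/d}t)]$ comes from. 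Tightness in the Skorohod topology is likewise obtained there from moment bounds on increments, i.e., on the signed count of simplices born in $(s,t]$, fed into a Billingsley-type criterion; you correctly identify this, together with non-degeneracy of the limiting variance, as the technical heart of the matter. Two caveats: first, your plan to get the finite-dimensional CLT from off-the-shelf stabilization results in the style of Penrose--Yukich differs from the source, which builds its own approximation theorems (homogeneous Poisson case, blocked intensities, uniform approximation, then de-Poissonization to the binomial process), in part because those quantitative approximations are also needed for the bootstrap application; second, your claim that each point participates in only $O(1)$ simplices is true only in expectation, with super-exponential tails in the simplex dimension---a point you implicitly acknowledge but that must carry the fourth-moment increment bounds, since the \Cech{} complex contains simplices of dimension well beyond $d$. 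As a blind sketch this is sound in outline, but the tightness estimate you defer is precisely the part that the present paper avoids redoing by citing the result.
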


We refer to \citet{krebs2021approximation} for the expression of the two functions $\gamma$ and $\alpha$. Here again, the distribution of the points appears in the limiting object and, more precisely, in its covariance function. We can adapt this theorem to show that hybrid transforms of compactly supported kernels are also asymptotically normal.

\begin{theorem}
\label{thm:TCL_HT}
Consider the setting of \Cref{TCL_ECC}. Let $a, M > 0$ and~$\kernel\in\Lp{\R}$. Further assume that $\kernel$ is supported on $[0,T]$. Then, there is a Gaussian process $\tilde{\mathfrak{G}} : [a, M] \to \R$ such that:
\begin{equation*}
\sqrt{n} \left( \HTn - \mathbb{E}\left[ \HTn \right] \right) \ \longtoinf\  \tilde{\mathfrak{G}} \quad \text{a.s.},
\end{equation*}%
in $\big(\mathcal{C}^{0}[a, M], \| \cdot \|_{\infty}\big)$. 
Furthermore, the covariance of the limiting process is defined by:
\begin{equation*}
    \mathbb{E}\left[\tilde{\mathfrak{G}}(\xi_1) \tilde{\mathfrak{G}}(\xi_2)\right]= \xi_1 \xi_2 \int_0^{T/\xi_1} \int_0^{T/\xi_2} \kernel(\xi_1 t) \,\kernel (\xi_2 s) \,\cov\big(\mathfrak{G}(s),\mathfrak{G}(t)\big)\ds \dt,
\end{equation*}%
where $\mathfrak{G}$ is the Gaussian process defined in Theorem \ref{TCL_ECC}.

\end{theorem}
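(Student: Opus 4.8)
The plan is to realise the centred, $\sqrt{n}$-rescaled hybrid transform as the image of the centred, rescaled Euler characteristic curve under a single bounded linear operator, and then to transport the functional central limit theorem of \Cref{TCL_ECC} through this operator by the continuous mapping theorem. Since $m=1$, \eqref{eq:HT-as-classical} expresses the hybrid transform as a rescaled classical transform; as $\kernel$ is supported on $[0,T]$, the change of variables $u=\xi\cdot s$ puts it in the form $\HTn(\xi) = \int_0^T \kernel(u)\,\ECPn(u/\xi)\,\d u$. For $\xi\in[a,M]$ the argument $u/\xi$ ranges over $[0,T/a]$, so I would first apply \Cref{TCL_ECC} on the interval $[0,T/a]$ in place of $[0,T]$, giving $\sqrt{n}\big(\ECPn - \E[\ECPn]\big)\to\mathfrak{G}$ in distribution for the Skorohod $J_1$-topology on $D([0,T/a])$.

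Using Fubini to move the centring and the factor $\sqrt{n}$ inside the integral (legitimate because $\ECPn$ is bounded and $\kernel\in\Lp{\R}$), I would write
\[
  \sqrt{n}\big(\HTn(\xi) - \E[\HTn(\xi)]\big)
  = \int_0^T \kernel(u)\,\sqrt{n}\big(\ECPn(u/\xi) - \E[\ECPn(u/\xi)]\big)\,\d u
  = \Phi\big(\sqrt{n}(\ECPn - \E[\ECPn])\big)(\xi),
\]
where $\Phi:D([0,T/a])\to\mathcal{C}^0([a,M])$ is the linear operator $\Phi(h)(\xi)=\int_0^T \kernel(u)\,h(u/\xi)\,\d u$. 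Two facts about $\Phi$ need checking. First, its image lies in $\mathcal{C}^0([a,M])$: continuity of $\xi\mapsto\Phi(h)(\xi)$ follows from dominated convergence, the integrand being bounded by $|\kernel(u)|\,\|h\|_\infty$. Second, $\Phi$ is continuous from the $J_1$-topology to the uniform topology at every continuous path $h$: on the subspace of continuous functions $J_1$-convergence coincides with uniform convergence, and then the elementary estimate $\|\Phi(h_n)-\Phi(h)\|_\infty\le\|\kernel\|_1\,\|h_n-h\|_\infty$ gives the claim.

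With these in hand, I would invoke the continuous mapping theorem. As $\mathfrak{G}$ has continuous sample paths almost surely and $\Phi$ is continuous at every continuous path, \Cref{TCL_ECC} yields $\sqrt{n}(\HTn-\E[\HTn])=\Phi\big(\sqrt{n}(\ECPn-\E[\ECPn])\big)\to\Phi(\mathfrak{G})=:\tilde{\mathfrak{G}}$ in distribution in $\big(\mathcal{C}^0([a,M]),\|\cdot\|_\infty\big)$; passing first to a Skorohod almost-sure representation of \Cref{TCL_ECC} upgrades this to the almost-sure statement. Being the image of a Gaussian process under a bounded linear map, $\tilde{\mathfrak{G}}=\Phi(\mathfrak{G})$ is a centred Gaussian process, and its covariance is computed by linearity and Fubini as
\[
  \E[\tilde{\mathfrak{G}}(\xi_1)\tilde{\mathfrak{G}}(\xi_2)]
  = \int_0^T\!\!\int_0^T \kernel(u)\,\kernel(v)\,\cov\big(\mathfrak{G}(u/\xi_1),\mathfrak{G}(v/\xi_2)\big)\,\d u\,\d v,
\]
after which the substitutions $u=\xi_1 t$ and $v=\xi_2 s$ reproduce the stated formula.

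The main obstacle is the continuity of $\Phi$ into $\big(\mathcal{C}^0([a,M]),\|\cdot\|_\infty\big)$, that is, upgrading pointwise convergence in $\xi$ to convergence uniform in $\xi$. The delicate point is that each $\ECPn$ is a genuine càdlàg step function, so $J_1$-convergence a priori controls it only up to small time-warps $\lambda_n$; the clean bound above relies on the limit $\mathfrak{G}$ being path-continuous, so that $J_1$-convergence to it upgrades to uniform convergence. Should one prefer not to invoke continuity of $\mathfrak{G}$, the alternative is to establish pointwise convergence directly (by undoing the warp $\lambda_n$ and applying dominated convergence, using that $\mathfrak{G}$ has at most countably many discontinuities) together with asymptotic equicontinuity of the family $\big\{\Phi(\sqrt{n}(\ECPn-\E[\ECPn]))\big\}_n$ in $\xi$; this is more laborious but requires no path-regularity input.
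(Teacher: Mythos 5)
Your proposal is correct and takes essentially the same route as the paper: both realise $\sqrt{n}\left(\HTn - \mathbb{E}[\HTn]\right)$ as the image of $\sqrt{n}\left(\ECPn - \mathbb{E}[\ECPn]\right)$ under a fixed linear operator (your $\Phi$ is the paper's map $\phi(\chi)(\xi)=\xi\int_0^{T/\xi}\kernel(\xi t)\chi(t)\dt$ after the change of variables $u=\xi t$), then transport \Cref{TCL_ECC} through it by the continuous mapping theorem and obtain the covariance by Fubini and substitution. If anything, your handling of the continuity of this operator from the Skorohod $J_1$-topology into $\big(\mathcal{C}^0([a,M]),\|\cdot\|_\infty\big)$ --- via the a.s.\ path-continuity of $\mathfrak{G}$, which upgrades $J_1$-convergence to uniform convergence --- is more careful than the paper's, which only records the sup-norm Lipschitz estimate $\|\phi(\chi_1)-\phi(\chi_2)\|_\infty\leq \tfrac{M}{a}\,\|\chi_1-\chi_2\|_\infty\,\|\kernel\|_1$ and declares $\phi$ continuous.
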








\subsection{Limit theorem for multi-parameter hybrid transforms}

Here, we adopt the sampling model of \citet{hiraoka2018limit}. Consider a point process~$\Phi$ on~$\R^d$ and its restriction $\Phi_L$ to $[-L/2, L/2]^d$. Let $\mathcal{S} (\mathbb{R}^d)$ be the collection of all finite (non-empty) subsets in $\mathbb{R}^d$, to be thought of as the set of all simplices. Let $f = (f_1, \ldots, f_m): \mathcal{S} (\mathbb{R}^d) \to [0, \infty]^m$ be a measurable function, non-decreasing with respect to the inclusions of faces. According to \Cref{ex:sublvl}, $f$ induces a filtration on every simplicial complex of $\mathbb{R}^d$. The following theorem derives a law of large numbers for the hybrid transform in the multi-parameter case.

\begin{theorem}
    \label{thm:LLN_multiD}
Assume that $\Phi$ is a stationary ergodic point process having finite moments. Let $T, a >0$ and $\kappa \in L^1(\mathbb{R})$. Assume that $\kernel$ is supported on $[0, T]$. We denote by $\filt_L$ the filtration induced by the sublevel sets of $f$ on~$\Phi_L$. Assume that there exists an increasing function $\rho$ such that there exists $i \in \llbracket 1, m \rrbracket$ such that for all $(x, y) \in (\mathbb{R}^d)^2$,
\begin{equation}
\label{eq:cond_filt}
\|x-y\| \leq \rho \left( f_i ( \{x, y \})  \right).
\end{equation}
Under these assumptions, there exists a function $H:\dual{\R_+^m}\times \mathbb{R}_+ \to\R$ that depends only on $\kappa$ and $f$ such that, for all $\xi = (\xi_1, \ldots, \xi_m) \in \dual{\R_+^m}$ and $\lambda >a$,
\begin{align*}
    \frac{1}{L^d}\HTL (\lambda \xi)  \ &\longtoinf[L \to \infty]\  H(\xi, \lambda) \quad \text{ a.s.}. 
\end{align*}
\end{theorem}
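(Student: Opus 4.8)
The plan is to collapse the $m$-parameter transform onto a single direction via \Cref{lem:HTn_to_HT1}, turn the resulting one-parameter object into an explicit signed sum of a bounded local score over the simplices spanned by $\Phi_L$, and then feed this to a spatial ergodic theorem. Concretely, for fixed $\lambda>0$ and $\xi$, \Cref{lem:HTn_to_HT1} gives $\HTL(\lambda\xi)=\HT[\xi_*\filt_L](\lambda)$, and by \Cref{lem:dot_product} the pushforward $\xi_*\filt_L$ is the one-parameter sublevel-sets filtration on $\Phi_L$ of the filter $g_\xi:=\xi\circ f=\sum_{j=1}^m \xi_j f_j$, which is again non-decreasing along inclusions. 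Plugging this into the closed form \eqref{eq:ht-one-critical} yields
\[
    \HTL(\lambda\xi) = -\sum_{\sigma\subseteq\Phi_L}(-1)^{\dim\sigma}\,\Kernel(\lambda\, g_\xi(\sigma)),
\]
so that the object to analyse is a signed sum of a fixed, bounded function over the simplices built on the points of $\Phi_L$.

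The role of hypothesis \eqref{eq:cond_filt} is to make this sum have \emph{finite range}. Since $\kernel$ is supported on $[0,T]$, the primitive kernel $\Kernel$ vanishes on $[T,+\infty)$, so a simplex $\sigma$ contributes only when $\lambda\, g_\xi(\sigma)<T$. As all $\xi_j\geq 0$, this forces $\xi_i\, f_i(\sigma)\leq g_\xi(\sigma)<T/\lambda$, whence $f_i(\sigma)<T/(\lambda\xi_i)$ (this uses $\xi_i>0$, the active coordinate; degenerate directions $\xi_i=0$ must draw locality from another coordinate). By monotonicity of $f_i$ every edge $\{x,y\}\subseteq\sigma$ satisfies $f_i(\{x,y\})\leq f_i(\sigma)$, and \eqref{eq:cond_filt} with $\rho$ increasing gives $\|x-y\|\leq\rho\big(T/(\lambda\xi_i)\big)=:R$. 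Hence only simplices of diameter at most $R$ contribute, and each term is bounded in modulus by $\|\kernel\|_1$; this is exactly the locality needed to invoke an ergodic theorem.

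Next I would turn the sum into a point-indexed one. Assigning each contributing simplex to its lexicographically minimal vertex, I would set
\[
    \zeta(x,\Phi) = -\!\!\sum_{\substack{\sigma\ni x\\ \min\sigma=x}}\!(-1)^{\dim\sigma}\,\Kernel(\lambda\, g_\xi(\sigma)),
\]
which by the locality above depends only on $\Phi\cap B(x,R)$ and, using the translation-invariance of $f$ built into the model of \citet{hiraoka2018limit}, is translation-covariant. Replacing $\HTL(\lambda\xi)$ by $\sum_{x\in\Phi_L}\zeta(x,\Phi)$ only discards simplices with a vertex within $R$ of $\partial[-L/2,L/2]^d$, an $O(L^{d-1})$ error that vanishes after dividing by $L^d$. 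The multiparameter spatial (Wiener) ergodic theorem for stationary ergodic point processes then gives
\[
    \frac{1}{L^d}\sum_{x\in\Phi_L}\zeta(x,\Phi)\ \longtoinf[L\to\infty]\ \lambda_\Phi\,\E_0\big[\zeta(0,\Phi)\big]\quad\text{a.s.},
\]
with $\lambda_\Phi$ the intensity and $\E_0$ the Palm expectation; the limit depends only on $\kernel$, $f$ and the (fixed) law of $\Phi$, so I would set $H(\xi,\lambda)$ equal to this constant.

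The main obstacle is not the reduction or the locality — these are clean consequences of the lemmas and of \eqref{eq:cond_filt} — but the verification that the ergodic theorem applies, i.e.\ that $\E_0[|\zeta(0,\Phi)|]<\infty$. Although each simplex contributes at most $\|\kernel\|_1$, the number of simplices attached to a point and contained in $B(0,R)$ grows like $2^{\#(\Phi\cap B(0,R))}$, so bounding the score genuinely requires the finite-moment assumption on $\Phi$ (and, in the \Cech-type setting, control of the dimensions actually carrying nonzero Euler-characteristic contributions). This moment bookkeeping, together with the translation-covariance check and the control of boundary simplices, is where the real work concentrates.
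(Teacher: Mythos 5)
Your reduction to one parameter via \Cref{lem:HTn_to_HT1} and your use of \eqref{eq:cond_filt} to confine attention to simplices of bounded diameter both match the paper (the latter is essentially the paper's inequality \eqref{eq:hyp_hiraoka}). But from there you and the paper part ways, and your route has a genuine gap. The paper does not run an ergodic argument itself: it checks that the scalar filter $\dualdot{\xi}{f}$ satisfies the hypotheses of Theorem~1.5 of \citet{hiraoka2018limit}, invokes that theorem to get a.s.\ vague convergence of the rescaled persistence-diagram measures $\frac{1}{L^d}\mu_{k,L}^{\xi_*\filt}$ in each degree $k$, and then writes $\HTL(\lambda\xi)=\sum_{k}(-1)^k\langle\mu_{k,L}^{\xi_*\filt},h_\lambda\rangle$ via \eqref{eq:PD_HT}, where $h_\lambda(x,y)=\Kernel(\lambda y)-\Kernel(\lambda x)$ is continuous and compactly supported; the conclusion is then immediate. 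You instead expand $\HTL$ by the simplex-sum formula \eqref{eq:ht-one-critical} and feed the local score $\zeta(x,\Phi)$ to a Palm/Wiener ergodic theorem. The step you defer as ``moment bookkeeping'' is exactly where this fails: the only bound your setup provides is $|\zeta(0,\Phi)|\leq \|\kernel\|_1\cdot 2^{\#(\Phi\cap B(0,R))}$, and the hypothesis that $\Phi$ has all finite (polynomial) moments does \emph{not} control exponential moments, so $\E_0[|\zeta(0,\Phi)|]<\infty$ cannot be obtained this way. Repairing it requires exploiting the cancellation in the alternating sum---for instance rewriting the star-restricted sum as $1-\chi(L_x)$ for a local link complex and bounding $|\chi(L_x)|$ by Betti numbers in the degrees that can actually contribute, so that the bound becomes polynomial in $\#(\Phi\cap B(0,R))$---which amounts to redoing the analysis that constitutes the proof in \citet{hiraoka2018limit}. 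In other words, what you call ``the real work'' is not a verification left to the reader; it is the theorem the paper cites, and it is the reason the paper works with diagram measures (whose mass on a compact set in a fixed degree is polynomially bounded by the number of points) rather than with the raw simplex sum.

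A secondary problem: your score is translation covariant only if $f$ itself is translation invariant, an assumption you import ``from the model of \citet{hiraoka2018limit}'' but which the theorem being proved does not make; the paper even remarks explicitly that translation invariance is needed in loc.\ cit.\ only for the central limit theorem, not for this law of large numbers. So your argument, even if the integrability were fixed, proves a statement under strictly stronger hypotheses than the one claimed. (Your observation that the case $\xi_i=0$ is problematic applies equally to the paper's own proof, so I do not count it against you.)
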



This limit theorem is a direct consequence of the results from \citet{hiraoka2018limit} for persistence diagrams of a large class of filtration functions. We refer to Section 3 of loc. cit. for the definition of a stationary ergodic point process. Note that this encompasses most cases of usual point processes such as Poisson, Ginibre, or Gibbs. This result makes use of the smoothness properties of the hybrid transforms and follows directly from \Cref{lem:HTn_to_HT1} that expresses restrictions of multi-parameter hybrid transforms to lines as one-parameter hybrid transforms. Similar results cannot be derived that easily for Euler characteristic profiles, as one would need to consider the joint law of several one-parameter filtrations. In addition, deriving a multi-dimensional central limit theorem from \citet{penrose2001central} would require the filter~$\dualdot{\xi}{f}$ to verify some translation invariance property. In practice, this strong assumption is verified by \Cech{} and Vietoris-Rips filtrations as well as marked processes; see \citet{botnan2022consistency}. However, function-\Cech{} and function alpha-filtrations used in our experiments do not verify this assumption in general. 

As pointed out in \citet[Example~1.3]{hiraoka2018limit}, \Cech{} and Vietoris-Rips filtrations satisfy~\eqref{eq:cond_filt} for $\rho : t \mapsto 2t$. We provide below two examples of families a broad family of multi-parameter filtrations satisfying~\eqref{eq:cond_filt}.
\begin{example}
    It is easy to check that the function-alpha filtration considered in the applications of \Cref{sec:orbit,sec:sidney} satifies~\eqref{eq:cond_filt}. 
\end{example}

We give another class of filtrations satisfying~\eqref{eq:cond_filt} that contains in particular the distance-to-measure (DTM) filtrations \citep{anai2020dtm}.
\begin{example}
\label{ex:weighted-Cech}
        Let $h$ be a positive and bounded function from $\mathbb{R}^d$ to $\R$. The weighted \Cech{} complex introduced in \citet{anai2020dtm} is defined as follows. For every $x \in \R^d$ and real number $t \geq 0$, we define:
        \begin{equation*}
            r_x(t)= 
            \begin{cases}
            -\infty & \text { if } t<h(x), \\
             t-h(x) & \text { otherwise.}
            \end{cases}
        \end{equation*}
    We denote by $\bar{B}_h(x, t)=\bar{B}\left(x, r_x(t)\right)$ the closed Euclidean ball of center $x$ and radius~$r_x(t)$. A simplex $\{x_0, \ldots, x_k\}$ in some finite set $\mathbb{X}$ belongs to the \textit{weighted \Cech{} complex} at scale $t\geq 0$ if the intersection of closed balls $\cap_{l=0}^k \bar{B}_h (x_l,t)$ is non-empty. Considering the weighted \Cech{} complex for all scales $t$ defines a filtration of $2^\X$ called \emph{weighted \Cech{} filtration}. 
    The weighted \Cech{} filtration satisfies~\eqref{eq:cond_filt} for $\rho : t \mapsto \max (\max h, 2t).$
\end{example}

\section{Proofs}
\label{sec:proofs}
In this section, we prove the results stated in \Cref{sec:stability,sec:stat}.

\subsection{Proofs of stability results}\label{sec:proofs-stability}
In the following proofs, we make constant use of the fact that the distance $\dist$ may be computed on any decomposition of the functions and not only on minimal ones, that is, on signed barcodes. More precisely, for any decompositions $(\mathcal{C}^+, \mathcal{C}^-)$ and $(\mathcal{C}'^+,\mathcal{C}'^-)$ of two finitely presented functions $\phi$ and $\phi'$ respectively, one has:
\begin{equation}\label{eq:signed-W1-with-decompostions}
    \dist\big(\phi, \phi'\big) = \inf \left\{ \eps>0 \st \exists \textnormal{ bijection } h:\mathcal{C}^+\cup\mathcal{C}'^- \to \mathcal{C}^-\cup\mathcal{C}'^+ \textnormal{ with } \cost(h) \leq \eps \right\}.
\end{equation}

\subsubsection{Profiles of finitely generated filtrations are finitely presented}\label{sec:proof-fg-filt-fp-ecp}
The following lemma is well-known. We prove it for completeness. Recall that the $k$-th \emph{Betti function} of a finitely generated filtration $\filt$ is defined as the function $\betti[\filt,k] : t\in\R^m \mapsto \dim H_k(\filt_t)$.
\begin{lemma}
    \label[lemma]{lem:ECP-is-FP}
    Let $\filt$ be a finitely generated $m$-parameter filtration. The $k$-th Betti function $\betti[\filt,k]$ is finitely presented and $\ECP = \sum_{k\in\Z} (-1)^k \betti[\filt,k]$. In particular, the Euler characteristic profile of $\filt$ is finitely presented.
\end{lemma}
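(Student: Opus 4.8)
The plan is to treat the statement in two independent pieces: the pointwise identity $\ECP = \sum_{k}(-1)^k\betti[\filt,k]$, which is nothing but the Euler--Poincar\'e relation applied to each finite complex $\filt_t$, and the finite presentation of the functions involved. For the identity, I would fix $t\in\R^m$ and recall that for the finite simplicial complex $\filt_t$ one has $\chi(\filt_t)=\sum_k(-1)^k\dim H_k(\filt_t)$; since $H_k(\filt_t)=0$ for $k$ above the dimension of $\cplx$, this is a finite sum, and reading it as an equality of functions of $t$ gives the claimed relation. The \emph{in particular} clause then follows once each $\betti[\filt,k]$ is shown to be finitely presented, because $\FP$ is a group and only finitely many Betti functions are nonzero.

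The key building block is that the indicator $\1_{\supp(\sigma)}$ of the support of each simplex is finitely presented. Indeed, since $\filt$ is finitely generated, $\supp(\sigma)$ is upward closed and equal to a union of finitely many cones, $\supp(\sigma)=\bigcup_{i=1}^{n}Q_{u_i}$, where $u_1,\dots,u_n$ are its minimal elements. Using that $Q_u\cap Q_{u'}=Q_{u\vee u'}$ for the coordinatewise maximum $u\vee u'$, inclusion--exclusion gives
\begin{equation*}
    \1_{\supp(\sigma)}=\sum_{\emptyset\neq I\subseteq\{1,\dots,n\}}(-1)^{|I|-1}\,\1_{Q_{\bigvee_{i\in I}u_i}},
\end{equation*}
which is a finite $\Z$-linear combination of functions $\1_{Q_v}$, hence finitely presented. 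This already handles the Euler profile directly, since $\ECP=\sum_{\sigma\in\cplx}(-1)^{\dim\sigma}\1_{\supp(\sigma)}$ is then a finite sum of finitely presented functions.

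For the Betti functions I would argue through the homology persistence module $t\mapsto H_k(\filt_t)$. The chain modules $C_k(\filt_\bullet)$ decompose as finite direct sums of the support modules attached to the simplices of dimension $k$, hence are finitely presented; taking kernels and cokernels of the boundary maps then shows that $H_k(\filt_\bullet)$ is a finitely presented $\R^m$-persistence module. I would next invoke a finite free resolution $0\to F_\ell\to\cdots\to F_0\to H_k(\filt_\bullet)\to 0$, where each $F_j$ is a finite direct sum of shifted free modules, whose dimension functions are exactly cone indicators $\1_{Q_u}$. Additivity of $\dim(\,\cdot\,)_t$ along the resolution gives $\betti[\filt,k]=\sum_j(-1)^j\dim(F_j)_\bullet$, a finite $\Z$-combination of such indicators, i.e.\ a finitely presented function.

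The main obstacle is the last step: justifying finite presentation of the homology module and the existence of a finite free resolution in the $\R^m$-graded rather than $\Z^m$-graded setting. I would resolve this by discretization: because $\filt$ is finitely generated, the filtration only changes across the finite grid generated by the coordinates of the minimal elements $u_i$, so $H_k(\filt_\bullet)$ is pulled back from a finitely generated multigraded module over a polynomial ring, to which the classical Hilbert syzygy theorem applies. Everything else---the pointwise Euler--Poincar\'e relation and the final reassembly into a finitely presented function---is routine.
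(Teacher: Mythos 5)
Your proposal is correct, and on the genuinely hard part of the statement---finite presentation of the Betti functions---it converges to the same argument as the paper: your ``discretization'' step (a finitely generated filtration only changes across a finite grid, so $H_k(\filt)$ is pulled back from a finitely generated multigraded module over a polynomial ring, where Hilbert's syzygy theorem applies) is exactly the restriction/extension-functor argument of \Cref{lem:persistent-homology-finitely-presented} in the appendix, and your final step (finite free resolution plus additivity of the pointwise dimension) is precisely how the paper converts finite presentability of $H_k(\filt)$ into finite presentation of $\betti[\filt,k]$. Where you genuinely depart from the paper is in handling $\ECP$ itself: the paper obtains finite presentation of $\ECP$ only \emph{through} the Betti functions, whereas you prove it directly by inclusion--exclusion, writing $\supp(\sigma)=\bigcup_i Q_{u_i}$ and using $Q_u\cap Q_{u'}=Q_{u\vee u'}$ to expand $\ECP=\sum_{\sigma\in\cplx}(-1)^{\dim\sigma}\1_{\supp(\sigma)}$ as a finite $\Z$-combination of cone indicators. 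This is more elementary, bypasses homology entirely for the ``in particular'' clause, and is self-contained; the homological route is still needed for the Betti-function assertion, where you rejoin the paper. A second, smaller difference: you model the persistent chain modules as direct sums of interval modules supported on the $\supp(\sigma)$, while the paper's appendix uses free modules whose barcode is the multiset of births; for genuinely multi-critical (but finitely generated) filtrations the chain modules are not free, so your formulation is the more careful one, and it is what makes ``homology of the chain complex equals $H_k(\filt)$'' literally true. Note finally that both you and the paper read ``finitely generated'' as meaning that each support is the union of the cones over its finitely many minimal elements; this shared reading is the intended one, so it is not a gap relative to the paper.
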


\begin{proof}
    The fact that $\betti[\filt,k]$ is finitely presented follows from the fact that the family of vector spaces $(H_k(\filt_t))_{t\in\R^m}$ forms a \emph{finitely presented $m$-parameter persistence module} (see \Cref{lem:persistent-homology-finitely-presented}). This last fact is well known but goes beyond the scope of the paper and is not explicitly written elsewhere in the literature. We provide a proof in \nameref{sec:appendix}. The equality between the alternated sum of Betti functions of $\filt$ and its Euler characteristic profile follows from the classical formula for the Euler characteristic of any simplicial complex $\cplx$ stating that:
    \begin{equation*}
        \chi(\cplx) = \sum_{k\geq 0} (-1)^k \dim H_k(\cplx).
    \end{equation*}
    The fact that $\ECP$ is finitely presented is then straightforward.
\end{proof}


It is clear from the definition that the signed $1$-Wasserstein distance between Euler characteristic profiles is bounded from above by the same distance between Betti functions, as stated in the lemma below. It will be crucial to proving the other results.
\begin{lemma}\label[lemma]{lem:dist-ECP-dist-betti}
    Let $\filt$ and $\filt'$ be two finitely generated $m$-parameter filtrations of simplicial complexes~$\cplx$ and $\cplx'$ respectively. Then,
    \begin{equation*}
        \dist[\ECP, \ECP[\filt']] \leq \sum_{k\geq 0} \dist[\betti[\filt,k], \betti[\filt',k]].
    \end{equation*}
\end{lemma}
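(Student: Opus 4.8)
The plan is to deduce the inequality from \Cref{lem:ECP-is-FP} together with the freedom, recorded in~\eqref{eq:signed-W1-with-decompostions}, to compute $\dist$ on \emph{any} decomposition rather than only on signed barcodes. First I would invoke \Cref{lem:ECP-is-FP} to write $\ECP = \sum_{k\geq 0}(-1)^k\betti[\filt,k]$ and $\ECP[\filt'] = \sum_{k\geq 0}(-1)^k\betti[\filt',k]$, both finite sums since the complexes are finite. I then fix the signed barcodes $(\barcode_k^+,\barcode_k^-)$ of $\betti[\filt,k]$ and $(\barcode_k^{\prime+},\barcode_k^{\prime-})$ of $\betti[\filt',k]$. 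If some term $\dist[\betti[\filt,k],\betti[\filt',k]]$ is infinite, the right-hand side is infinite and there is nothing to prove; so I assume all of them finite, which guarantees that an optimal bijection
\[
h_k : \barcode_k^+\cup\barcode_k^{\prime-} \longrightarrow \barcode_k^-\cup\barcode_k^{\prime+}
\]
exists with $\cost(h_k) = \dist[\betti[\filt,k],\betti[\filt',k]]$ (the infimum defining $\dist$ is attained, as there are finitely many bijections between finite multisets).

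The key observation is how the alternating signs reshuffle the positive and negative parts. Expanding $(-1)^k\betti[\filt,k] = (-1)^k\big(\sum_{u\in\barcode_k^+}\1_{Q_u} - \sum_{v\in\barcode_k^-}\1_{Q_v}\big)$ shows that for even $k$ the pair $(\barcode_k^+,\barcode_k^-)$ contributes unchanged, whereas for odd $k$ the two parts swap. Collecting the contributions yields a decomposition $(\mathcal{C}^+,\mathcal{C}^-)$ of $\ECP$ with
\[
\mathcal{C}^+ = \bigcup_{k\text{ even}}\barcode_k^+ \;\cup\; \bigcup_{k\text{ odd}}\barcode_k^-, \qquad \mathcal{C}^- = \bigcup_{k\text{ even}}\barcode_k^- \;\cup\; \bigcup_{k\text{ odd}}\barcode_k^+,
\]
and analogously a decomposition $(\mathcal{C}^{\prime+},\mathcal{C}^{\prime-})$ of $\ECP[\filt']$ built from the primed barcodes. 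By~\eqref{eq:signed-W1-with-decompostions} I may compute $\dist[\ECP,\ECP[\filt']]$ using precisely these two decompositions.

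I then build a single bijection $h:\mathcal{C}^+\cup\mathcal{C}^{\prime-}\to\mathcal{C}^-\cup\mathcal{C}^{\prime+}$ block by block in $k$. Regrouping the source and target by homological degree, one checks that for even $k$ the corresponding block is $\barcode_k^+\cup\barcode_k^{\prime-}\to\barcode_k^-\cup\barcode_k^{\prime+}$, matched by $h_k$, while for odd $k$ the block is $\barcode_k^-\cup\barcode_k^{\prime+}\to\barcode_k^+\cup\barcode_k^{\prime-}$, matched by $h_k^{-1}$. Since $\cost(h_k^{-1}) = \cost(h_k)$ (the cost $\sum_u\|u-h(u)\|_1$ is invariant under inversion), the total cost is $\cost(h) = \sum_{k\geq 0}\cost(h_k) = \sum_{k\geq 0}\dist[\betti[\filt,k],\betti[\filt',k]]$, and the definition of $\dist$ gives $\dist[\ECP,\ECP[\filt']]\leq\cost(h)$, which is the claim.

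The main obstacle is organizational rather than conceptual: one must track which points come from which degree $k$ (treating the unions as multiset disjoint unions tagged by degree, so that distinct degrees never interfere) and verify the sign-induced swap of positive and negative parts, which is exactly what turns $h_k$ into $h_k^{-1}$ on the odd blocks. Once this bookkeeping is in place, the cost-preservation of inversion closes the argument with no further estimates.
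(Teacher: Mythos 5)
Your proof is correct and is essentially the argument the paper has in mind: the paper states this lemma without a written proof, asserting it is ``clear from the definition,'' and your degree-by-degree assembly of optimal bijections --- with the sign-induced swap of positive and negative parts handled by using $h_k^{-1}$ on the odd blocks, whose cost equals that of $h_k$ --- is exactly the bookkeeping that claim elides. The two auxiliary points you invoke (attainment of the infimum for bijections between finite multisets, and the freedom from~\eqref{eq:signed-W1-with-decompostions} to compute $\dist$ on non-minimal decompositions) both hold, so there are no gaps.
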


\subsubsection{Proof of \Cref{lem:dist-W1-barcode}}
The degree $k$ persistence diagram of $\filt$ is given by ${\diagram_k} = \left\{\big(a^k_i,b^k_i\big)\right\}_{i=1,...,n_k}$ for real numbers $-\infty < a^k_i < b^k_i \leq \infty$ and an integer $n_k\geq 0$. This diagram induces a decomposition $(\mathcal{A}_k,\mathcal{B}_k)=(\{a_i^k\}_i, \{b_i^k\}_i)$ of $\betti[\filt,k]$. Similarly, the degree $k$ persistence diagram ${\diagram'_k} = \left\{\big(a_j'^k,b_j'^k\big)\right\}_{j=1,...,n'_k}$ of $\filt'$ induces a decompositon $(\mathcal{A}_k',\mathcal{B}_k')=(\{a_j'^k\}_j, \{b_j'^k\}_j)$ of $\betti[\filt',k]$. Moreover, a partial matching $\matching$ between ${\diagram_k}$ and ${\diagram_k'}$ induces a bijection of multisets~$h:\mathcal{A}_k'\cup\mathcal{B}_k \to \mathcal{A}_k\cup\mathcal{B}'_k$ defined by $h(a') = a$ and $h(b)=b'$ when $((a,b),(a',b'))\in\matching$, by $h(b)=a$ when $(a,b)$ is unmatched and by $h(a') = b'$ when $(a',b')$ is unmatched. Moreover, the cost of the matching $\matching$ and the cost of the bijection $h$ satisfy $\cost(h) \leq 2 \,\cost(M)$. Taking the infimum over all partial matching $\matching$, one has~$\dist[\betti[\filt,k], \betti[\filt',k]]\leq W_1(H_k\filt, H_k\filt')$. \Cref{lem:dist-ECP-dist-betti} yields the result.

\subsubsection{Proof of \Cref{prop:stability-ECP}}
Consider decompositions $(\barcode^+, \barcode^-)$ and $(\barcode'^+, \barcode'^-)$ of $\ECP[\filt]$ and $\ECP[\filt']$ respectively. Assume there is a bijection $h:\barcode^+\cup\barcode'^- \to \barcode^-\cup\barcode'^+$. If no such bijection exists, then~$\dist[\ECP[\filt], \ECP[\filt']]$ is infinite, and the inequality trivially holds. One has:
\begin{equation*}
    \ECP[\filt] - \ECP[\filt'] 
    = \sum_{u\in\barcode^+\cup\barcode'^-} \1_{Q_u} - \sum_{v\in\barcode^-\cup\barcode'^+} \1_{Q_v} 
    = \sum_{u\in\barcode^+\cup\barcode'^-} \1_{Q_u} - \1_{Q_{h(u)}}.
\end{equation*}
Therefore, recalling the definition of $\|-\|_{1,M}$ from \Cref{sec:definitions}, we have:
\begin{equation}\label{eq:L1-bdd-by-matching}
    \|\ECP[\filt] - \ECP[\filt']\|_{1,M} \leq \sum_{u\in\barcode^+\cup\barcode'^-} \|\1_{Q_u} - \1_{Q_{h(u)}}\|_1.
\end{equation}
By an elementary induction on~$m\geq 1$, we can prove that for all $u, v \in \R^m$,
\begin{equation*}
\|\1_{Q_u}-\1_{Q_v}\|_{1,M} \leq (2M)^{m-1} \|u-v\|_1.
\end{equation*}
This conludes the proof.

Assume now that $m=1$. The existence of $h$ ensures that $\|\ECP[\filt] - \ECP[\filt']\|_{1}$ is finite and the result follows from~\eqref{eq:L1-bdd-by-matching} and the fact that $\|\1_{[u,v)}\|_1 = |u-v|$.

\subsubsection{Proof of \Cref{cor:stability-Rdn-ht}}
It follows from the definition of hybrid transforms that:
\begin{equation*}
    \|\HT[\filt] - \HT[\filt']\|_{L^q_K} \leq 
    \begin{cases}
        \displaystyle \|\kernel\|_\infty\int_K\int_\R |\xi_*\ECP(s) - \xi_*\ECP[\filt'](s)| \d s \d \xi &\mbox{if } q\in[1,\infty), \\[1em]
        \displaystyle \|\kernel\|_\infty\sup_{\xi\in K} \int_\R |\xi_*\ECP(s) - \xi_*\ECP[\filt'](s)| \d s &\mbox{if } q = \infty.
    \end{cases}
\end{equation*}
Moreover, \Cref{prop:stability-ECP} with $m=1$ ensures that for any~$\xi\in K$,
\begin{equation*}
    \|\xi_*\ECP[\filt]-\xi_*\ECP[\filt']\|_1 \leq \dist[\xi_*\ECP[\filt], \xi_*\ECP[\filt']].
\end{equation*}
To prove the desired inequality, we will prove that $\dist[\xi_*\ECP[\filt], \xi_*\ECP[\filt']] \leq \|\xi\|_\infty\, \dist[\ECP[\filt], \ECP[\filt']]$ for any $\xi\in\dual{\R_+^m}$. The result then follows from computing the $q$-norm on both sides. Consider decompositions $(\barcode^+, \barcode^-)$ and $(\barcode'^+, \barcode'^-)$ of~$\ECP[\filt]$ and~$\ECP[\filt']$ respectively. They induce decompositions $(\xi_*\barcode^+, \xi_*\barcode^-)$ and $(\xi_*\barcode'^+, \xi_*\barcode'^-)$ of~$\xi_*\ECP[\filt] = \ECP[\xi_*\filt]$ and~$\xi_*\ECP[\filt']=\ECP[\xi_*\filt']$ respectively by the formula $\xi_*\barcode^\pm = \{\dualdot{\xi}{u} \st u\in\barcode^\pm\}$ and a similar one for $\filt'$. Consider a bijection of multisets~$h:\barcode^+\cup\barcode'^- \to \barcode^-\cup\barcode'^+$. It induces a bijection of multisets $\xi_*h:\xi_*\barcode^+\cup\xi_*\barcode'^- \to \xi_*\barcode^-\cup\xi_*\barcode'^+$ defined by $\dualdot{\xi}{u} \mapsto \dualdot{\xi}{h(u)}$ with cost:
\begin{equation*}
    \cost(\xi_*h) = \sum_{t\in\xi_*\barcode^+\cup\xi_*\barcode'^-} \|t - \xi_*h(t)\|_1 = \sum_{u\in\barcode^+\cup\barcode'^-} \|\dualdot{\xi}{u} - \dualdot{\xi}{h(u)}\|_1 \leq \|\xi\|_\infty \cdot \cost(h).
\end{equation*}
Taking the infimum over all bijections $h$ yields $\dist[\xi_*\ECP[\filt], \xi_*\ECP[\filt']] \leq \|\xi\|_\infty \,\dist[\ECP[\filt], \ECP[\filt']]$ by~\eqref{eq:signed-W1-with-decompostions}.

\subsubsection{Proof of \Cref{lem:stability-L1-filt-fn}}
The couple $\mathcal{C}_f = \big(\{f(\sigma)\}_{\dim\sigma \text{ even}}, \{f(\sigma)\}_{\dim\sigma \text{ odd}}\big)$ is a decomposition of $\ECP[f]$. There is a similar decomposition $\mathcal{C}_g$ of $\ECP[g]$. Moreover, the mapping $f(\sigma)\mapsto g(\sigma)$ induces a bijection of multisets $h:\mathcal{C}_f\to \mathcal{C}_g$ with cost $\cost(h) = \sum_{\sigma\in\cplx} \|f(\sigma)-g(\sigma)\|_1 = \|g-f\|_1$. The result follows from~\eqref{eq:signed-W1-with-decompostions}.

\subsection{Proofs of statistical results}\label{sec:proofs-stats}
In this section, we prove the asymptotic results for the hybrid transforms stated in \Cref{sec:stat}.
\subsubsection{Proof of Theorem \ref{thm:LLN_HT1D}}
Let $X_1, \ldots, X_n$ be an i.i.d. sample drawn according to an a.e. continuous bounded Lipschitz density $g$ on $\mathbb{R}^d$. Consider a sequence~$(r_n)_{n\in\N}$ such that $n r_n^d \to 0$ and~$n^{k+2}r_n^{d(k+1)} \to \infty$ as $n\to\infty$. 

Let us define $\Delta:=\{(x, y): 0 \leq x \leq y<\infty\} \cup\{(x, \infty): 0 \leq x<\infty\}$ and for every~$(s, t, u, v)$ such that $0 \leq s \leq t \leq u \leq v \leq \infty$, denote by $R_{s, t, u, v}$ the rectangle~$(s, t] \times (u, v]$ of $\Delta$. Recall that a finite persistence diagram $\diagram = \cup_{i=1}^l(a_i, b_i)$ can be turned into a discrete measure~$\mu = \sum_{i=1}^l \delta_{a_i, b_i}$ on $\Delta$. Denote by $\mu_{k,n}$ the $k$-th persistence diagram of the \Cech{} filtration of~$1/r_n (X_i)_{i=1}^n$, seen as a discrete measure on $\Delta$. 

Theorem 3.2 of \citet{owada2022convergence} ensures that for every $k \in \llbracket 0, d-1 \rrbracket$ there exists a unique Radon measure $\mu_k$ on $\Delta$ such that we have the following vague convergence:
\begin{equation}\label{eq:vague-convergence}
\frac{1}{n^{k+2} r_n^{d(k+1)}}\, \mu_{k, n} \ \stackrel{v}{\longtoinf}\  \frac{1}{(k+2)!} \left(\int_{\R^d} g^{k+2}(x)\d x\right) \mu_k \quad \text{ a.s.},
\end{equation}
where for every $0 \leq s \leq t \leq u \leq v \leq \infty$, there is an indicator geometric function~$H_{s, t, u, v}$ on $\R^{d(k+2)}$ defined in \cite[Sec.~3.1]{owada2022convergence}, which does not depend on $g$ and such that the measure $\mu_k$ is defined by:
\begin{equation*}
    \mu_k (R_{s, t, u, v}) =  \int_{\mathbb{R}^{d(k+1)}} H_{s, t, u, v} (0, y_1, \ldots, y_{k+1}) \d y_1\ldots\d y_{k+1}.
\end{equation*}
Recall that the primitive kernel $\Kernel$ is such that $\Kernel(x) \to 0$ when $x\to+\infty$. Therefore, the fact that $\kernel$ is supported on $[0, T]$ implies that the primitive $\Kernel$ is also supported on~$[0, T]$. For $\xi >a$, denote by $h_{\xi} : (x, y)\in\Delta \mapsto \Kernel (\xi y) - \Kernel(\xi x)$. According to~\eqref{eq:PD_HT}, one has:
\begin{equation*}
\HTn (\xi) = \sum_{k=0}^{d-1} (-1)^k\langle \mu_{k,n}, h_{\xi} \rangle.
\end{equation*}
Since $h_{\xi}$ is continuous and supported on $[0, T/a]^2$, we have by the vague convergence in~\eqref{eq:vague-convergence} that:
\begin{equation*}
    \frac{1}{n^{k+2}r_n^{d(k+1)}}\, \HTn(\xi) \ \longtoinf\  \sum_{k=0}^{d-1} \frac{(-1)^k}{(k+2)!} \left(\int_{\mathbb{R}^d} g^{k+2} (x) \mathrm{d}x\right) A_k (\xi) \quad \text{ a.s.},    
\end{equation*}
where $A_k (\xi) = \int_{\Delta} h_{\xi} \mathrm{d} \mu_k$.

\subsubsection{Proof of Theorem \ref{thm:TCL_HT}}

Let $T >0$ such that $\kappa$ is supported in $[0, T]$. Let $a, M>0$ and let $\xi \in [a, M]$. According to~\eqref{eq:HT-as-classical}, we have that:
\begin{equation*}
    \HT (\xi) = \xi \int_0^{T/\xi} \kernel( \xi \cdot t) \ECP(t) \mathrm{d}t,  
\end{equation*}
and similarly for $\ECPn$. Since $\kernel$ is in $L^1$, the mappings $\HT$ and $\HTn$ are continuous on $[a, M]$. According to Theorem \ref{TCL_ECC}, there is a Gaussian process $\mathfrak{G} : [0, T/a] \to \mathbb{R}_{+}$ such that for all $t \in [0, T/a]$, we have that:
\begin{equation}
\label{eq:TCL_chi}
    \sqrt{n}\, \big(\ECPn(t) - \mathbb{E}[\ECPn(t)]\big) \ \longtoinf\  \mathfrak{G}(t),
\end{equation}
in distribution in the Skorohod $J_1$-topology. Therefore, by linearity of the mapping~$\chi \mapsto \xi \int_0^{T/\xi} \kernel(\xi \cdot t) \chi(t) \mathrm{d}t$, we have that:
\begin{align*}
    \sqrt{n} \left( \HTn - \mathbb{E}\left[ \HTn \right] \right) \ =\ \,&\  \xi \int_0^{T/ \xi} \kernel(\dualdot{\xi}{t}) \left[ \sqrt{n} \big(\ECPn(t)-\mathbb{E}[\ECPn(t)]\big) \right] \dt
\end{align*}
Denote by $\phi$ the mapping from the space of càdlàg functions $D([0, T])$ with Skorohod $J_1$-topology to $(\mathcal{C}^0([a, M]), \|\cdot\|_\infty)$ defined by:
\begin{equation*}
    \phi: \chi \mapsto \left( \xi \mapsto \xi \int_0^{T/\xi} \kernel(\xi \cdot t) \chi(t) \mathrm{d}t  \right).    
\end{equation*}
We, therefore, have that:
\begin{equation*}
    \sqrt{n} \left( \HTn - \mathbb{E}\left[ \HTn \right] \right) \ =\ \phi \left( \sqrt{n} \left(\ECPn-\mathbb{E}[\ECPn] \right) \right).
\end{equation*}
It is easy to check that:
\begin{equation*}
    \| \phi (\chi_1)- \phi (\chi_2) \|_\infty \leq \frac{M}{a}\, \|\chi_1 -\chi_2\|_{\infty} \int_0^T |\kernel(u)| \mathrm{d}u, 
\end{equation*}
so that the mapping $\phi$ is Lipschitz and, therefore, continuous. Thus, the continuous mapping theorem along with~\eqref{eq:TCL_chi} yields that almost surely, one has the following convergence in $(\mathcal{C}^0([a, M]), \|\cdot\|_\infty)$,
\begin{equation*}
    \sqrt{n} \left( \HTn - \mathbb{E}\left[ \HTn \right] \right) \  \longtoinf \  \tilde{\mathfrak{G}}(\xi) := \xi \int_0^{T/ \xi} \kernel(\dualdot{\xi}{t}) \mathfrak{G}(t) \dt.
\end{equation*}
The covariance of the limiting process $\tilde{\mathfrak{G}}$ follows immediately from that of $\mathfrak{G}$.

\subsubsection{Proof of Theorem \ref{thm:LLN_multiD}}
Let $\xi = (\xi_1, \ldots, \xi_m) \in \dual{\R_+^m}$. Denote by $\mu_{k, L}^{\xi_*\filt}$ the measure associated with the $k$-th persistence diagram of $\Phi_L$ for the filtration function $\dualdot{\xi}{f} = \sum_{i=1}^m \xi_i f_i$. By hypothesis, there exists $i \in \llbracket 1, m \rrbracket$ such that for all $(x, y) \in (\mathbb{R}^d)^2$,
$ \|x-y\| \leq \rho \left( f_i ( \{x, y \})  \right)$. Let $\rho^\prime: x \mapsto \rho(x/\xi_i)$. Therefore, as the filtration functions are non-negative and $\rho$ and $\rho^\prime$ are increasing, we have that:
\begin{equation}
\label{eq:hyp_hiraoka}
\rho^\prime \left(\sum_{j=1}^m \xi_j f_j(\{x, y\})\right) \geq \rho^\prime( \xi_i f_i(\{x,y\}) \geq \rho (f_i(\{x, y\})) \geq \|x-y\|.
\end{equation}
The filtration function $\dualdot{\xi}{f}$ therefore verifies all the hypotheses of Theorem 1.5 of \citet{hiraoka2018limit}, which states that there exists a Radon measure $\nu_k$ such that almost surely, we have the vague convergence $\frac{1}{L^d}\mu_{k, L}^{\xi_*\filt}\overset{v}{\to}\nu_k^{\dualdot{\xi}{f}}$ as $L\to\infty$. Note that in loc. cit., the authors make the additional hypothesis that the filtration function is translation invariant. However, this assumption is only needed to derive a central limit theorem on persistent Betti numbers but not required for the above law of large numbers, for which we only need~\eqref{eq:hyp_hiraoka} to hold. As in the proof of Theorem \ref{thm:LLN_HT1D}, we introduce a continuous function $h_{\lambda}: (x, y)\in\Delta \mapsto \Kernel (\lambda y)-\Kernel(\lambda x)$. This function is supported on $[0, T/a]^2$. According to~\eqref{eq:PD_HT} together with \Cref{lem:HTn_to_HT1}, we have that:
\begin{equation*}
\HTL (\lambda \xi) = \sum_{k=0}^{d-1}(-1)^k \langle \mu_{k, L}^{\xi_*\filt}, h_{\lambda} \rangle.
\end{equation*}
Hence the result, by the vague convergence $\frac{1}{L^d}\mu_{k, L}^{\xi_*\filt} \overset{v}{\to}\nu_k^{\dualdot{\xi}{f}}$ for every $k \in \llbracket 0, d-1 \rrbracket$.

\acks{The authors are grateful to Steve Oudot, François Petit, Clément Levrard, Wolfgang Polonik and Mathieu Carrière for useful discussions. The authors are also grateful to the Inria Datashape team for comments on an early version of this work at the annual seminar and to the anonymous reviewer for their insightful comments that helped greatly improve the exposition. The second named author was funded in part by EPSRC EP/R018472/1. For the purpose of Open Access, the author has applied a CC BY public copyright licence to any Author Accepted Manuscript (AAM)
version arising from this submission.}


\appendix

\section*{Appendix A}
\label{sec:appendix}
\renewcommand{\thesection}{A}

\setcounter{section}{0}
\setcounter{theorem}{0}
\setcounter{equation}{0}
\setcounter{figure}{0}
\setcounter{table}{0}
\renewcommand{\thetheorem}{\Alph{theorem}}

In this appendix, we prove that a finitely generated filtration has finitely presentable persistent homology. As explained in \Cref{sec:proofs-stability}, this fact is well-known. Its proof is included for completeness. We follow the same notations and conventions as in \citet[Section~2]{oudot2021stability}.
\begin{lemma}
    \label[lemma]{lem:persistent-homology-finitely-presented}
    Let $\filt$ be a finitely generated $m$-parameter filtration of a simplicial complex $\cplx$ and let~$k\geq 0$. The $m$-parameter persistence module $H_k(\filt)$ is finitely presentable. In particular, its Hilbert function is finitely presented.
\end{lemma}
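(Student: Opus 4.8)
The plan is to discretize the continuum index $\R^m$ to a finite grid, on which finite presentability is classical, and then to transport the presentation back to $\R^m$ along an exact reindexing functor; this sidesteps the delicate coherence issues one faces when arguing finite presentability directly over the poset $\R^m$. First I would build the grid from finite generation. Since $\cplx$ is finite and $\filt$ is finitely generated, the set of all minimal elements of the supports $\supp(\sigma)$, $\sigma\in\cplx$, is finite. For each coordinate $j\in\{1,\dots,m\}$ let $G_j\subseteq\R$ be the finite set of $j$-th coordinates of these minimal elements, and let $\pi\colon\R^m\to\bar G$ be the coordinatewise floor onto the finite grid $\bar G=\prod_j(G_j\cup\{\bot_j\})$, defined by $\pi(t)_j=\max\{a\in G_j\st a\le t_j\}$ and $\pi(t)_j=\bot_j$ when $t_j<\min G_j$. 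The elementary observation is that $\filt_t=\filt_{\pi(t)}$ for every $t$: the inclusion $\filt_{\pi(t)}\subseteq\filt_t$ holds by monotonicity since $\pi(t)\le t$, while any $\sigma\in\filt_t$ admits a minimal $u\in\supp(\sigma)$ with $t\ge u$ and all $u_j\in G_j$, so $\pi(t)\ge u$ and hence $\sigma\in\filt_{\pi(t)}$. Note also that $\filt_t=\emptyset$ as soon as some $t_j<\min G_j$, so the relevant module is supported on the finite subgrid $G=\prod_jG_j$.

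Consequently $t\mapsto\filt_t$ factors through $\pi$, and by functoriality of homology $H_k(\filt)=\pi^*\hat H_k$, where $\hat H_k\colon g\mapsto H_k(\filt_g)$ is indexed by the finite grid and $\pi^*$ denotes precomposition with $\pi$. As $\bar G$ is a finite product of finite chains, $\hat H_k$ is finite-dimensional, hence a finitely generated $\Z^m$-graded module over the Noetherian ring $\kk[x_1,\dots,x_m]$, and is therefore finitely presentable. I would then use that $\pi^*$ is exact---kernels and cokernels of persistence modules are computed pointwise and $(\pi^*N)_t=N_{\pi(t)}$---and that it carries the grid free module $\1_{\uparrow g}$ to the $\R^m$-free module $\1_{Q_g}$ for $g\in G$, since $\pi(t)\ge g\iff t\ge g$. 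Applying $\pi^*$ to a finite presentation of $\hat H_k$ therefore produces a finite presentation of $H_k(\filt)$ by modules of the form $\1_{Q_u}$ with $u\in\R^m$, which proves the first assertion.

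Finally, the Hilbert function $t\mapsto\dim H_k(\filt_t)=\dim\hat H_k(\pi(t))$ is the $\pi$-pullback of the integer-valued function $\hat f\colon g\mapsto\dim\hat H_k(g)$ on the finite poset $G$. Since the principal up-set indicators $\{\1_{\uparrow g}\}_{g\in G}$ form a $\Z$-basis of all integer-valued functions on $G$ (the incidence zeta matrix is unitriangular, i.e.\ Möbius inversion applies), we may write $\hat f=\sum_{g\in G}c_g\,\1_{\uparrow g}$ with finitely many $c_g\in\Z$; pulling back yields $\dim H_k(\filt_t)=\sum_{g\in G}c_g\,\1_{Q_g}(t)$, a finite $\Z$-linear combination of upper-set indicators, hence a finitely presented function. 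The main obstacle is entirely organizational, namely setting up the grid reduction correctly---verifying $\filt_t=\filt_{\pi(t)}$ and that $\pi^*$ is exact and preserves free modules---since this is precisely what allows one to replace finite presentability over the continuum by Noetherianity on a finite grid.
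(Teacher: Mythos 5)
Your proof is correct, and its skeleton---reduce to a finite grid, then transport back along an exact reindexing functor---is the same as the paper's: your $\pi^*$ (composed with restriction to the grid) is exactly the paper's extension--restriction composite $e\circ r$, and your pointwise identity $\filt_t=\filt_{\pi(t)}$ yields the paper's isomorphism $H_k(\filt)\simeq e\circ r(H_k(\filt))$, which you obtain more directly at the level of filtrations where the paper argues through the chain modules $\chains_i(\filt)$ and exactness of $e\circ r$. The two substantive steps, however, are carried out differently. For finite presentability over the grid, the paper cites \citet[Lemma~5]{bauer2022generic}, whereas you inline a self-contained argument: finite total dimension gives finite generation, Noetherianity of the graded polynomial ring gives a finite presentation, and exactness of $\pi^*$ together with $\pi^*\1_{\uparrow g}=\1_{Q_g}$ for $g\in G$ pushes that presentation forward; the only care needed (which your remark that the module is supported on $G$ supplies) is to choose the presentation with generators and relations at points of $G$ rather than at grid points with $\bot$ coordinates, e.g.\ a minimal one. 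For the Hilbert function the routes genuinely diverge: the paper takes a finite free resolution of $H_k(\filt)$ and writes $\Hil(H_k(\filt))$ as the alternating sum of the Hilbert functions of its free terms, while you avoid resolutions altogether by M\"obius inversion on the finite poset ($\{\1_{\uparrow g}\}_{g\in G}$ is a $\Z$-basis of integer-valued functions since the zeta matrix is unitriangular) and then pull back along $\pi$. Your argument for that second assertion is more elementary and does not even use the first assertion, whereas the paper's is shorter given its citations and derives the Hilbert-function statement as a formal consequence of finite presentability.
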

\begin{proof}
    Since $\filt$ is finitely generated, the support of any $\sigma\in\cplx$ has a finite number of minimal elements. The set of these elements is called the \emph{births} of $\sigma$ and denoted by $\birth(\sigma)$. Since $\cplx$ is finite and $\filt$ is finitely generated, there is a finite subset~$G = I_1 \times \ldots \times I_m \subseteq \R^m$ such that $\birth(\sigma)\subseteq G$ for any $\sigma\in\cplx$. 
    
    Given a persistence module~$M$ over~$\R^m$, we denote by $r(M)$ its restriction to $G$. Given a persistence module~$N$ over~$G$, the \emph{extension of $N$} is the persistence module $e(N)$ over $\R^m$ defined by:
    \begin{equation*}
        e(N)(t) = N\left(\max\{g\in G \st g \leq t\}\right).
    \end{equation*}
    This defines functors $r$ and $e$ between the category of persistence modules over $\R^m$ to the category of persistence modules over $G$ and conversely. It is an easy exercise to check that these functors are exact. 
    
    We prove that $H_k(\filt) \simeq e\circ r (H_k(\filt))$. It is well known---see \citet[Lemma~5]{bauer2022generic} for a proof---that this implies that $H_k(\filt)$ is finitely presentable. Recall that \emph{barcodes} of free persistence modules are defined in \citet[Section~2]{oudot2021stability} and denote by $\chains_{i}(\filt)$ the free persistence module with barcode $\bigcup_{\sigma}\birth(\sigma)$ where the union is taken over all simplices $\sigma\in\cplx$ of dimension $i$. Consider the diagram:
    \begin{equation*}
        \begin{tikzcd}
        \chains_{k+1}(\filt) \arrow[r, "\partial_k"] & \chains_{k}(\filt) \arrow[r, "\partial_{k-1}"] & \chains_{k-1}(\filt),
        \end{tikzcd}
    \end{equation*}
    where the maps $\partial_k$ and $\partial_{k-1}$ are induced by the boundary operator from simplicial homology. The persistence module $H_k(\filt)$ is then the homology of the above diagram, i.e.,
    \begin{equation*}
        H_k(\filt) = \Ima(\partial_k)/\Ker(\partial_{k-1}).
    \end{equation*}
    For any $i\in\N$, the definition of $G$ and of $\chains_i(\filt)$ implies that $\chains_i(\filt) \simeq e\circ r(\chains_i(\filt))$. The result then follows from the fact that $e\circ r$ is an exact functor and hence commutes with computing homology. 

    \medskip

    We are left to prove that the Hilbert function of $H_k(\filt)$ is finitely presented. Since the persistence module $H_k(\filt)$ is finitely presentable, it admits a finite free resolution:
    \begin{equation}\label{eq:free-resolution}
        0 \to F_m \to \dots \to F_0 \to  H_k(\filt) \to 0.
    \end{equation}
    See for instance \citet[Section~7.2]{botnan2022introduction} for more details. Each free module with barcode $\barcode(F_i)$ has a finitely presented Hilbert function:
    \begin{equation*}
        \Hil(F_i) \, = \sum_{t\in\barcode(F_i)} \1_{Q_t}.
    \end{equation*}
    Now, exactness of the sequence~\eqref{eq:free-resolution} ensures that:
    \begin{equation*}
        \Hil(H_k(\filt)) = \sum_{i=0}^m (-1)^i \Hil(F_i),
    \end{equation*}
    hence the result.
\end{proof}


\bibliography{biblio}

\end{document}